\documentclass[twoside,11pt]{article}

\usepackage{jmlr2e}

\usepackage{amsmath}
\usepackage[mathscr]{eucal}
\usepackage{graphics}
\usepackage{multicol}
\usepackage{algorithm2e}
\usepackage{xcolor}
\usepackage{enumerate}
\usepackage{subcaption}
\usepackage[pagewise,displaymath, mathlines]{lineno}
%


    \usepackage[mathscr]{eucal}
    \usepackage{verbatim}
    \usepackage{cases}
    \usepackage{graphicx}
    \usepackage{url}
    \usepackage{tensor}
    \usepackage{float}
    \usepackage{url}
    \usepackage{setspace} 
    \usepackage{stackengine}
    \usepackage[font=small,format=plain,labelfont=bf,textfont=it]{caption}

    \newcommand\barbelow[1]{\stackunder[1.2pt]{$#1$}{\rule{.8ex}{.075ex}}}



\renewenvironment{proof}{\paragraph{Proof:}}{$\;$\\ \hbox{$\;$}\hfill $\blacksquare$\\}

    \newenvironment{customfact}[1]
    {\innercustomthm}
    {\endinnercustomthm}

    \numberwithin{equation}{section}
    \numberwithin{figure}{section}

    
    \newcommand{\argmax}{\mathop{\rm argmax}}

    \newcommand{\R}{\mathbb{R}}

    \newcommand{\wh}{\widehat}

    \newcommand{\lamx}[2][f]{{\lambda_{#2}^{#1}(x)}}
     
    \newcommand{\lam}[2][f]{{\lambda_{#2}^{#1}}}
    \newcommand{\gamxt}[1][f]{{\gamma^{#1}(x,t)}}
    \newcommand{\gamx}[3][]{{\gamma_{#1}^{#2}(x,#3)}}
    \newcommand{\Vbot}[2][x]{V_{\bot}^{#2}(#1)}
    \newcommand{\Vx}[2][f]{V_{#2}^{#1}(x)}
    \newcommand{\Vy}[2][f]{V_{#2}^{#1}(y)}
    \newcommand{\V}[2][f]{V_{#2}^{#1}}
    \newcommand{\Proj}[2][x]{\Pi^{#2}(#1)}
    \newcommand{\xix}[1]{\xi^{#1}(x)}
    \newcommand{\xiy}[1]{\xi^{#1}(y)}
    \newcommand{\xii}[2]{\xi^{#1}(#2)}
    \newcommand{\xiii}[3][]{{\xi_{#1}^{#2}(x,#3)}}
    \newcommand{\xinox}[1]{\xi^{#1}}
    
    \newcommand{\Seps}[1]{S_\epsilon^{#1}}
    \newcommand{\Sep}[2]{S_{#2}^{#1}}
    
    \newcommand{\wt}{\widetilde}


    \newcommand{\newchapter}[3] 
	{                           
        \chapter[#2]{#3}
        \chaptermark{#1}
        \thispagestyle{myheadings}
	}
\allowdisplaybreaks
\hyphenpenalty=100000

\ShortHeadings{Ridge Estimation Algorithms}{Qiao and Polonik}
\firstpageno{1}

\begin{document}

\title{Algorithms for ridge estimation with convergence guarantees \\[20pt]}
%
\author{\name Wanli Qiao \email wqiao@gmu.edu\\
    \addr Department of Statistics\\
    George Mason University\\
    4400 University Drive, MS 4A7\\
    Fairfax, VA 22030, USA
    \AND
    \name Wolfgang Polonik \email wpolonik@ucdavis.edu\\
    \addr Department of Statistics\\
    University of California\\
    One Shields Ave.\\
    Davis, CA 95616, USA}

\maketitle
\vspace*{-1cm}

\centerline{\today}
\vspace*{0.5cm}
\begin{abstract} \noindent The extraction of filamentary structure from a point cloud is discussed. The filaments are modeled as ridge lines or higher dimensional ridges of an underlying density. We propose two novel algorithms, and provide theoretical guarantees for their convergences, by which we mean that the algorithms can asymptotically recover the full ridge set. We consider the new algorithms as alternatives to the Subspace Constrained Mean Shift (SCMS) algorithm for which no such theoretical guarantees are known.
\end{abstract}
%
%
\begin{keywords}
Filaments, Ridges, Manifold Learning, Mean Shift, Gradient Ascent
\end{keywords}
\reversemarginpar
\section{Introduction} \label{intro}
The geometric interpretation of a ridge in $\R^d$ is that of a low-dimensional structure along which the density is higher than in the surrounding area when moving away from the set in an orthogonal direction. Blood vessels, road system, DNA strands or fault lines appearing in 2D or 3D images can be modeled as filaments, or maybe better as unions of filaments that might intersect, or that have a common starting point. We sometimes call such unions {\em filamentary structures}. Another example is provided by the filamentary structure that can be observed in the distribution of galaxies in the universe, the `cosmic web'. Cosmologists are interested in finding rigorous geometric and topological descriptions of the filamentary structures~\citep{novikov2006skeleton}. Usually, the first step is the extraction of this structure, and this is the topic discussed below.

Ridges characterize the low-dimensional structures as collection of local maxima of probability density functions in local orthogonal subspace. See \cite{genovese2014nonparametric}, \cite{chen2015asymptotic}, \cite{qiao2016theoretical}, and \cite{qiao2021asymptotic} for statistical analyses of ridge estimation. %

The estimation of ridges is related to the problem of manifold learning, for which it is assumed that data are observed near a manifold with noise and the task is to recover the manifold. It has been shown in \cite{genovese2014nonparametric} that ridges can be used as surrogates for manifold estimation. Some useful references for manifold learning include \cite{niyogi2008finding}, \cite{genovese2012minimax}, \cite{fefferman2020reconstruction}, \cite{yao2023manifold} and the references therein. %

Ridges can also be used for the purpose of (non-linear) dimension reduction. This falls under the more general umbrella of statistical embedding where the goal is to find a low-dimensional representation of the data that is not necessarily in the ambient space but preserves the geometric and/or topological structures in the original data. See \cite{tjostheim2023statistical} for a recent survey on this topic.

The literature on the estimation of low-dimensional structures (or filaments) is rich, and different approaches use different geometric ideas. For example, the local principal curves~\citep{einbeck2005local,einbeck2008representing} are formed by tracking the localized version of the first principal component directions, but the method requires selection of good starting points lying on or near the filaments already.  The candy model~\citep{stoica2007three} uses possibly connected cylinders (in 3D) of a fixed radius and
height to represent the filaments. The medial axis of the data
distribution's support~\citep{genovese2012geometry} can also be used to estimate filaments, under the assumption that the noise around the filaments is symmetric. The multiscale method developed in~\cite{arias2006adaptive} can be used to detect the presence of a single filament in a noise background. However, the method does not provide a low-dimensional filament estimate. \cite{genovese2009path} proposed the concept of the path density to characterize filaments, which does not lend itself to a low-dimensional estimator, either.

One of the most well-known concepts for the extraction of low-dimensional features is principal curves~\citep{hastie1989principal}, which generalized PCA in the nonlinear setting. The principal curve is a smooth curve that passes through the middle of a data set. Any point on a principal curve is defined as the conditional expectation of all the data that project to that point, which is a property called self-consistency. Following this concept, a lot of research work has been generated to investigate the properties and algorithms for principal curves. See, for example, \cite{banfield1992ice}, \cite{tibshirani1992principal}, \cite{stanford2000finding} and \cite{verbeek2002k}. The fact that principal curves do not always exist, motivated a related line of work on modified principal curves, which started with \cite{kegl2000learning}. See also \cite{biau2011parameter} and \cite{delattre2020principal} and references therein. 

If the idea of local averaging in the original definition of principal curves is replaced by that of taking a local maximum in the orthogonal subspace, then we obtain the concept of ridges, which first appears in the literature of image analysis. See \cite{eberly1996ridges}. Ridges have a mathematical definition using derivatives up to second order and they come with intuitive geometric interpretation (see Section~\ref{defsec}). In practice, ridges can be used to estimate filaments with flexible shapes and structures without strong requirements on the starting points of algorithms for ridge extraction. As shown in \cite{ozertem2011locally}, the ridge estimators can perform well even when there are loops, bifurcations, and self intersections in data, while these are difficult to handle for the principal curve method.

In this work, we are concerned with the actual extraction of ridges, i.e. we will construct and analyze algorithms. One such algorithm is the popular SCMS (Subspace Constrained Mean Shift) algorithm developed by \cite{ozertem2011locally}, which extracts $k$-dimensional ridges of a $d$-dimensional density $f$ from a point cloud sampled from $f$. The algorithm consists in running a corrected (i.e., subspace constrained) mean shift algorithm starting at a data point. For each data point the algorithm then provides an estimate of a point on the ridge. 

However, there does not seem to exist theoretical guarantees for the SCMS algorithm to consistently estimate the full ridge set, and, as discussed below (see Section~\ref{SCMSTheory} and the Appendix), the SCMS algorithm might miss some parts of the ridge, although the point-wise convergence property of SCMS is studied in \cite{zhang2023linear}. In other words, it is not entirely clear what the SCMS is actually estimating. Even though it appears that this does not have a serious impact in many  practical examples, this theoretical gap provides a motivation for developing alternative ridge finding algorithms that (i) come with theoretical guarantees offering deeper insights to their behavior, and (ii) do not suffer from potentially missing some parts of the ridge. We mention in passing that there exists another ridge estimation algorithm developed in \cite{pulkkinen2015ridge}, which tracks the ridge lines. However, it relies on a starting point that has been on or close to the ridge.

The remaining part of the paper is organized as follows. In Section \ref{Extraction} we introduce the formal definition of ridges. This is followed by our extraction algorithms, whose performance is illustrated using some numerical studies in $\mathbb{R}^2$. The main theoretical results are given in Section~\ref{mainresults}, where we give the convergence results of our algorithms. The mathematical framework for the theoretical analyses is provided in Section~\ref{maththeory}. In Appendix~\ref{appendix:a} we give an example for which the SCMS algorithm fails to detect a part of the ridge while our algorithms do not miss it. All the proofs are provided in Appendix~\ref{appendix:b}. 

\section{Extraction of filamentary structures}\label{Extraction}

\subsection{Definition}
\label{defsec}
Let $f$ denote a density on $\R^d$ from which data will be drawn. We will assume that $f$ is (at least) twice differentiable. The definition of ridge (or filament) points is as follows:

\begin{definition}\label{FilaDef1} (Ridge in $\mathbb{R}^d$). Let $(\lambda_i^f(x), \Vx{i}), i = 1,\ldots,d$ be eigenpairs of the Hessian $\nabla^2f(x)$ with $\lambda^f_1(x) \geq \cdots \geq \lambda^f_d(x)$. Let $1\leq k \leq d-1$. With $\Vbot{f} = [\Vx{k+1},\cdots,\Vx{d}] \in {\mathbb R}^{d \times (d-k)}$ the matrix of the trailing $(d-k)$  eigenvectors, we define  
\begin{align}\label{DEF}
\text{\rm Ridge}_k(f) = \big\{x\in\mathbb{R}^d:\;  \Vbot{f}^\top \nabla f(x) ={\bf 0}\; \text{and } \; \lamx{k+1} <0\big\}.
\end{align}
\end{definition}
The geometric intuition underlying this definition is the following: Since the (first order) directional derivative of $f(x)$ along $V^f_i(x)$ is $\langle \nabla f(x), \Vx{i}\rangle$ and the second order directional derivative of $f(x)$ along $V^f_i(x)$ is $\lamx{i}$, the two conditions in (\ref{DEF}) mean that a point $x$ on the ridge is a local mode in the linear subspace spanned by $\Vx{i}, \; i=k+1,\cdots,d$, for which the density has the largest concavity (see Eberly, 1996).

Given an iid sample $X_1,\cdots,X_n$  from $f$,  we estimate Ridge$_k(f)$ by 
$$\text{Ridge}_k(\wh f\,) = \big\{x\in\mathbb{R}^d:\;  \Vbot{\wh f}^\top \nabla \wh f(x) ={\bf 0}\; \text{ and } \;  \lambda^{\wh f}_{k+1}(x)<0\big\},$$
where $\wh f$ is a KDE, $\big(\lamx[\wh f]{i},\Vx[\wh f]{i}\big),i=1,\ldots,d$ are the eigenpairs of the Hessian $\nabla^2 \wh f(x)$, where we assume the eigenvalues to be sorted as $\lamx[\wh f]{1} \ge \cdots \ge \lamx[\wh f]{d}$. Furthermore, let 
$\Vbot{\wh f} = \big[\Vx[\wh f]{k+1},\cdots,\Vx[\wh f]{d}\big]$ be the matrix of the $(d-k)$ trailing orthonormal eigenvectors of $\nabla^2\wh f(x)$. We recall that the KDE has the form 
$$\wh f(x) = \frac{1}{nh^d} \sum_{i=1}^n K\left( \frac{X_i - x}{h}\right),$$
where $h>0$ is a bandwidth, $K \ge 0$ and $\int_{\mathbb{R}^d} K(u)du = 1$.
The goal now is to construct and study algorithms to extract the estimated ridges from data. It is well known that KDE suffers from the curse of dimensionality and so we do not consider high dimensions in this paper.

Note that the matrices $\Vbot{f}$ and $\Vbot{\wh f}$ (and similar matrices defined below) depend on $k$ (and $d$). Since $k$ (and $d$) are held fixed in the theoretical and methodological developments of this paper, this dependence is not indicated in the notation.

\subsection{Mean shift algorithm and subspace constrained mean shift algorithm}
The popular mean shift algorithm \citep{fukunaga1975estimation} is being used for mode finding and clustering, e.g. see \cite{cheng1995mean} and \cite{comaniciu1999mean, comaniciu2002mean}. It is tracking non-parametric estimates of gradients of a KDE. Using the KDE with differentiable $K(x)=\phi(\|x\|^2)$, the vector 
\begin{align}\label{MeanShift1}
m(x):=\sum_{i=1}^n w_i(x)X_i - x\qquad\text{with}\qquad w_i(x) = \frac{\phi^\prime\big(\big\|\frac{x-X_i}{h}\big\|^2\big)}{\sum_{i=1}^n\phi^\prime\big(\big\|\frac{x-X_i}{h}\big\|^2\big)}
\end{align}
is called \emph{mean shift}. It is well known that the direction of the mean shift is an estimator of the direction of the gradient of $f$ at $x$. Given some initial position $y_0$, the basic mean shift algorithm iteratively finds a sequence of points $y_1, y_2, \cdots$ by
\begin{align}\label{update}
y_{j+1}=m(y_j) + y_j = \sum_{i=1}^n w_i(y_j)X_i,\;\; j=0,1,2,\cdots. 
\end{align}
Successively connecting these points provides an estimate of the integral curve driven by the gradient, starting from $y_0$. See \cite{arias2016estimation} and \cite{arias2022clustering}. The endpoint of this iteration (after applying some stopping criterion) is considered to be an estimate of a mode (local maximum) of $f$. When running this algorithm repeatedly with each data point as a starting point, clusters can be formed by grouping all the data points for which the mean shift algorithm has the same endpoint. 

Subspace constraints come into the picture when the target is a ridge rather than a mode. The construction of the SCMS algorithm modifies the just described hill climbing algorithm by following the direction of the gradient projected on the subspace spanned by the trailing $(d-k)$ eigenvectors of the Hessian of the KDE. The gradient direction is approximated by the mean shift. More specifically, given a starting point $y_0$, the SCMS generates a sequence
\begin{align}\label{scms}
y_{j+1} = \Proj[y_j]{\wh f}\; m(y_j) + y_j,\quad j = 0, 1,2,\ldots
\end{align}
where $m(y)$ is the mean shift vector, and $\Proj[y]{\wh f} =  \Vbot[y]{\wh f} \Vbot[y]{\wh f}^\top$ is the projection matrix onto the subspace spanned by the trailing $(d-k)$ eigenvectors of the Hessian of the KDE evaluated at $y$. Notice that for $y \in {\rm Ridge}(\wh f\,)$, this space is orthogonal to $\nabla \wh f(y)$. This motivates that the endpoint of this iteration (after applying some stopping criterion) is considered to be an estimated ridge point. We note that in the original SCMS algorithm proposed in \cite{ozertem2011locally}, $\Proj[y_j]{\wh f}$ is replaced by $\Proj[y_j]{\log\wh f}$, and this corresponds to the ridge estimation of $\log f$. We focus on the ridge estimation of $f$ in this paper, although the analysis can be easily adapted to that of $\log f$. The SCMS algorithm is very popular, and it gives nice results in practice. However, as discussed next (and in the Appendix~\ref{appendix:a}), the SCMS algorithm might miss some parts of the ridge. 

\subsection{The SCMS algorithm might miss some parts of the ridge}\label{SCMSTheory}
By definition, a ridge point $x_0$ is a local maximum in the directions given by the columns of $\Vbot[x_0]{f}$. So the goal of the SCMS algorithm is to stay in this subspace by projecting the mean shift vectors back into this space in each iteration step. Not knowing the ridge points (i.e. not knowing the target subspace), the algorithm projects the gradient at the current iterate $y_j$ on the subspace spanned by the trailing eigenvectors of the Hessian at this point $y_j$. Thus, the subspace to project on changes with each iteration step. Indeed these subspaces are tangent spaces to the integral curve/surface traced by the SCMS algorithm. It turns out that, because of this, the signs of the directional derivatives taken along these curves/surfaces are  not necessarily determined by the signs of the eigenvalues of the Hessian. As a consequence, ridge points are not necessarily local maximizers when traveling along the integral curves/surfaces traced by the SCMS algorithm, but they can also be local minimizers or saddle points, and if they are, they are not identified as a ridge point by the algorithm. More details along with an example are provided in Appendix~\ref{appendix:a}.

\subsection{Measuring ridgeness}

The new ridge finding algorithms proposed in this work are based on the following ridgeness measure (actually measuring `non-ridgeness'):
\begin{align}\label{TargetFun}
\eta(x) = -\frac{1}{2}\big\|\Vbot{f}^\top\nabla f(x)\big\|^2.
\end{align}
Since we will assume that $\lamx{k} \ne \lamx{k+1}$ for all $x$ (see assumption {\bf (A2)} below), $\eta(x)$ is well-defined. According to their definition, ridge points can be described as 
\begin{align}
&\eta(x) = 0\;\;\;\text{s.t.} \;\;\;\lamx{k+1} <0,\label{eigenvalue-cond}
\end{align}
or, since $\eta\le 0,$ ridge points are global maximizers of $\eta$. In other words, ridge points can be described as 
\begin{align}
&\argmax_{x}\eta(x)\;\;\;\text{s.t.} \;\;\;\lamx{k+1} <0.\nonumber
\end{align}
Note that here `$\argmax$' denotes the entire set of maximizers. From this point of view, ridge finding algorithms can be obtained as algorithms maximizing the data-dependent version of $\eta(x)$ given by
\begin{align}\label{KernelTargetFun}
\text{find {\em all} maximizers of }\;\;\;\wh \eta(x) = -\frac{1}{2}\big\| \Vbot{\wh f}^\top\nabla \wh f(x)\big\|^2,\quad \text{subject to }\;\;\lamx[\wh f]{k+1}<0.
\end{align}
The constraint on the eigenvalue will simply be enforced by excluding maximizers violating this condition.

\subsection{Notation}\label{notation}
Here we collect some important notation used in this work. Let $g,h:\R^d \to \R$ be a twice differentiable function. Then we use the following notation:
\begin{enumerate}[$\bullet$]
    \item $\lamx[g]{1} \ge \cdots \ge \lamx[g]{d}$:\;\;sorted eigenvalues of Hessian of $g$
    \item $\Vx[g]{i}$:\;\;eigenvector of Hessian of $g$ corresponding to $\lamx[g]{i}, i = 1,\ldots,d$ 
    \item $\Vbot{g}$:\; $(d \times (d-k))$-matrix formed by the $(d-k)$ trailing eigenvectors of Hessian of $g$
    \item $\Proj{\,g} = \Vbot{g}\Vbot{g}^\top$:\; projection matrix onto linear space spanned by columns of $\Vbot{g}$
    \item $\xix{g} = \Proj{\,g}\nabla g(x)$:\;\;projection of the gradient of $g$ onto the linear space spanned by\\ \hspace*{3.6cm} $(d-k)$ trailing eigenvectors of Hessian of $g$
    \item $\eta(x) = -\frac{1}{2}\big\| \Vbot{f}^\top\nabla f(x)\big\|^2 = -\frac{1}{2} \|\xix{f}\|^2$:\;ridgeness function of $f$
    \item $\wh \eta(x) = -\frac{1}{2}\big\| \Vbot{\wh f}^\top\nabla \wh f(x)\big\|^2 = -\frac{1}{2} \|\xix{\wh f}\|^2$:\;ridgeness function of KDE $\wh f$
    \item $\wh \eta_\tau(x)$:\; smoothed ridgeness function, where $\tau$ is a smoothing parameter (see \ref{smoothed-ridgeness}).
    \item \mbox{\;}\vspace*{-0.5cm} 
\begin{align}\label{Sepsilon}
    \Seps{g,h}:=\{x\in [0,1]^d:\; g(x)\geq -\epsilon, \lamx[h]{k+1}<0\},
\end{align}
\end{enumerate}
Recall that $k \in \{0,\ldots,d-1\}$ is fixed throughout the manuscript (except for the numerical illustrations).
\subsection{Algorithms}

To compute the maximizers of the ridgeness function we now consider two algorithms based on $\wh \eta(x)$. 
As can be seen below, our proposed algorithms target the ridge of the ridgeness function, and we will see below (see Lemma~\ref{ridge-ridgeness}) that the ridge of the ridgeness function essentially equals the original ridge of $f$.

In the following, we assume that all the functions considered are defined on $[0,1]^d.$ \\

{\bf Basic Algorithm 1}: {\em Alternative SCMS approach using an estimated ridgeness function.}\\

Observing that  $\nabla \wh\eta(x)= - [\nabla \xix{\wh f}]^\top \xix{\wh f},$ we have the following SCMS-type algorithm: Given $a>0$ (step size) and a starting point $y^0$, we update through
\begin{align*}
y^{j+1} &= y^j + a\, \xii{\wh \eta}{y^j}\\
&= y^j - a\, \Proj[y^j]{\wh \eta}\;[\nabla \xii{\wh f}{y^j}]^\top\; \xii{\wh f}{y^j}\\
&= y^j - a\, \Proj[y^j]{\wh \eta}\; [\nabla  \xii{\wh f}{y^j}]^\top  \Proj[y^j]{\wh f}\;\nabla \wh f(y^j),\; j=0,1,2,\cdots 
\end{align*}
More precisely, the structure of the algorithm is as follows:\\ 

{\bf Input:} $y_i^{0}= X_i$, $i=1,\cdots,n$, $a>0$, $h>0$.\\

{\bf Update}: For $i = 0, 1,2,\ldots,n$, for $j=1,2,\ldots,$ \\[5pt]
\hspace*{1cm} {\bf while $y_i^j\in[0,1]^d\,$}:\\[-20pt] 
\begin{align}\label{algorithm1update}
y_i^{j+1} = y_i^j - a\, \Proj[y_i^j]{\wh \eta}\;[\nabla  \xii{\wh f}{y_i^j}]^\top\; \Proj[y^j_i]{\wh f}\; \nabla \wh f(y_i^j),
\end{align}
\hspace*{1cm}{{\bf else:} discard the entire sequence $y_i^0,y_i^1,\ldots$\;\;}\\[-10pt]

{\bf Output:} $\qquad\qquad\{y_i^{\infty}:\;  \wh\eta(y_i^{\infty})=0, \; \lam[\wh f]{k+1}(y_i^{\infty})<0 \}.$\\

In the output step, we remove points that do not comply with the condition for eigenvalues in the definition of ridges, because this condition is not being taken into account when constructing the iterations of the algorithm. \\

{\bf Basic Algorithm 2}: {\em Alternative SCMS approach using a smoothed estimated ridgeness function.}\\

Let $\tau>0$ be another bandwidth and $ L : \R^d \to \R_{\geq0}$ be a twice differentiable kernel function. Define a smoothed version of the ridgeness function $\wh \eta(x)$ as
\begin{align}\label{smoothed-ridgeness}
\wh \eta_{\tau}(x) = \frac{1}{\tau^d} \int_{\mathbb{R}^d} L\left( \frac{x-u}{\tau} \right) \wh \eta(u) du.  
\end{align}
Our algorithm will approximate the ridge of this smoothed version of the ridgeness function. Using this smoothed version has some computational advantages (see Section~\ref{more-pract} below). 
Let $\Vbot{\wh\eta_\tau}$ be the matrix whose columns are the trailing $(d-k)$ orthonormal eigenvectors of $\nabla^2 \wh \eta_\tau(x)$, and let $\xix{\wh \eta_\tau} = \Proj{\wh \eta_\tau}\;\nabla  \wh \eta_\tau(x)$, where $\Proj{\wh \eta_\tau} =  \Vbot{\wh \eta_\tau} \Vbot{\wh  \eta_\tau}^\top.$ With this notation, the structure of the algorithm is as follows: \\

{\bf Input:} Given $y_i^{0}= X_i$, $i=1,\cdots,n$, $a>0$, $\tau>0$, $h>0$.\\

{\bf Update}: For $i = 0,1,2,\ldots,n$, for $j=1,2,\ldots,$ \\[5pt]
\hspace*{1cm} {\bf while $y_i^j\in[0,1]^d\,$}:\\[-20pt] 
\begin{align*}
y_i^{j+1} &= y_i^j + a\, \xii{\wh \eta_\tau}{y_i^j}\\
&= y_i^j + a\,\Proj[y^j_i]{\wh \eta_\tau}\; \nabla  \wh \eta_\tau(y_i^j),
\end{align*}
\hspace*{1cm}{{\bf else:} discard the entire sequence $y_i^0,y_i^1,\ldots$\;\;}\\[-10pt]

{\bf Output:} 
$\qquad\qquad
\{y_i^{\infty}:\; \wh\eta_\tau(y_i^{\infty})=0, \; \lam[\wh f]{r+1}(y_i^{\infty})<0 \}.$

\begin{remark}
    While the algorithms above are defined by using the data as starting points, we do have other options. What we need is a set of starting points that becomes dense in the support $[0,1]^d$, such as a fine grid, where, for the theory, the grid size would tend to zero. The theory presented in this work is using a continuous set. It seems possible to extend this theory to the case of the data being the starting points, but we do not pursue this case here.
\end{remark}

\subsection{Practical implementation and illustration of the algorithms}\label{pruning}

For practical implementation, the basic algorithms given above require a stopping criterion, choice of tuning parameters, and some additional  pre- and post-processing. This, along with some other aspects that are of some importance in the actual implementation of the algorithms, are discussed in the following. In the above two basic algorithms, for each $i$, we stop the iterations when $\|y_i^{j}-y_i^{j-1}\|\leq \varepsilon_{\text{tol}}$ for some small tolerance $\varepsilon_{\text{tol}}>0$, and we use $y_i^{j}$ as the final point, denoted $y_i^*$, for the sequence starting from $y_i^{0}$. Here $\varepsilon_{\text{tol}}$ can be chosen as small as possible, with smaller $\varepsilon_{\text{tol}}$ giving better accuracy and heavier computation cost.

The bandwidth $h$ in the kernel estimators impacts the rate of convergence for ridge estimation. Since the ridge is determined by up to the second order derivatives of $f$, it is recommended that the optimal bandwidth for the estimation of the Hessian of $f$ is used for ridge estimation. However, if the density on the ridge is completely flat, the ridge becomes a set of degenerate local maxima and then the optimal bandwidth for gradient estimation should be used. See \cite{qiao2023confidence}. Plug-in and cross-validation approaches are most commonly used for selecting such bandwidths \cite[see][]{chacon2013data}. 

The smoothing parameter $\tau$ used in the Basic Algorithm 2 can be chosen as small as possible. In fact, our theoretical results in Theorem~\ref{main1} suggests that $\tau$ will not change the rate of convergence for ridge estimation as along as $\tau=O(h)\to0$. However, if the goal is to approximate $\text{Ridge}(\wh f)$ through the Basic Algorithm 2, then $\tau$ should be much smaller than $h$. In practice, the computation of the integral in \eqref{smoothed-ridgeness} can be done using numerical methods. For example, one can evaluate $\wh \eta$ on a grid, and based on this, \eqref{smoothed-ridgeness} can be approximated by a Riemann sum (see \eqref{RiemannSum}). If the kernel $L$ has bounded support, then $\tau$ determines the number of grid points near $x$ effectively used in the approximation. The interplay between the grid size and $\tau$ again leads to a question between balancing accuracy and computation cost.

The step size $a$ plays a similar role as the learning rate in gradient descent. It is well-known that if such a hyper-parameter is too large, then a overshooting problem can occur. Usually it is safer to use a small $a$ to achieve convergence of the algorithms, with the downside of slow convergence speed. To find a balance and determine a good value of $a$, one can use trial and error, which is in fact a commonly used approach for choosing the learning rate for gradient descent~\citep{bengio2012practical}. 

In practice our algorithms can encounter the following two challenges. The first is posed by low density regions, where the estimated density tends to be flat leading to possible spurious ridge points identified by the algorithms. We note that this challenge is faced by all ridge estimation algorithms, due to the fact that ridges are local features which may arise in any low-estimated-density regions as long as the density is positive, even when true ridges do not exist in these regions. A second challenge is possible local (but non-global) modes of our ridgeness function $\eta$, which again might lead to spurious ridge points. This challenge is relative easy to handle, because the global maximum of $\eta$ is known, which is zero. This known maximum provides a way to distinguish between local and global modes of $\eta$. We address these challenges by introducing the following pre-processing and post-processing steps. 

\subsubsection{Pre-processing}
In our basic algorithms (and also in the theory presented below), we assume that all the ridges considered are defined on $[0,1]^d,$ which is supposed to be contained in the support of $f$. In the actual implementation, however, we are replacing $[0,1]^d$ by the set $\{x \in \R^d: \wh f(x) \ge \varepsilon_f\},$ for a given threshold $\varepsilon_f > 0.$ We choose $\varepsilon_f$ as an $\alpha$-quantile of the distribution of $\{\wh f(X_1), \ldots,\wh f(X_n)\}.$ In our numerical studies, we used $\alpha = 5\%$. A similar idea of using $\varepsilon_f$ has been suggested in~\cite{genovese2014nonparametric}. Notice that under weak assumption, the upper-level set $\{x \in \R^d: \wh f(x) \ge \varepsilon_f\}$ is known to be a consistent estimate of $\{x \in \R^d: f(x) \ge \varepsilon_f\}$ (see, e.g., \cite{qiao2019nonparametric}), and in our theoretical investigations, we could replace the compact set $[0,1]^d$ by $\{x \in \R^d: f(x) \ge \varepsilon_f\},$ and assume that the density at the ridge points is larger than $2\varepsilon_f,$ say. Using a consistent data-dependent estimate for $\varepsilon_f$ could also be dealt with theoretically, even though we are not explicitly considering this in the theory section. Note that the estimated upper level set plays the role of $[0,1]^d$ in the algorithms, so that if a sequence moves out of this region, it will be discarded. Alternatively, one can also use an estimate of the density support, for example, by using the alpha-convex hull~\citep{rodriguez2007set}.  
\subsubsection{Post-processing}
Low density points are now excluded by our pre-processing step. 

To address the problem of possible local maxima of the functions $\wh\eta$ and $\wh \eta_\tau$, we remove an output point $y_i^*$, if $\wh \eta(y_i^*) < - \varepsilon_\eta$, and $\wh\eta_\tau(y_i^*) < - \varepsilon_\eta$, respectively, for some small $\varepsilon_\eta > 0$. Note that in an ideal scenario where we can obtain $y_i^\infty$ as in the two basic algorithms, we can choose $\varepsilon_\eta=0$, because these two algorithms converge to the estimated ridge as $j\to\infty$, as shown in our theoretical results in Theorem~\ref{main2}. However, in practice, we impose a stopping criterion with tolerance $\varepsilon_{\text{tol}}$, which prevents the algorithms from reaching the estimated ridge points exactly. This also explains the necessity of $\varepsilon_\eta > 0$, which intrinsically depends on $\varepsilon_{\text{tol}}$. If $\varepsilon_{\text{tol}}$ is chosen very small, so should be $\varepsilon_\eta$. However, we do not know a fixed relation between these two parameters so that the latter can automatically chosen based on the former.

A practical choice of $\varepsilon_\eta$ is as follows. Consider the quantile function of the (one-dimensional, discrete) distribution given by the values $\{\wh \eta(y_1^*), \ldots, \wh \eta(y_n^*)\}$ assuming that all the $y_i^*$ are included in the final output, and choose $\varepsilon_\eta$ as the location of the `last significant jump' of the quantile function. In other words, we are removing `outliers' in this distribution. It is possible to automatically identify such jumps by using change point detection techniques, however we do not pursue this further. In our numerical experiments, this first significant jump was clearly visible. An example is given in Figures~\ref{cross-sin} and \ref{post-processing} below, which are based on Algorithm 2 with step length $a=0.005$, while Algorithm 1 gives very similar results.

\begin{figure}[H]
\centering
\includegraphics[scale=0.22]{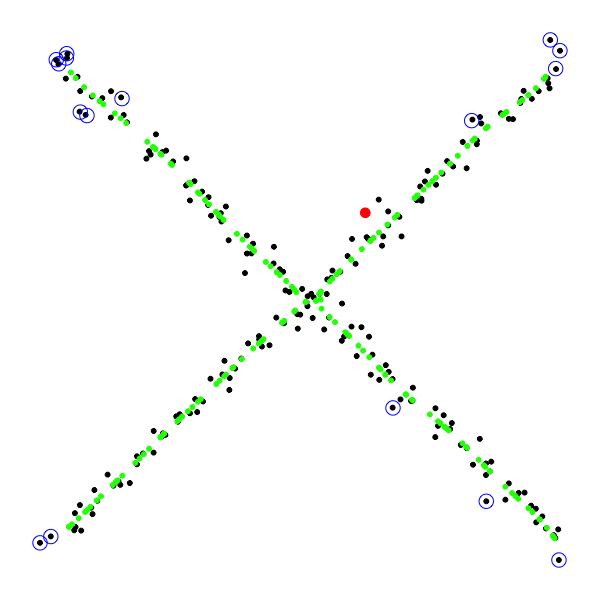}
\includegraphics[scale=0.22]{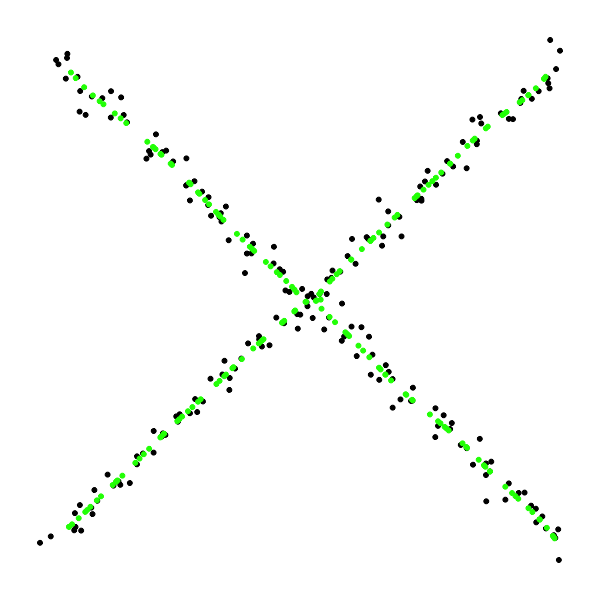}
\caption{X-cross example with 200 data points, where the black solid dots are data points; the blue circles are the points removed by pre-processing; the red dot is the point removed by post-processing; the green dots are the final output of the algorithm. The right panel shows the final result with the red dot and blue circles removed from the left panel.}
\label{cross-sin}
\end{figure}

\begin{figure}[H]
\centering
\includegraphics[scale=0.3]{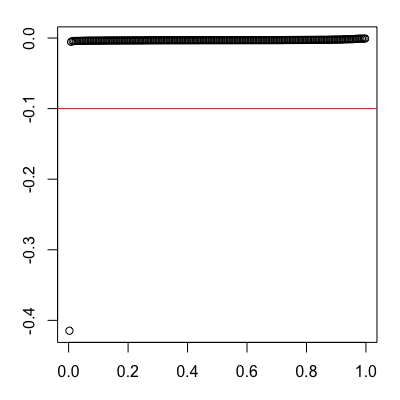}
\caption{Sorted ridgeness values of the output of Algorithm 2 run on the 200 data points with X-cross as ridges  without imposing the threshold $\varepsilon_{\eta}$. A clear jump is visible between the ridgeness values close to zero  and those of spurious ridge points. This observation then informed our choice of  $\varepsilon_{\eta}=-0.1$ (red line). The point below this threshold corresponds to the red dot in the left panel of Figure~\ref{cross-sin}. }
\label{post-processing}
\end{figure}

Our algorithms are also implemented using two additional data sets, as shown Figures~\ref{circle} and~\ref{spiral}, for which we only present the final results after the pre-processing and post-processing steps, with the black and green dots having the same meaning as in Figure~\ref{cross-sin}.

\begin{multicols}{2}
\begin{center}
\includegraphics[height=4cm]{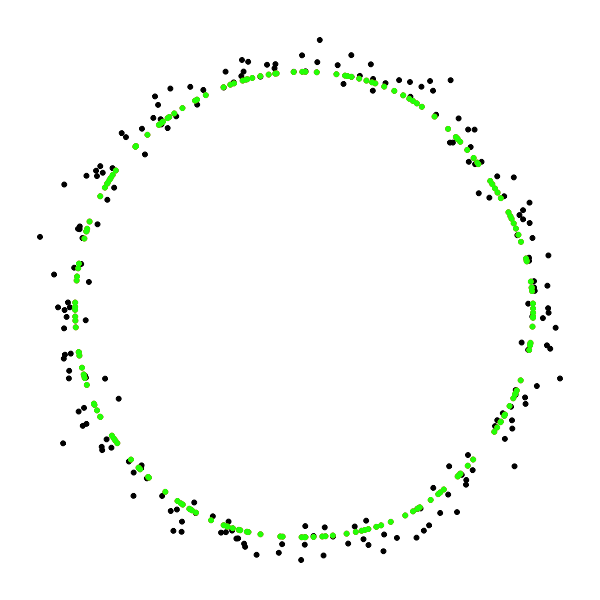}
\captionof{figure}{Circle: 200 data points.}\label{circle}
\includegraphics[height=4cm,angle=90,origin=c]{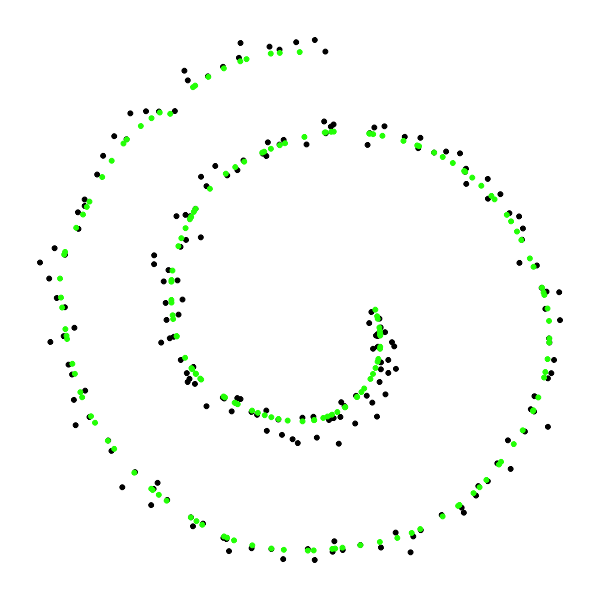}
\captionof{figure}{Spiral: 200 data points.}\label{spiral}
\end{center}
\end{multicols}

\subsubsection{Other more practical aspects}\label{more-pract}

In Algorithm 1, we need to compute $\nabla \wh \xi(x)$ and eigenvalues and eigenvectors of $\nabla^2 \wh\eta(x).$ This requires some matrix algebra, and we present some explicit formulas in Appendix~\ref{appendix:a}. 

Algorithm 2 avoids direct evaluation of these formulas, and thus has some computational advantages in practice. Indeed the connection between the two algorithms can be understood in such a way that the symbolic computation of $\nabla\wh\eta$ and $\nabla^2\wh\eta$ in Algorithm 1 are replaced by their numerical approximations ($\nabla\wh\eta_\tau$ and $\nabla^2\wh\eta_\tau$) based on the evaluations of $\wh\eta$ in Algorithm 2. In practice, we implement the computation of $\nabla\wh\eta_\tau$ and $\nabla^2\wh\eta_\tau$ as follows. Let $\{x_i,i=1,2,\cdots\}$ be a grid over $\mathbb{R}^d$ with grid length $\rho<\tau$. Then $\nabla\wh\eta_\tau$ is approximated by 
\begin{align}
\label{RiemannSum}
\frac{\rho^d}{\tau^d} \sum_{i} \nabla \Big[L\left( \frac{x-x_i}{\tau} \right)\Big] \wh \eta(x_i),  
\end{align}
and $\nabla^2\wh\eta_\tau$ can be approximated in a similar way. The kernel $L$ is often chosen with bounded support or as the Gaussian kernel with truncation so that there are only a limited number of grid points involved in the above summation.

\subsection{Simulation study}\label{simulation}

We ran a small simulation study for the performance of our algorithms 1 and 2. We consider the ridge set of the density $f$ of a bivariate random vector $X = Y + Z$, where $Y$ has a distribution restricted on a circle $\mathcal{C}$ with center at the origin and unit radius, and $Z\sim N(0,0.05^2)$. Two models depending on the distributions of $Y=(\cos(\Theta),\sin(\Theta))$ were used: $\Theta$ has a uniform distribution on $[0,2\pi]$ for Model 1 and $\Theta$ has a density $[\sin(\theta)+2]/4\pi$ for $\theta\in[0,2\pi]$. Note that the distribution of $X$ is a convolution of those of $Y$ and $Z$, and therefore $\text{ridge}(f)$ slightly deviates from $\mathcal{C}$. We use the Hausdorff distance (see \eqref{hausdorffdef} below for the definition) between the algorithm outputs and the true ridge $\text{ridge}(f)$ as the error in the estimation. We compare the performance of the our algorithms and SCMS using this error based on 200 random samples from each of the two models. The bandwidth $h$ used in the kernel estimates is determined as $(h_{\text{opt},1} + h_{\text{opt},2})/2$, where $(h_{\text{opt},1} , h_{\text{opt},2})$ is the optimal bandwidth vector selected by optimizing the density Hessian estimation using the plug-in approach. All the three algorithms use the same starting points for each model, which are grid points near the ridges. The results are shown in Figure~\ref{fig:simul} and Table~\ref{tab:simul}, where we see that in this example all the three algorithms are able to estimate the true ridges well and their performances are comparable. In Appendix \ref{SCMS_fail} we provide an example for which SCMS fails to detect a part of a ridge while our algorithms are able to capture it.

\begin{figure}[h]
\centering
\includegraphics[scale=0.27]{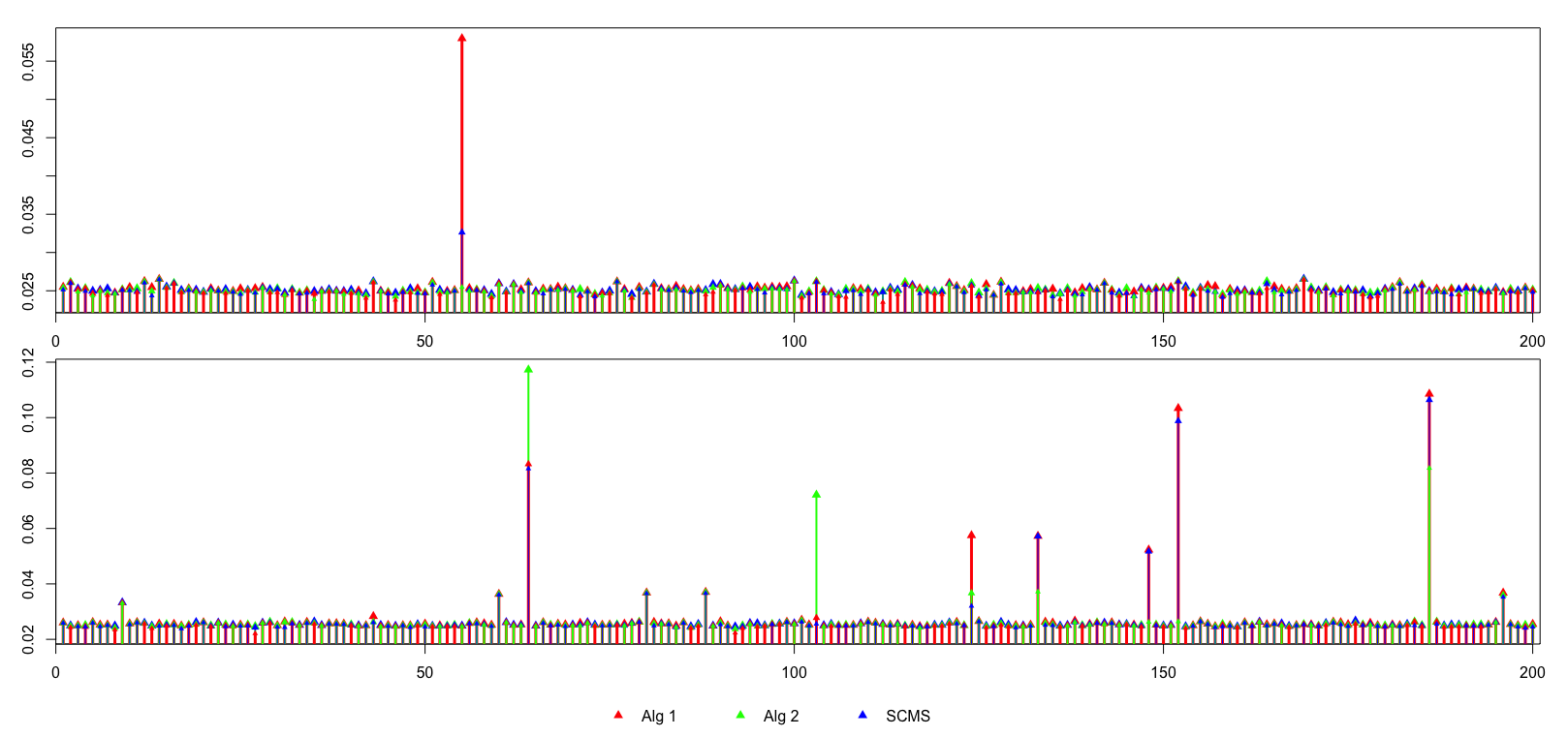}
\caption{Plots of errors for Model 1 (upper panel) and Model 2 (lower panel), each using 200 random samples, where the heights of the red, green and blue triangles represent the errors in the estimation for Algorithm 1, Algorithm 2 and SCMS, respectively.} 
\label{fig:simul}
\end{figure}

\begin{table}[H]
    \centering
    \begin{tabular}{c|ccc}
         & Alg 1 & Alg 2 & SCMS\\\hline
       Model 1  & 0.0253 (0.0024) & 0.0251 (0.0005) & 0.0251 (0.0007)\\\hline
       Model 2  & 0.0271 (0.0098) & 0.0266 (0.0085) & 0.0269 (0.0092)\\
    \end{tabular}
    \caption{This table contains the means and standard deviations (in parentheses) of the errors shown in Figure~\ref{fig:simul}.}
    \label{tab:simul}
\end{table}

\subsection{Real Data Application}

We apply our algorithms to a data set of active and extinct volcanoes in Japan available at \url{https://en.wikipedia.org/wiki/List_of_volcanoes_in_Japan}. The locations of these volcanoes exhibit a clear filamentary structure with three major branches sharing an intersection. The results using SCMS and our algorithms are shown in Figure \ref{fig:volcanoes}. We used the same bandwidth for all the three algorithms based on an optimal selection for the second derivatives of the kernel density estimation.  Using all the sample points as starting points, the outputs of the algorithms are shown in the three left panels. It can be seen that all the three algorithms can capture the three major branches in the data, however, a careful examination reveals that the output of the SCMS algorithm seems to have big gap near the intersection of the three branches. To further investigate this issue, we ran each algorithm with a new set of starting values while keeping all the tuning parameters the same. The results are shown in the three right panels. The new starting points are constructed as follows: For each of the $n$ outputs of an algorithm, connect each of the 20-nearest neighbors among the original data points to the output point by a line segment. On each of these 20 line segments choose 10 equidistant points. The resulting $200*n$ points are the new starting values of the respective algorithm. The idea underlying this construction is to find starting values that form a dense neighborhood of the true ridge lines. We observe that, although these start points fill the gap near the intersection well, the detected branches of SCMS algorithm are still clearly separated, while the branches are connected better in the outputs of our algorithms. Also see Section~\ref{SCMS_fail} for similar results in simulations. This is consistent with our arguments that SCMS algorithm may miss some parts of the ridges. 

\begin{figure}[!t]
\begin{subfigure}{0.5\textwidth}
\includegraphics[scale = 0.23]{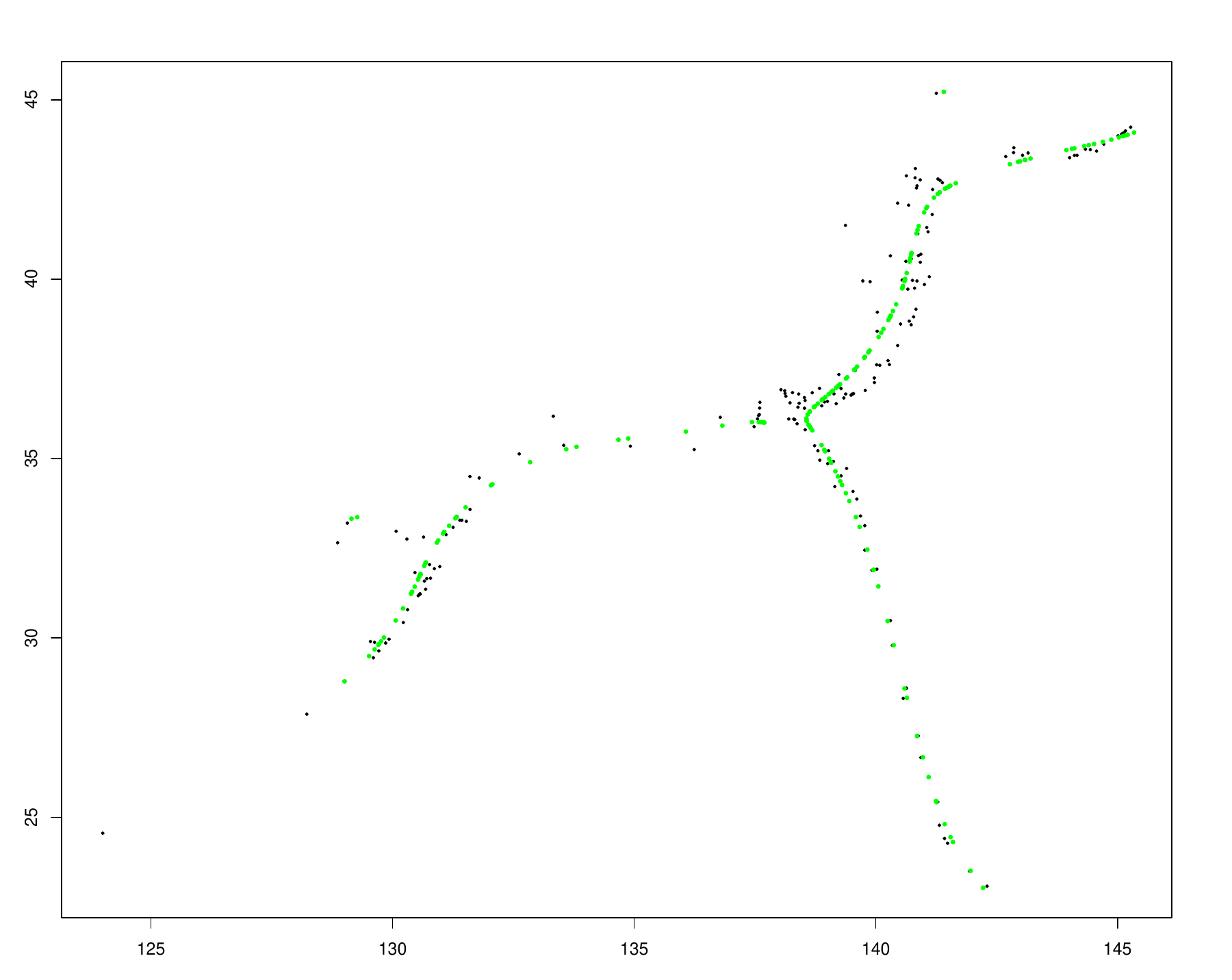}
\caption{SCMS: data as starting values} 
\end{subfigure}
\begin{subfigure}{0.5\textwidth}
\includegraphics[scale = 0.23]{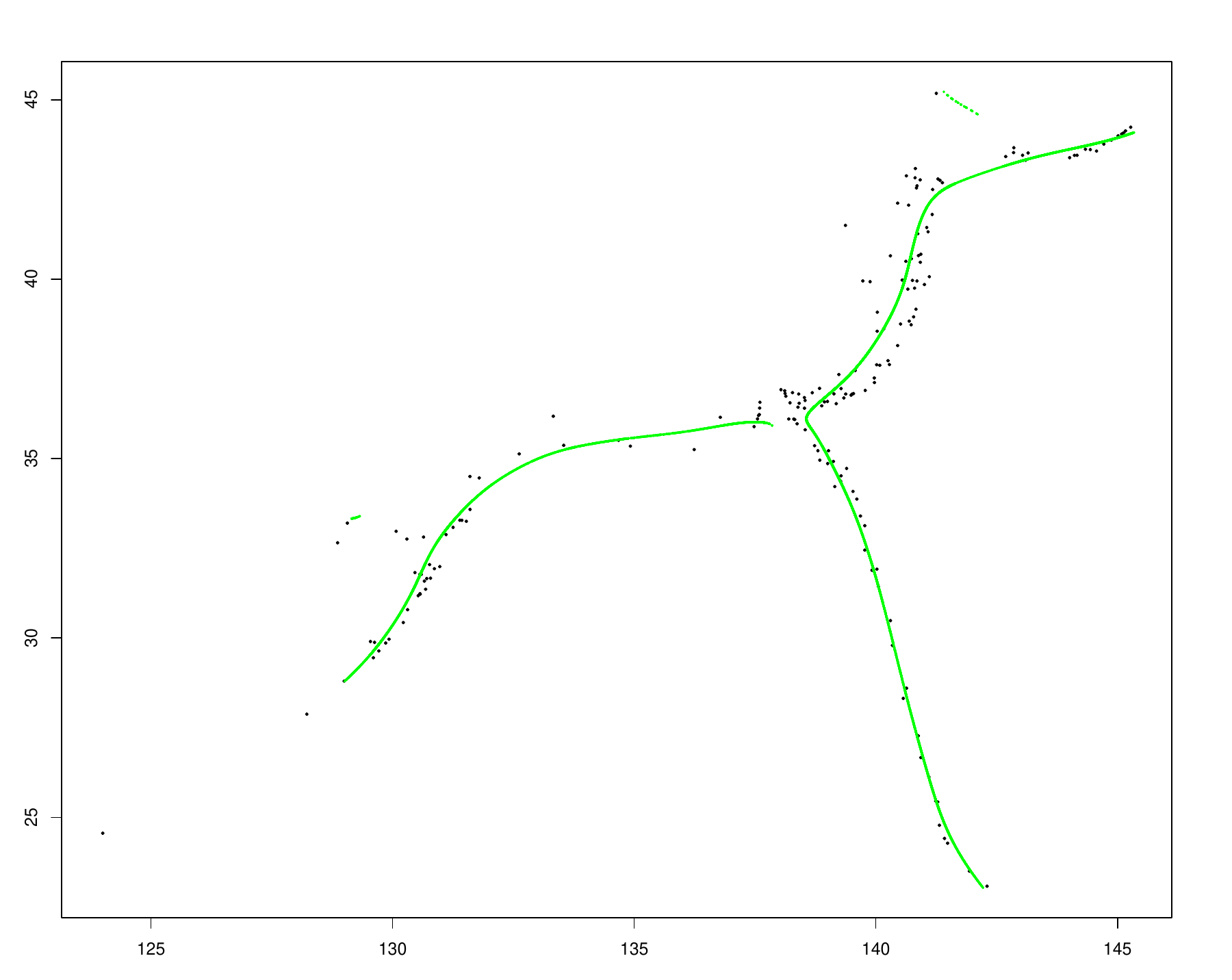}
\caption{SCMS: new starting values} 
\end{subfigure}
\begin{subfigure}{0.5\textwidth}
\includegraphics[scale = 0.23]{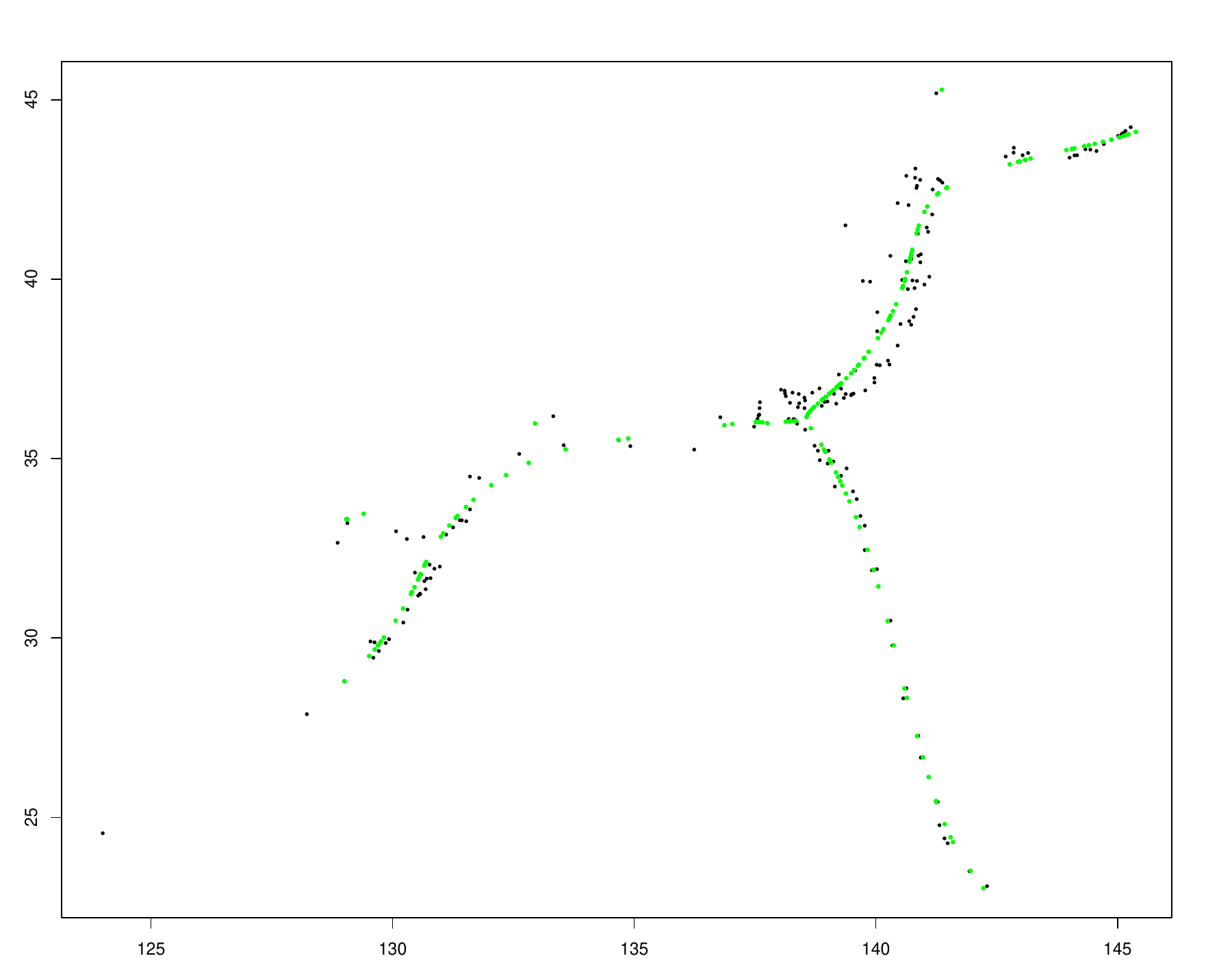}
\caption{Algorithm 1: data as starting values} 
\end{subfigure}
\begin{subfigure}{0.5\textwidth}
\includegraphics[scale = 0.23]{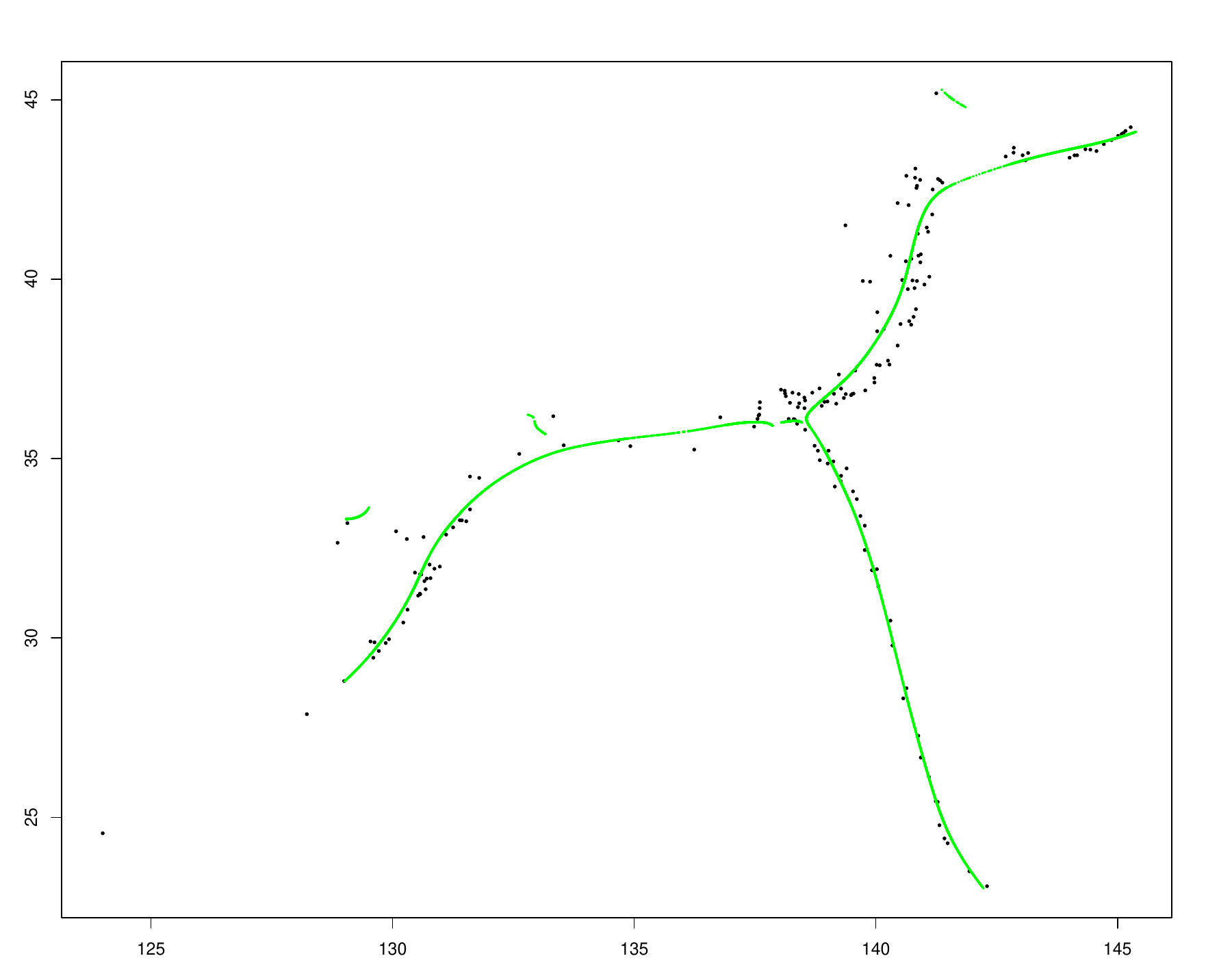}
\caption{Algorithm 1: new starting values} 
\end{subfigure} 
\begin{subfigure}{0.5\textwidth}
\includegraphics[scale = 0.23]{volcano_Alg1_original}
\caption{Algorithm 2: data as starting values} 
\end{subfigure}
\begin{subfigure}{0.5\textwidth}
\includegraphics[scale = 0.23]{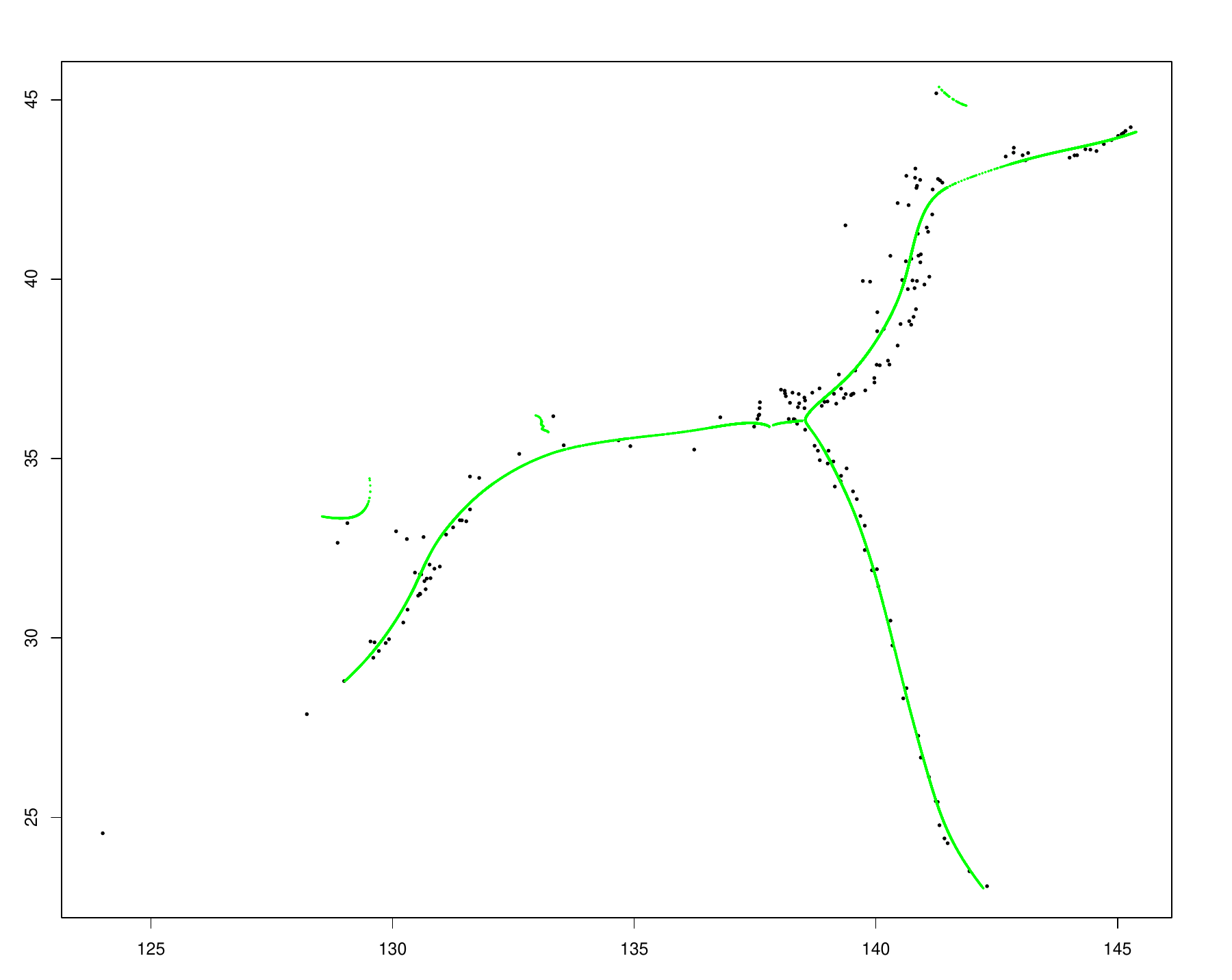}
\caption{Algorithm 2: new starting values} 
\end{subfigure}
\caption{
Outputs of the three algorithms using the original sample as initialization (left panels) and new initilization (right panels). The black dots are the locations of the volcanoes in Japan, and the green dots are the estimated ridge points. 
}
\label{fig:volcanoes}
\end{figure}

\section{Main results}
\label{mainresults}
\subsection{Assumptions and some technical implications}
\label{assumptions}
Before we state our assumptions, we first introduce some notation. For $\alpha=(\alpha_1,\cdots,\alpha_d)^T\in\mathbb{N}^d$, let $|\alpha| = \alpha_1+\cdots+\alpha_d$. For a function $g:\mathbb{R}^d\rightarrow\mathbb{R}$ with partial derivatives of order $|\alpha|$, define 
\begin{align*}
\partial^{(\alpha)} g(x) = \frac{\partial^{|\alpha|} }{\partial^{\alpha_1}x_1\cdots \partial^{\alpha_d}x_d} g(x),\; x\in\mathbb{R}^d.
\end{align*}

For any subset $\mathcal{A}\subset\mathbb{R}^d$, and $x\in\mathbb{R}^d$, let $d(x,\mathcal{A})=\inf_{y\in\mathcal{A}}\|x-y\|$. For another subset $\mathcal{B}\subset\mathbb{R}^d$, the Hausdorff distance between $\mathcal{A}$ and $\mathcal{B}$ is defined as
\begin{align}
\label{hausdorffdef}
    d_H(\mathcal{A},\mathcal{B}) = \max\{\sup_{x\in\mathcal{A}}d(x,\mathcal{B}),\;\sup_{x\in\mathcal{B}}d(x,\mathcal{A})\}.
\end{align}
Let $\partial\mathcal{A}$ be the boundary of $\mathcal{A}$. For $\delta > 0$ we define $\mathcal{A}^\delta = \bigcup_{x \in \mathcal{A}}\mathcal{B}(x,\delta),$ where $\mathcal{B}(x,\delta)$ denotes the (open) ball with midpoint $x$ and radius $\delta.$ For any $c_1,c_2\in\mathbb{R}$, let $c_1\vee c_2=\max(c_1,c_2)$ and $c_1\wedge c_2=\min(c_1,c_2)$. Finally, we will use $\| A\|_F$ to denote Frobenius-norm of a matrix $A$.

Recall that $\xix{\wh \eta}$  and $ \xix{\wh \eta_\tau}$ denote the projections of the gradients of $\wh \eta(x)$ and $\wh \eta_\tau(x)$, respectively, onto the subspace spanned by the trailing $(d-k)$ eigenvectors of $\nabla^2\wh \eta$ and $\nabla^2\wh\eta_\tau$, respectively. Similarly, we define the corresponding population quantities $\xix{f}$ and $\xix{\eta}$ (see Section~\ref{notation}). With slight abuse of notation, we use $\text{Ridge}(g)$ to denote the ridge of any twice differentiable function $g$ restricted to $[0,1]^d.$

In the formulations of our theoretical results, we will use the following assumptions. Let $m$ be a positive integer (where our results will require $m\geq4$).\\

{\bf (A1)$_{f,m}$} $f > 0$ is a density on $\mathbb{R}^d$ such that, for some $\epsilon_0 > 0$,  $[-\varepsilon_0,1+\varepsilon_0]^{d}$ is contained in the support of $f$; the partial derivatives of $f$ up to order $m$ exist and are bounded and continuous.\\

{\bf (A2)$_f$} There exist $\beta, \delta > 0$ such that:  $|\lamx{k+1}|  > \beta$ for all $x \in \{t\in[0,1]^d: \eta(t) = 0\}^\delta$, and $\lamx{k} - \lamx{k+1} < \beta,$ and for all $x \in [0,1]^d$. Furthermore, $\text{Ridge}(f)^\delta \subset [0,1]^d$. \\

{\bf (A3)$_f$} For all $x \in \text{Ridge}(f),$ $\nabla \xix{f} \in \R^{d \times d}$ has rank $d-k.$\\ 

{\bf (K)$_m$} The kernel function $K:\R^d \to [0,\infty)$ is spherically symmetric and integrates to 1. All the partial derivatives up to order $m$ exist and are continuous and bounded. Moreover, for any $\alpha\in\mathbb{N}^d$ with $|\alpha|\leq m$, the class of functions
\begin{align*}
    \Big\{ x\mapsto \partial^{(\alpha)}K\Big(\frac{x-y}{h}\Big): \, y\in \mathbb{R}^d, h > 0\Big\}
\end{align*}
is a VC-class \cite[see][]{vaart1996weak}. Also assume $\int_{\mathbb{R}^d} K(x)\|x\|^2dx<\infty.$\\

{\bf (L)} The kernel function $L:\R^d \to [0,\infty)$ is spherically symmetric with bounded support and integrates to 1. The partial derivatives of $L$ up to order four exist and are continuous.\\

{\em Discussion of the assumptions:} (i) Assumption {\bf (A1)$_{f,m}$} is made to avoid boundary issues of kernel density estimation on $[0,1]^d$. The unit cube can of course be replaced by any compact set in $\R^d$. (ii) Assumption {\bf (A2)$_f$} is typical in the literature of ridge estimation \cite[see, e.g. Assumption A1 in][]{genovese2014nonparametric}. It avoids spurious ridge points under small perturbation. (iii) Assumption {\bf (A3)$_f$} implies that $\text{Ridge}(f)$ is a manifold (without boundary). E.g., see Theorem 5.12 in \cite{lee2013smooth}. In particular, this precludes the existence of intersections of the ridge. For ridges with boundary or intersections, all our results except for Theorem~\ref{discretecomp}(iii) and Corollary~\ref{perturbeddiscrete}(iii) still apply to the part on the ridges strictly bounded away from the boundary and intersections. (v) The VC-class assumption in {\bf (K)}$_m$ holds if $K$ is of the form $K(x) = \phi(p(x))$ with $\phi:\R \to \R_{\geq0}$ of bounded variation, and $p(x)$ a polynomial \cite[see][]{nolan1987u}.  In particular this is the case for the Gaussian kernel. (vi) Under the above assumptions, $\eta(x)$ is twice differentiable and the second derivatives are Lipschitz continuous.  In particular, the Hessian $\nabla^2 \eta(x)$ is well-defined, and we have the following properties:

\begin{lemma}\label{misc1} Under {\bf (A1)$_{f,4}$},  {\bf (A2)$_f$}, and {\bf (A3)$_f$}, $\text{Ridge}(f)$ is a compact set; we have $\lamx[\eta]{1} = \cdots = \lamx[\eta]{k}=0$ for all $x \in \text{Ridge}(f)$, and there exist positive constants $A, \alpha$ and $\delta^\prime \le \delta$, such that for all $x \in \text{Ridge}(f)^{\delta^\prime},$ 
\begin{align}\label{eigenrange}
 - \alpha \ge \lamx[\eta]{k+1}\cdots\geq\lamx[\eta]{d} \ge - A.
\end{align}
Moreover, the columns of $V^\eta_\perp(x)$ span the normal space of $\text{Ridge}(f)$ at $x\in\text{Ridge}(f)$.
\end{lemma}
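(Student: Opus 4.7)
The plan is to compute the Hessian $\nabla^2\eta(x)$ directly from the factorization $\eta=-\tfrac12\|\xi\|^2$, read off its eigenstructure on $\mathrm{Ridge}(f)$ from the rank condition {\bf (A3)$_f$}, and then extend the bounds to a tubular neighborhood by continuity and compactness. The first step is to differentiate: with $\xi(x)=V_\perp(x)V_\perp(x)^\top\nabla f(x)$, the chain rule gives $\nabla\eta(x) = -[\nabla\xi(x)]^\top\xi(x)$, and another application of the product rule (the population version of equation~(\ref{Hessian})) yields
\[
\nabla^2\eta(x) \;=\; -\,(\mathbf{I}_d\otimes \xi(x)^\top)\,\nabla(\nabla\xi(x)) \;-\; [\nabla\xi(x)]^\top\,\nabla\xi(x).
\]
On $\mathrm{Ridge}(f)$ one has $\xi(x)=0$, so the first term vanishes and $\nabla^2\eta(x) = -[\nabla\xi(x)]^\top\nabla\xi(x)$.

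This matrix is negative semidefinite, and by {\bf (A3)$_f$} the Jacobian $\nabla\xi(x)\in\R^{d\times d}$ has rank $d-k$, so $[\nabla\xi(x)]^\top\nabla\xi(x)$ also has rank $d-k$. Therefore $\nabla^2\eta(x)$ has exactly $d-k$ strictly negative and $k$ zero eigenvalues, which in the ordering $\lambda_1^*\ge\cdots\ge\lambda_d^*$ is exactly $\lambda_1^*(x)=\cdots=\lambda_k^*(x)=0$. For the normal-space statement I would use that $\ker([\nabla\xi]^\top\nabla\xi)=\ker(\nabla\xi)$, and that by the constant rank theorem (essentially Theorem~5.12 of Lee (2013), already invoked under {\bf (A3)$_f$}) the ridge near $x$ is the level set $\{\xi=0\}$ with tangent space exactly $\ker(\nabla\xi(x))$. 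Hence the null space of $\nabla^2\eta(x)$ coincides with $T_x\mathrm{Ridge}(f)$, and its orthogonal complement—spanned by the trailing eigenvectors in $V_\perp^*(x)$—is the normal space, as claimed.

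For the uniform bound (\ref{eigenrange}) I would argue by continuity and compactness. Under {\bf (A1)$_{f,4}$} and {\bf (A2)$_f$}, $x\mapsto\nabla^2\eta(x)$ is continuous on $[0,1]^d$, so each eigenvalue $\lambda_j^*(x)$ is continuous by standard perturbation bounds for symmetric matrices. On the compact set $\mathrm{Ridge}(f)$, $\lambda_{k+1}^*(x)$ attains its (strictly negative) maximum, say $-2\alpha$, while $\lambda_d^*(x)$ is bounded below by some $-A/2$. Uniform continuity on a slightly larger compact neighborhood then supplies $\delta>0$ such that $\lambda_{k+1}^*(x)\le-\alpha$ and $\lambda_d^*(x)\ge-A$ throughout $\mathrm{Ridge}(f)^\delta$. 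The only conceptual step beyond routine bookkeeping is the identification $\ker(\nabla\xi(x))=T_x\mathrm{Ridge}(f)$ via the constant rank theorem; everything else is direct differentiation and standard eigenvalue continuity.
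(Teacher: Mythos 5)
Your proposal is correct and takes essentially the same route as the paper: compute $\nabla^2\eta = -(\mathbf{I}_d\otimes\xi^\top)\nabla(\nabla\xi) - [\nabla\xi]^\top\nabla\xi$, observe that the first term vanishes on the ridge so the Hessian reduces to $-[\nabla\xi]^\top\nabla\xi$, invoke {\bf (A3)$_f$} to get rank $d-k$ and hence the claimed eigenstructure, and identify the normal space with the nonzero eigenspace. The only notable presentational differences are (a) for the normal-space claim you work with $\ker(\nabla\xi)$ and the constant rank theorem, whereas the paper works dually with the row space of $\nabla\xi$ and the identity $\{\nabla\xi^\top a\}=\{\nabla\xi^\top\nabla\xi\,a\}$, and (b) for extending (\ref{eigenrange}) to a tube you invoke continuity and uniform continuity on a compact neighborhood, whereas the paper makes the same idea explicit via Weyl's inequality together with the bound $\|(\mathbf{I}_d\otimes\xi^\top)\nabla(\nabla\xi)\|_F\le c_0\delta'$. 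One small omission: both your compactness arguments (maximum of $\lambda_{k+1}^*$ and lower bound on $\lambda_d^*$ on $\mathrm{Ridge}(f)$, and the uniform continuity step) presuppose that $\mathrm{Ridge}(f)$ is compact, which the paper establishes first by using {\bf (A2)$_f$} to rewrite the open eigenvalue condition $\lambda_{k+1}<0$ as the closed condition $\lambda_{k+1}\le 0$ on $\{\xi=0\}\cap[0,1]^d$; you should state this explicitly since $\mathrm{Ridge}(f)$ is defined via a strict inequality and is not obviously closed.
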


Recall the definition of $\Seps{\eta,f}$ given in (\ref{Sepsilon}) and notice that $\Sep{\eta,f}{0} = \text{Ridge}(f)$. The following lemma states that $\Seps{\eta,f}$ is in a neighborhood $\text{Ridge}(f)$ of radius in the order of $\sqrt{\epsilon}$ when $\epsilon\to0$.  
\begin{lemma}
\label{ridgetube}
Assuming {\bf (A1)$_{f,4}$},  {\bf (A2)$_f$}, and {\bf (A3)$_f$}, there exist constants  $L_1,L_2,\epsilon_0 > 0$ such that for all $0 < \epsilon \le \epsilon_0$,
\begin{align}\label{Lipschitz}
    L_1\sqrt{\epsilon} \leq d_H(\partial \Sep{\eta,f}{\epsilon},\text{Ridge}(f)) \leq L_2\sqrt{\epsilon},
\end{align}
where $d_H$ is the Hausdorff distance. 
\end{lemma}

Since $\nabla \eta(x) = {\bf 0}$ for $x\in\text{Ridge}(f)$, Lemma~\ref{misc1} implies that $\text{Ridge}(f)\subset \text{Ridge}(\eta)$. The following lemma states that in a neighborhood of the ridge of $f$, the ridge of the ridgeness function $\eta$ equals the ridge of $f$. Recall that $\eta(x) =  -\frac{1}{2}\|\Vbot{f}^\top\nabla f(x)\|^2 = -\frac{1}{2}\|\xix{f}\|^2$.

\begin{lemma}\label{ridge-ridgeness}
Assuming {\bf (A1)$_{f,4}$},  {\bf (A2)$_f$}, and {\bf (A3)$_f$}, there exists an $\epsilon_0 > 0$ such that 
$$\text{\rm Ridge}(f) = \text{\rm Ridge}(\eta) \cap \Seps{\eta,f}\qquad \forall\;0 < \epsilon \le \epsilon_0.$$ 
\end{lemma}

\subsection{Convergence results}\label{mainres}

Brief outline of this section: As the algorithms are targeting Ridge$(\wh f)$, while the theoretical target is Ridge$(f),$ we first control the distance between these two sets (see Theorem~\ref{ridgeHausRate}). Then, in Theorem~\ref{main1}, we consider the continuous version of the algorithms, where the paths traced by the algorithm are replaced by the integral curves generated by our ridgeness functions. Finally, we consider the discrete version (see Theorem~\ref{main2}), i.e. the actual algorithms, and control the distance between the limit points of the algorithms and Ridge$(\wh f)$.

\begin{theorem}\label{ridgeHausRate}
Assume {\bf (A1)$_{f,4}$},  {\bf (A2)$_f$}, {\bf (A3)$_f$}, {\bf (K)$_4$}, $h = o(1),$ and $\frac{\log n}{nh^{d+8}}=o(1)$ as $n\rightarrow\infty$. Then, for any $B>0$ and $n$ large enough, there exists a constant $C>0$ such that with probability at least $1-n^{-B}$:
\begin{align}
    d_H(\text{\rm Ridge}(f),\text{\rm Ridge}(\wh f\,)) \le C \Big(\Big(\frac{\log n}{nh^{d+4}}\Big)^{1/2} + h^2\Big).
\end{align}
\end{theorem}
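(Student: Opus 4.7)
My approach has three stages: (i) establish uniform rates for the KDE derivatives; (ii) propagate these through the eigenprojector map to control $\wh\xi-\xi$ and $\nabla\wh\xi-\nabla\xi$; (iii) compare the zero sets of $\xi$ and $\wh\xi$ in Hausdorff distance via an implicit function theorem argument, using the full-rank condition (A3). For stage (i), under (K)$_4$ and (A1)$_{f,4}$, standard Gin\'e--Guillou maximal inequalities for kernel-type empirical processes (the VC-class hypothesis in (K)$_4$ supplies the required entropy bound) give, for every multi-index with $|\alpha|\le 3$ and every $B>0$, with probability at least $1-n^{-B}$ for $n$ large,
\begin{equation*}
\sup_{x\in[0,1]^d}\bigl|\partial^{(\alpha)}\wh f(x)-\partial^{(\alpha)} f(x)\bigr|=O\!\left(h^{2}+\sqrt{\tfrac{\log n}{nh^{\,d+2|\alpha|}}}\,\right),
\end{equation*}
the $h^2$ bias coming from spherical symmetry of $K$ and smoothness of $f$. (The stronger bandwidth condition $\log n/(nh^{d+8})\to 0$ also handles $|\alpha|=4$, which is needed elsewhere in the paper for $\wh\eta$.)

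For stage (ii), set $\eta_n:=h^{2}+(\log n/nh^{d+4})^{1/2}$. By (A2) the gap between $\lambda_k$ and $\lambda_{k+1}$ is uniformly bounded below on $[0,1]^d$, so by classical matrix perturbation theory (Davis--Kahan; Kato 1995) the map $\nabla^2 f\mapsto V_\perp V_\perp^\top$ is $C^1$ in a neighborhood of the Hessians encountered along $\text{Ridge}(f)^\delta$. Combining this with the derivative rates from (i) applied to $\nabla\wh f$, $\nabla^2\wh f$ and $\nabla^3\wh f$ yields, uniformly on $\text{Ridge}(f)^{\delta}$,
\begin{equation*}
\|\wh\xi-\xi\|_{\infty}=O(\eta_{n}),\qquad \|\nabla\wh\xi-\nabla\xi\|_{F,\infty}=O(\eta_{n}),
\end{equation*}
and Weyl's inequality gives $\sup_{x\in[0,1]^d}|\wh\lambda_{k+1}(x)-\lambda_{k+1}(x)|=O(\eta_n)$.

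For stage (iii), by (A3) and compactness, the smallest nonzero singular value of $\nabla\xi(x)$ is uniformly bounded below by some $\sigma>0$ on $\text{Ridge}(f)^\delta$. \emph{Forward direction:} for $x_0\in\text{Ridge}(f)$, parametrize candidate zeros by $z(s)=x_0+V^*_\perp(x_0)s$, $s\in\mathbb{R}^{d-k}$ (the normal space at $x_0$, by Lemma~\ref{misc1}), and solve $\wh\xi(z(s))=0$ via the implicit function theorem; the Jacobian $\nabla\wh\xi(x_0)V^*_\perp(x_0)$ is a small perturbation of the invertible matrix $\nabla\xi(x_0)V^*_\perp(x_0)$, giving a zero $\wh x_0$ with $\|\wh x_0-x_0\|\lesssim\|\wh\xi(x_0)\|/\sigma=O(\eta_n)$; the condition $\wh\lambda_{k+1}(\wh x_0)<0$ follows from (A2) plus the Weyl bound. \emph{Reverse direction:} any $\wh x\in\text{Ridge}(\wh f)$ satisfies $\wh\xi(\wh x)=0$, so $\|\xi(\wh x)\|=O(\eta_n)$ and $\eta(\wh x)=O(\eta_n^2)$; together with $\wh\lambda_{k+1}(\wh x)<0$ and (A2) this forces $\wh x\in S_{\epsilon_n}$ with $\epsilon_n\to 0$, and a compactness/continuity argument yields $S_{\epsilon_n}\subset\text{Ridge}(f)^\delta$ for $n$ large, so $\wh x$ lies in the IFT regime and the same normal-slice argument run backwards from $\wh x$ produces a point of $\text{Ridge}(f)$ within $O(\eta_n)$.

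The main obstacle is the reverse-Hausdorff step: one must rule out spurious zeros of $\wh\xi$ that lie outside a neighborhood of $\text{Ridge}(f)$. This is delicate because $\xi$ can also vanish at non-ridge critical points, and only the combination of sup-norm convergence with the eigenvalue condition in (A2) (preventing $\wh\lambda_{k+1}(\wh x)$ from signaling a non-ridge critical point for large $n$) confines every $\wh x\in\text{Ridge}(\wh f)$ to the tubular neighborhood where the IFT applies. The remaining calculations -- Gin\'e--Guillou concentration and the perturbation-theoretic propagation -- are standard.
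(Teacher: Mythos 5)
Your proposal is correct and arrives at the stated rate, but the proof architecture is different from the paper's. Both proofs ultimately rest on the same local linearization, namely a Taylor expansion of $\xi$ in the normal direction at a ridge point $x_0$, using Lemma~\ref{misc1} and {\bf (A3)$_f$} (rank $d-k$ of $\nabla\xi(x_0)$, row space equal to the normal space) to invert. The paper packages this linearization once, as the two-sided estimate (\ref{Lipschitz}) on $d_H(\partial S_{\epsilon'},\text{Ridge}(f))$ proved inside Theorem~\ref{pathconvergence} because it is needed anyway for the algorithmic convergence results; it then obtains the forward Hausdorff direction from $\text{Ridge}(\wh f)\subset S_{\rho_n^2}$ with $\rho_n$ controlled by Davis--Kahan, and obtains the reverse direction by a plug-in symmetry argument: Theorem~\ref{main1}(i) certifies that $\wh f$ itself satisfies {\bf (A1)$_{\wh f,4}$}--{\bf (A3)$_{\wh f}$} with high probability, so the same lemma can be applied with the roles of $f$ and $\wh f$ swapped. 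You instead run the linearization twice, as an IFT/Newton step forward (perturbing a zero of $\xi$ at $x_0$ to a nearby zero of $\wh\xi$) and a projection argument in reverse (reading the distance to $\text{Ridge}(f)$ off $\|\xi(\wh x)\|=O(\eta_n)$). Your route is more self-contained (no need to verify {\bf (A3)$_{\wh f}$}), and on inspection it requires sup-norm control of $\partial^{(\alpha)}\wh f$ only for $|\alpha|\le 3$ (for the Jacobian perturbation) rather than $|\alpha|\le 4$; the paper's route amortizes the geometric lemma across the rest of Section~\ref{maththeory}. You also correctly flag the main subtlety, the confinement of $\text{Ridge}(\wh f)$ to a tubular neighborhood of $\text{Ridge}(f)$, which is exactly what the paper's chain $\text{Ridge}(\wh f)\subset S_{\rho_n^2}\subset\text{Ridge}(f)^\delta$ accomplishes.

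One small repair to the forward step: $\wh\xi(z(s))=0$ is $d$ equations in the $d-k$ unknowns $s$, so the IFT should be applied to the square system $\wh V_\perp(z(s))^\top\nabla\wh f(z(s))=0$ (equivalent, since $\wh\xi$ vanishes iff $\wh V_\perp^\top\nabla\wh f$ does), with Jacobian a perturbation of the invertible $(d-k)\times(d-k)$ matrix $V_\perp(x_0)^\top\bigl[\nabla\xi(x_0)\bigr]V^*_\perp(x_0)$; alternatively keep the rectangular formulation but replace the inverse by a quantitative bound in terms of the smallest singular value of $\nabla\xi(x_0)V^*_\perp(x_0)$, which is what your $\sigma$ is implicitly doing.
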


\begin{remark}
\cite{genovese2014nonparametric} also gives the same rate of convergence for ridge estimation using the Hausdorff distance. However, their assumptions and methods of proof are different from ours. In particular, they require that $\|[I_d-\Proj{f}]\nabla f(x)\| f_{\max,3}(x)\leq \beta^2/(2d^3)$ for all $x\in\text{Ridge}(f)^\delta$ in their assumption (A2), where $f_{\max,3}(x)$ is the maximum of the absolute values of all the third partial derivatives of $f$ at $x$, and $\beta$ is essentially the same one as given in our {\bf (A2)$_f$}. Instead of this assumption, we use {\bf (A3)$_f$}, which is weaker \cite[see][Lemma 2]{chen2015asymptotic}.%
\end{remark}

\subsubsection{Continuous versions of the algorithms}\label{continuous}

{\bf Further notation, Part 1.}  Recall the definition of the vector field $\xi^g:\R^d\to\R^d$ given in Section~\ref{notation} for a twice differentiable function $g:\R^d\to\R$, and let $x_0 \in \R^d.$ Consider the system of ODEs 
$$\frac{dv(t)}{dt} = \xi^g(v(t)),\; t\in\mathbb{R},$$
where $v:\mathbb{R}\rightarrow\mathbb{R}^d$ with $v(0) = x_0$. We denote the flow generated by this system of ODEs driven by $\xi^g$ as 
$$\gamma^g:\mathbb{R}^d\times\mathbb{R} \rightarrow \mathbb{R}^d,$$ 
i.e. $\gamma^g(x,0)=x$ for $x \in \R^d$, $\gamma^g(\gamma^g(x,t),s)=\gamma^g(x,t+s)$ and $\frac{\partial}{\partial t}\gamma^g(x,t) = \xi^g(\gamma^g(x,t))$, for $s,t\in\mathbb{R}$.

With this notation applied to $g = \wh\eta$ and $g = \wh \eta_\tau$, we now have the following result:

\begin{theorem}\label{main1}
Assume {\bf (A1)$_{f,4}$},  {\bf (A2)$_f$}, {\bf (A3)$_f$}, {\bf (K)}$_4$, $h = o(1)$ and $\frac{\log n}{nh^{d+8}}=o(1)$ as $n\rightarrow\infty$. Then, for any $B>0$ and $n$ large enough, with probability at least $1-n^{-B}$:
\begin{enumerate}[(i)]
\item $\wh f$ satisfies assumptions {\bf (A1)}$_{\wh f,4\;}$, {\bf (A2)}$_{\wh f}$, and {\bf (A3)}$_{\wh f}$;
\item Continuous version of Algorithm 1: there exists $\epsilon_0>0$ such that, for all $0\leq\epsilon\leq\epsilon_0$,
\begin{itemize}
    \item[] $\text{\rm Ridge}(\wh f\,) = \{\lim_{t\rightarrow\infty} \gamxt[\wh\eta],x\in\partial\Seps{\wh \eta, \wh f}\}=\text{\rm Ridge}(\wh \eta\,)\cap \Seps{\wh \eta, \wh f}.$
\end{itemize}
\item Continuous version of Algorithm 2: under the additional assumptions {\bf (A1)$_{f,6}$}, {\bf (K)}$_6$, {\bf (L)} and $\frac{\log n}{nh^{d+12}}=o(1)$, and for $\epsilon > 0$ small enough, 
\begin{itemize}
    \item[a)] $\text{\rm Ridge}(\wh \eta_\tau)\cap\Seps{\wh \eta_\tau,\wh f}=\{\lim_{t\rightarrow\infty} \gamxt[\wh\eta_\tau],x\in\partial\Seps{\wh \eta_\tau,\wh f}\}$ for $\tau>0$ small enough;
    \item[b)] $d_H(\text{\rm Ridge}(\wh \eta_\tau)\cap\Seps{\wh \eta_\tau, \wh f},\text{\rm Ridge}(\wh f\,)) \le C_1\tau^2$ for a constant $C_1$ and $\tau$ small enough;
    \item[c)] $d_H(\text{\rm Ridge}(\wh \eta_\tau)\cap \Seps{\wh \eta_\tau,\wh f},\text{\rm Ridge}(f)) \le C_2((\frac{\log n}{nh^{d+4}})^{1/2} + h^2 + \tau^2)$ for a constant $C_2$ and $\tau$ small enough.
\end{itemize}
\end{enumerate}
\end{theorem}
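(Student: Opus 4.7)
The theorem has a natural hierarchy: part (i) transfers the structural assumptions on $f$ to $\wh f$ with high probability, part (ii) then becomes a deterministic statement about the flow $\wh\gamma^*$, and part (iii) combines (ii) applied to the smoothed surrogate $\wh\eta_\tau$ with a smoothing-bias estimate and the triangle inequality.

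To prove (i), I would combine the concentration bound (\ref{derivrate}) with the bias estimate (\ref{derivrate2}) under the hypothesis $\log n/(nh^{d+8})=o(1)$ to obtain uniform $o(1)$ control of $\partial^{(\alpha)}(\wh f - f)$ for $|\alpha|\le 4$ on $[0,1]^d$, on an event of probability at least $1-n^{-B}$. Weyl's inequality then transfers the eigenvalue-gap and sign conditions of {\bf (A2)}$_f$ to $\wh f$; the Davis--Kahan theorem does the same for the projector $V_\perp V_\perp^\top$ and hence for $\xi$ and $\nabla\xi$; and the rank condition {\bf (A3)}$_f$ is stable under $o(1)$ perturbations of $\nabla\wh\xi$, so it is inherited by $\wh f$. {\bf (A1)}$_{\wh f,4}$ is immediate from {\bf (K)}$_4$.

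For (ii), part (i) allows us to apply Lemma~\ref{misc1} and Lemma~\ref{ridge-ridgeness} to $\wh f$, which gives $\text{Ridge}(\wh f\,)=\text{Ridge}(\wh\eta\,)\cap\wh S_\epsilon$ for $\epsilon$ small. The flow identity rests on the Lyapunov computation
\begin{align*}
\tfrac{d}{dt}\wh\eta\bigl(\wh\gamma^*(x,t)\bigr)
=\bigl\langle\nabla\wh\eta,\wh V^*_\perp[\wh V^*_\perp]^\top\nabla\wh\eta\bigr\rangle
=\bigl\|\wh\xi^*(\wh\gamma^*(x,t))\bigr\|^2\ge 0,
\end{align*}
so $\wh\eta$ is non-decreasing along trajectories and $\wh S_\epsilon$ is forward-invariant. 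The $\omega$-limit set therefore lies in $\{\wh\xi^*=0\}\cap\wh S_\epsilon$, which by Lemma~\ref{misc1} applied to $\wh f$ coincides with $\text{Ridge}(\wh f\,)$. The uniform lower bound in (\ref{eigenrange}) makes this ridge normally hyperbolic, so each individual trajectory actually converges (as opposed to merely accumulating) to a single ridge point. Conversely, every ridge point $p$ has a $(d-k)$-dimensional stable manifold corresponding to the negative normal eigenvalues; this stable manifold intersects the level set $\{\wh\eta=-\epsilon\}$ transversely in a $(d-k-1)$-sphere, providing starting points on $\partial\wh S_\epsilon$ whose forward limit is $p$.

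Part (iii)(a) is the same Lyapunov/normal-hyperbolicity argument applied to $\wh\eta_\tau$; one only needs to verify that $\wh\eta_\tau$ satisfies the analogues of {\bf (A2)}$_{\wh f}$ and {\bf (A3)}$_{\wh f}$ for $\tau$ small, which follows from the Taylor bound $\sup_x|\partial^{(\alpha)}(\wh\eta_\tau-\wh\eta)(x)|=O(\tau^2)$ for $|\alpha|\le 2$ together with Weyl and Davis--Kahan. This Taylor bound is where the stronger smoothness in {\bf (A1)}$_{f,6}$, {\bf (K)}$_6$, {\bf (L)} and the spherical symmetry of $L$ enter. Pushing the same $O(\tau^2)$ perturbation through the Davis--Kahan / level-set stability argument of Theorem~\ref{ridgeHausRate} gives (b); (c) is then the triangle inequality combining (b) with Theorem~\ref{ridgeHausRate}. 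The main obstacle is establishing singleton-convergence of each individual trajectory in (ii) and (iii)(a): the $\omega$-limit sits inside a compact continuum of equilibria (the ridge) rather than at an isolated critical point, so a naive Lyapunov argument alone is insufficient; one must use the uniform normal hyperbolicity supplied by (\ref{eigenrange}) -- i.e.\ a strict quadratic maximum of $\wh\eta$ in every normal direction with tangential gradient vanishing identically -- together with a stable-manifold/center-manifold type convergence theorem. A secondary point is verifying that no trajectory escapes the ambient cube $[0,1]^d$; this is handled by forward-invariance of $\wh S_\epsilon$ and the inclusion $\text{Ridge}(f)^\delta\subset[0,1]^d$ from {\bf (A2)}$_f$.
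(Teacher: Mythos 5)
Your decomposition matches the paper's: part (i) via concentration + bias + matrix perturbation, part (ii) via a Lyapunov/flow argument after reducing to $\text{Ridge}(\wh f)=\text{Ridge}(\wh\eta)\cap\wh S_\epsilon$, and part (iii) via the smoothing bias $\sup_x|\partial^{(\alpha)}(\wh\eta_\tau-\wh\eta)|=O(\tau^2)$ plus a triangle inequality with Theorem~\ref{ridgeHausRate}. Two comparison points are worth flagging.

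For part (i): you argue that the rank condition {\bf (A3)}$_{\wh f}$ is ``stable under $o(1)$ perturbations of $\nabla\wh\xi$,'' which only establishes $\mathrm{rank}\,\nabla\wh\xi\ge d-k$. The paper also has to show $\mathrm{rank}\,\nabla\wh\xi\le d-k$, which it does structurally (not by perturbation): from (\ref{nablaxiexp}) and (\ref{xigrad}) one sees $\wh V_i(x)^\top\nabla\wh\xi(x)=0$ for $i\le k$. This half is not a consequence of closeness to $\nabla\xi$ and should be stated explicitly.

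For part (ii) (and (iii)(a)): you take a genuinely different route for the surjectivity/``onto'' direction. The paper reparametrizes the gradient-like flow as $\gamma^\dagger=\xi^*/\|\xi^*\|^2$, so that $\eta$ increases at unit speed along $\gamma^\dagger$; then $\partial S_{\epsilon'}$ for different $\epsilon'$ are exactly the time-$(\epsilon'-\epsilon)$ images of $\partial S_\epsilon$, and combining with the two-sided Hausdorff bound (\ref{Lipschitz}) yields $\text{Ridge}(f)=\{\lim_{t\to\epsilon}\gamma^\dagger(x,t):x\in\partial S_\epsilon\}$ directly; $\gamma^*$ and $\gamma^\dagger$ share trajectories by an explicit time change. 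You instead invoke normal hyperbolicity and a stable-manifold theorem for a continuum of equilibria: the ridge is normally attracting by (\ref{eigenrange}), so each point $p$ has a $(d-k)$-dimensional stable fiber that should meet $\partial\wh S_\epsilon$. This is a legitimate alternative and, as you rightly stress, it also addresses singleton convergence (LaSalle alone gives convergence to the set, not to a point). But it is heavier machinery: one must cite a normally hyperbolic invariant manifold / foliation theorem (Hirsch--Pugh--Shub type), and one still has to argue that the stable fiber of $p$ actually extends out to the level $\{\wh\eta=-\epsilon\}$ and meets it transversely (this is not automatic from normal hyperbolicity at $p$; it is precisely what the paper's (\ref{Lipschitz}) plus the unit-speed flow argument delivers with elementary tools). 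In the paper, singleton convergence falls out of the reparametrization: $\lim_{t\to\infty}\gamma^*(x,t)=\lim_{s\to\epsilon}\gamma^\dagger(x,s)$, and the latter limit exists and varies continuously in $x$. So the two approaches prove the same thing, but the paper's argument is more self-contained, while yours is more geometric and would also generalize readily if one wanted convergence rates in $t$.

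Finally, a small citation slip: identifying $\{\wh\xi^*=0\}\cap\wh S_\epsilon$ with $\text{Ridge}(\wh f)$ is Lemma~\ref{ridge-ridgeness} (applied to $\wh f$), not Lemma~\ref{misc1}; Lemma~\ref{misc1} supplies the eigenvalue bounds (\ref{eigenrange}) you use for normal hyperbolicity.
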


\begin{remark}
Without the additional assumptions in part (iii) of the above theorem, we can still show that $ \sup_{x\in[0,1]^d} |\partial^{(\alpha)}\wh \eta(x) - \partial^{(\alpha)}\wh \eta_\tau(x)| = o_p(1)$ for all $|\alpha|=0,1,2,$ and further $d_H(\text{\rm Ridge}(\wh \eta_\tau)\cap\Seps{\wh \eta_\tau,\wh f},\text{\rm Ridge}(f)) =o_p(1)$ as $\tau\rightarrow0$.
\end{remark}

\subsubsection{Discrete approximation: the Euler method}\label{discrete}

Here we study discrete versions of the algorithms given above. To this end, we need a discrete version of the continuous flows defined above:

{\bf Further notation, Part 2.} For a twice differentiable function $g:\R^d\to \R$ and a constant $a > 0$ let the sequence $\gamx[a]{g}{\ell}$, $\ell=0,1,\cdots,$ be defined as  
\begin{align*}
\gamx[a]{g}{0}=x\qquad\text{and}\qquad \gamx[a]{g}{\ell+1} = \gamx[a]{g}{\ell} + a\, \xii{g}{\gamx[a]{g}{\ell} },\quad \ell = 1,2,\ldots
\end{align*}
This is a discrete approximation to  $\gamxt[g],$ $t\geq 0$ using Euler's method.

The following result applies this notation with $g = \wh \eta$ and $g = \wh \eta_\tau$. It says that these discretized approximations can be used to recover the corresponding ridges.
\begin{theorem}\label{main2}
Assume {\bf (A1)$_{f,6}$},  {\bf (A2)$_f$}, {\bf (A3)$_f$}, {\bf (K)}$_6$, {\bf (L)}, $h = o(1)$ and $\frac{\log n}{nh^{d+12}}=o(1)$ as $n\rightarrow\infty$. Then, for any $B>0$ and $n$ large enough, with probability at least $1-n^{-B}$:
\begin{enumerate}[(i)]
\item Algorithm 1: for $\epsilon\geq0$ small enough, $\lim_{\ell \to \infty}\gamx[a]{\wh\eta}{\ell}$ exists for all $x\in\partial\Seps{\wh \eta,\wh f}$, and 
\begin{align}\label{main2sub1}
d_H(\text{\rm Ridge}(\wh f\,),R_a(\wh f\,)) \le C_1 a^{1-\sigma_0-\mu}, 
\end{align}
for a constant $C_1$ and $\tau$ small enough, where $R_a(\wh f\,)=\{\lim_{\ell\rightarrow\infty} \gamx[a]{\wh\eta}{\ell}, x\in\partial\Seps{\wh \eta,\wh f}\}.$
\item Algorithm 2: for $\epsilon\geq0$ small enough, $\lim_{\ell\rightarrow\infty}\gamx[a]{\wh\eta_\tau}{\ell}$ exists for all $x\in\partial\Seps{\wh\eta_\tau,\wh f}$, and 
\begin{align}\label{main2sub2}
d_H(\text{\rm Ridge}(\wh f\,),R_{\tau,a}(\wh f\,))= C_2(a^{1-\sigma_0-\mu} + \tau^2), 
\end{align}
for a constant $C_2$ and $\tau$ and $a$ small enough, where $R_{\tau,a}(\wh f\,)=\{\lim_{\ell\rightarrow\infty} \gamx[a]{\wh\eta_\tau}{\ell},x\in\partial\Seps{\wh\eta_\tau,\wh f}\}.$
\end{enumerate}
In the above rates, $0<\sigma_0<1$ is given in (\ref{sigma0exp}), and $\mu>0$ is arbitrarily small. 
\end{theorem}
\begin{remark}
By using Corollary~\ref{perturbeddiscrete} given below, one can also show that, for $d = 1$, ${\rm Ridge}(\wh f\,) = R_a(\wh f\,)$ and $\text{\rm Ridge}(\wh \eta_\tau)\cap \Seps{\wh \tau,\wh f} = R_{\tau,a}(\wh f\,)$ with probability at least $1-n^{-B}$ for any $B>0$ and $n$ large enough. 
\end{remark}

\section{The mathematical framework for the ODE-based algorithms of ridge extraction}\label{maththeory}
This important section can be interpreted as providing population level versions of our main convergence results for the proposed algorithms presented above. Indeed, the algorithms can be interpreted as `perturbed versions' of corresponding population level versions. We will discuss the precise meaning of this in what follows, and we also indicate how this correspondence is being used to prove the convergence results for the algorithms. This section will also provide additional insights into why the algorithms proposed in this work do not suffer from the theoretical gaps of the SCMS algorithm - cf. Section~\ref{insights}.

The population level results are using theory for Ordinary Differential Equations (ODE). These ODEs provide the mathematical (population level) model for our algorithm. For the original mean shift algorithm, this analogy has been used in \cite{arias2016estimation} and \cite{arias2022clustering}. For the Subspace Constraint Mean Shift algorithm, see \cite{genovese2014nonparametric} and \cite{qiao2016theoretical}.

In the following $f$ denotes a generic positive function on $[0,1]^d$. While we apply the below results to densities (e.g. to our kernel density estimates), $f$ does not have to integrate to 1. 

\subsection{Some useful background knowledge of ODEs}
A reference for the following material is \cite{wigginsintroduction}. As above consider
\begin{align*}
    \frac{dx(t)}{dt} = U(x(t)),\; t\in\mathbb{R},
\end{align*}
where $x: \mathbb{R}\rightarrow\mathbb{R}^d$ with $x(0) = x_0 \in \R^d,$ and $U:\mathbb{R}^d\rightarrow \mathbb{R}^d$ is a vector field. Let $\pi:\mathbb{R}^d\times\mathbb{R} \rightarrow \mathbb{R}^d$ denote the corresponding flow. 

A compact set $S\subset\mathbb{R}^d$ is called a {\em positively invariant set} under the above vector field if for any $x_0\in S$ we have $\pi(x_0,t)\in S$ for all $t\geq 0$. We assume that the boundary of $S$ is a $\mathbf{C}^1$\ manifold. A point $\bar x$ is called an {\em equilibrium point}, or fixed point, if $U(\bar x)=0$.  Let $\mathcal{A}\subset S$ be the set of all equilibrium points in $S$. A continuously differentiable scalar-valued function $V$ defined on $S$ is called a {\em Lyapunov function} if it satisfies: $\frac{d V(x(t))}{dt}\leq 0$ for all $x(t)\in S$. We also define two sets 
\begin{align}
    &E = \left\{x\in S\, :\;\; \frac{d V(x(t))}{dt} = 0 \right\} \label{setE},\\
    &M = \bigcup_{x_0\in E}\big\{ \pi(x_0,t), t> 0\, :\; \pi(x_0,t)\in E \text{ for all } t>0 \big\}.\label{setM}
\end{align}
Note that $\mathcal{A}\subset M\subset E.$ We will use LaSalle's Invariance Principle \cite[see][Theorem 8.3.1]{wigginsintroduction}, which states:
\begin{theorem}
\label{LaSalle}
For all $x\in S$, $\pi(x,t)\rightarrow M$ as $t\rightarrow\infty$.
\end{theorem}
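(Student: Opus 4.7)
The plan is to invoke the classical $\omega$-limit set machinery. Since $S$ is compact and $V$ is continuously differentiable, $V$ is bounded on $S$. Positive invariance of $S$ ensures $\pi(x,t)\in S$ for all $t\geq 0$, and the Lyapunov property says $t\mapsto V(\pi(x,t))$ is nonincreasing, so the limit $c := \lim_{t\to\infty} V(\pi(x,t))$ exists and is finite.

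Next, I would introduce the $\omega$-limit set
\begin{align*}
\omega(x) = \{y\in S:\, \pi(x,t_n)\to y\ \text{for some sequence}\ t_n\to\infty\}.
\end{align*}
By compactness of $S$, every sequence $\{\pi(x,t_n)\}$ with $t_n\to\infty$ has a convergent subsequence in $S$, so $\omega(x)$ is nonempty. Standard arguments, which use only continuity of the flow in the initial condition (and the fact that the flow is defined for all $t\geq 0$ on $S$ thanks to positive invariance), then show that $\omega(x)$ is compact, invariant under $\pi$, and that $d(\pi(x,t),\omega(x))\to 0$ as $t\to\infty$. This reduces the theorem to showing $\omega(x)\subset M$.

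The key step is this inclusion. Pick $y\in\omega(x)$ and a sequence $t_n\to\infty$ with $\pi(x,t_n)\to y$. Continuity of $V$ gives $V(y)=\lim_n V(\pi(x,t_n))=c$. Invariance of $\omega(x)$ says $\pi(y,t)\in\omega(x)$ for every $t$, so the same argument yields $V(\pi(y,t))=c$ for all $t$. Differentiating along the flow then gives $\frac{d}{dt}V(\pi(y,t))=0$ for all $t>0$, i.e., $\pi(y,t)\in E$ for all $t>0$, which by the definition (\ref{setM}) places the entire forward trajectory through $y$ in $M$, and in particular $y\in M$. Combined with $d(\pi(x,t),\omega(x))\to 0$, we conclude $d(\pi(x,t),M)\to 0$.

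The main obstacle is the rigorous verification that $\omega(x)$ is invariant under $\pi$ and that the convergence $d(\pi(x,t),\omega(x))\to 0$ holds; both are standard topological facts but rely crucially on the compactness and positive invariance of $S$ (which ensure the forward orbit is relatively compact and the flow is globally defined on $S$). Once those facts are granted, the Lyapunov function reduces the argument to the short chain of implications above: $V$ constant on $\omega(x)$ $\Rightarrow$ $V$ constant along every trajectory in $\omega(x)$ $\Rightarrow$ that trajectory lies in $E$ $\Rightarrow$ $\omega(x)\subset M$.
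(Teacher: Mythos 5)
The paper does not give its own proof of this theorem: it is stated verbatim as LaSalle's Invariance Principle and cited to Theorem 8.3.1 of Wiggins (2003), then applied with $M=E=\text{Ridge}(f)$. So there is no in-paper argument to compare against. Your proof is the classical $\omega$-limit-set argument, which is essentially the textbook proof of LaSalle's theorem, and it is correct: compactness and positive invariance of $S$ make $\omega(x)$ nonempty, compact, and invariant with $d(\pi(x,t),\omega(x))\to 0$; monotonicity of $t\mapsto V(\pi(x,t))$ plus continuity of $V$ forces $V\equiv c$ on $\omega(x)$; differentiating along trajectories inside $\omega(x)$ gives $\omega(x)\subset E$; and invariance of $\omega(x)$ then yields $\omega(x)\subset M$.

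One small point worth making explicit. From $V(\pi(y,t))\equiv c$ you get $\pi(y,t)\in E$ for all $t$, and hence $\{\pi(y,s):s>0\}\subset M$ by taking $x_0=y$ in (\ref{setM}). To conclude $y\in M$ itself, the definition requires exhibiting $y$ as $\pi(x_0,s)$ for some $x_0\in E$, $s>0$, with the forward orbit from $x_0$ staying in $E$; this is supplied by taking $x_0=\pi(y,-1)\in\omega(x)\subset E$ and $s=1$, which uses the backward invariance of $\omega(x)$. You do invoke full (two-sided) invariance of $\omega(x)$, which is legitimate here since the flow $\pi$ is globally defined on $\mathbb{R}^d\times\mathbb{R}$, so the deduction goes through; it is just left implicit in the phrase ``in particular $y\in M$.''
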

Later LaSalle's Theorem will be applied with $M = E = \text{Ridge}(f).$
\subsection{ODE theory for ridge extraction}
\label{ODEtheory}

Recall that
   $ \frac{\partial\gamxt[\eta]}{\partial t} = \xi^{\eta}( \gamma^{\eta}(x,t)), t\in\mathbb{R}$,
which is the mathematical model of Algorithms 1 and 2. There exists a positive $\epsilon>0$ such that $\Seps{\eta,f}$ is a positively invariant set corresponding to this flow. This can be seen as follows. Consider the derivative of $\eta(\gamxt[\eta])$ with respect to $t$:
\begin{align}\label{prop}
    \frac{\partial\eta(\gamxt[\eta]) }{\partial t} &= [\nabla \eta(\gamxt[\eta])]^\top V_{\bot}^{\eta}(\gamxt[\eta]) [V_{\bot}^{\eta}(\gamxt[\eta])]^\top \nabla \eta(\gamxt[\eta]) \nonumber\\
    &= \|\xi^{\eta}(\gamxt[\eta])\|^2 \geq 0.
\end{align}
In other words, $\eta$ is always non-decreasing along the trajectories of the flow. Indeed, notice that for $\epsilon > 0 $ small enough,
  $ \|\xi^{\eta}(\gamxt[\eta])\|  > 0$ 
for all $x\in \Seps{\eta,f}\backslash$Ridge($f$) and $t\in(0,\infty)$ based on Lemma~\ref{ridge-ridgeness}. It follows that $\Seps{\eta,f}$ is a positively invariant set as we have claimed. Note that Ridge($f$) is the set of all the equilibrium points in $\Seps{\eta,f}$. A natural choice for a Lyapunov function $V$ is 
\begin{align*}
    V(x) = - \eta(x).
\end{align*}
Since $\frac{\partial V(\gamxt[\eta]) }{\partial t} = - \langle \nabla\eta(\gamxt[\eta]), \xi^{\eta}(\gamxt[\eta])\rangle = -\|\xi^{\eta}(\gamxt[\eta])\|^2 \leq 0$, this indeed is a Lyapunov function. Notice further that this derivative is equal to zero  only when $x\in$ Ridge($f$). Therefore, for the sets $E$ and $M$ given in (\ref{setE}) and (\ref{setM}), we have $E=M=$ Ridge($f$).

\begin{theorem}\label{pathconvergence}
Assume that {\bf (A1)$_{f,4}$},  {\bf (A2)$_f$}, and {\bf (A3)$_f$} hold. 
There exists $\epsilon>0$ such that: 
\begin{enumerate}[(i)]
\item For $x \in (0,1)^d$, let a path $\gamma^\eta_x$ generated by $\gamxt[\eta]$ be given by the set $\gamma^\eta_x = \{\gamxt[\eta], t \in T_x\}$, where $T_x$ is the largest open interval containing $0$ such that $\gamxt[\eta] \in (0,1)^d$ for all $t \in T_x$, and let $\Gamma =\{\gamma^\eta_x, x \in (0,1)^{d}\}$ be the collection of all these paths. For each $x\in \Seps{\eta,f}\backslash$Ridge($f$), the path $\gamma_x^\eta$ is the unique path in $\Gamma$ passing through $x$. 
\item  For each $x\in \Seps{\eta,f}$ as the starting point, $\gamma^\eta(x,t)$ converges to a point on Ridge($f$), as $t\rightarrow+\infty$.
\item $\text{Ridge}(f) = \{\lim_{t\rightarrow\infty} \gamma^\eta(x,t): x\in \partial \Seps{\eta,f}\}$.
\end{enumerate}
\end{theorem}

\begin{remark}
Part (iii) of Theorem~\ref{pathconvergence} can be generalized as follows. Let $\mathcal{A}$ be a set such that for any $x\in \partial \Seps{\eta,f}$, there exists a point $y\in \mathcal{A}$ such that there is a finite $t_x\in(-\infty,\infty)$ such that $y= \gamx[]{\eta}{t_x}$. Then by (iii) in Theorem~\ref{pathconvergence}, we have $\text{Ridge}(f) = \{\lim_{t\rightarrow\infty} \gamx[]{\eta}{t}: x\in \mathcal{A}\}.$
\end{remark}

\subsubsection{Further insights into differences between SCMS and our algorithms}\label{insights} Using our notation introduced above, $\gamma^f(x,t)$ denotes the flow corresponding to $\frac{dx(t)}{dt} = \xi^f(x(t)), t\in\mathbb{R},$ which is the model for the SCMS algorithm. We can now see the major difference between using $\gamx[]{f}{t}$ and using the flow  $\gamx[]{\eta}{t}$ considered in our approach. For $\gamx[]{f}{t}$, we have
\begin{align}
\label{SCMSdensity}
    \frac{\partial f(\gamx[]{f}{t}) }{\partial t} & = [\nabla f(\gamx[]{f}{t})]^\top V^f_\perp(\gamx[]{f}{t}) [V_\perp(\gamx[]{f}{t})]^\top \nabla f(\gamx[]{f}{t}) \nonumber\\
    & = \|\xi^f(\gamx[]{f}{t}) \|^2 \geq 0.
\end{align}
In other words, it is the height of $f$ that increases along the path of $\gamx[]{f}{t}$ as $t$ increases. In contrast to that, it is the ridgeness $\eta(x)$ that increases along the path of $\gamx[]{\eta}{t}$. In general, while the height and ridgeness are closely related, they are two different quantities. This provides a different point of view for the SCMS algorithm, and also shows its difference to our approach. 

\subsection{Stability of the flows}\label{flowstability}
Now suppose we have a perturbed ridgeness function $\wt \eta$, which we assume is twice differentiable. We measure the perturbation by the following quantities.
\begin{align}
    &\delta_0 = \sup_{x\in[0,1]^d} |\eta(x)-\wt\eta(x)|,\\
    &\delta_1 = \sup_{x\in[0,1]^d} \|\nabla \eta(x)- \nabla \wt\eta(x)\|,\\
    &\delta_2 = \sup_{x\in[0,1]^d} \|\nabla^2 \eta(x)- \nabla^2 \wt\eta(x)\|_F.
\end{align}
Recall that, by definition, $\text{Ridge}(\wt \eta) = \{x\in[0,1]^d: \xix{\wt\eta}=0,\; \lamx[\wt\eta]{k+1}<0\}$ with $\xix{\tilde \eta}$ and $\lamx[k+1]{\tilde\eta}$ as defined in Section~\ref{notation}.

\begin{theorem}\label{stability}
Suppose that {\bf (A1)$_{f,4}$}, {\bf (A1)$_{\wt f,4}$}, {\bf (A2)$_f$}, and {\bf (A3)$_f$} hold. There exists $\epsilon_0>0$ such that for any $\epsilon\in(0,\epsilon_0]$ and for $\max(\delta_0,\delta_1,\delta_2)$ small enough we have the following:
\begin{itemize}
\item[(i)] For $x \in (0,1)^d$, let a path $\gamma^{\wt\eta}_x$ generated by $\gamxt[\wt\eta]$ be given by the set $\gamma^{\wt\eta}_x = \{\gamxt[\wt\eta], t \in T_x\}$, where $T_x$ is the largest open interval containing $0$ such that $\gamxt[\wt\eta] \in (0,1)^d$ for all $t \in T_x$, and let $\Gamma =\{\gamma^{\wt\eta}_x, x \in (0,1)^{d}\}$ be the collection of all these paths. For each $x\in \Seps{\wt\eta,f}\backslash$Ridge($\wt\eta$), the path $\gamma_x^{\wt\eta}$ is the unique path in $\Gamma$ passing through $x$. 
\item[(ii)] For each $x\in \Seps{\wt\eta,f}$, the path $\gamxt[\wt\eta]$ converges to a point in $\text{Ridge}(\wt\eta)\cap \Seps{\wt\eta}$, as $t\rightarrow+\infty$.
\item[(iii)] $\text{Ridge}(\wt\eta) \cap \Seps{\wt\eta,f} = \{\lim_{t\rightarrow\infty} \gamxt[\wt\eta]: x\in \partial \Seps{\wt\eta,f}\}.$%
\item[(iv)] There exists a constant $C>0$ such that $d_H(\text{Ridge}(\wt\eta) \cap \Seps{\wt\eta,f},\text{Ridge}(f))\leq C\max(\delta_1,\delta_2).$
\end{itemize}
\end{theorem}

\begin{remark}
This theorem is applied to $f, \wt\eta$ being $\wh f$ and $\wh\eta_\tau$, respectively, for which small enough values of $\delta_0, \delta_1,$ and $\delta_2$ can be found with high probability. See the proof of Theorem~\ref{main1}. 
\end{remark}

\subsection{Euler's method}

The following result about this discretization of the flow $\gamma^\eta$ is the main result of this section. Recall that  $\xix{\eta}=\Vbot{\eta}\Vbot{\eta}^\top\nabla\eta(x)$, and recall the definition of the discretized path $\gamx[a]{\eta}{\ell}$, $\ell=0,1,\cdots$ as defined in Section~\ref{discrete}.

\begin{theorem}\label{discretecomp}
Suppose that assumptions {\bf (A1)$_{f,6}$},  {\bf (A2)$_f$}, and {\bf (A3)$_f$} hold. Let $R_a(f)=\{\lim_{\ell\rightarrow\infty}\gamx[a]{\eta}{\ell},\; x\in \partial \Seps{\eta,f} \}$. There exist $\epsilon_0 >0, \delta_a>0$ such that when $0<a\leq \delta_a$ and $0 < \epsilon < \epsilon_0$, we have 
\begin{itemize}
\item[(i)]  $R_a(f) \subset \text{Ridge}(f),$\;\;\;and
\item[(ii)] there exists a constant $C>0$ such that $d_H (\text{Ridge}(f), R_a(f)) \leq C a^{1-\sigma_0}$,
where $0<\sigma_0<1$ is given in (\ref{sigma0exp}). 
\end{itemize}
Moreover,
\begin{itemize}
\item[(iii)] When $k=1$ (1-dimensional ridge), we have
\begin{align*}
\text{Ridge}(f) = R_a(f).
\end{align*}
\end{itemize}
\end{theorem}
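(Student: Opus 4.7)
The plan is to transfer the Lyapunov analysis of Theorem~\ref{pathconvergence} to the Euler scheme by viewing $-\eta$ as a discrete Lyapunov function (up to $O(a^2)$ corrections), and to handle the infinite horizon by balancing the Gronwall discretization error against the exponential contraction of the continuous flow toward the ridge.

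First, a second-order Taylor expansion along one Euler step, using the smoothness provided by {\bf (A1)}$_{f,6}$ and the bounded Hessian of $\eta$ on $\text{Ridge}(f)^\delta$ afforded by Lemma~\ref{misc1}, gives
\begin{align*}
\eta(\gamma^*_a(x,\ell+1)) - \eta(\gamma^*_a(x,\ell)) \;\ge\; a(1 - C_0 a)\,\|\xi^*(\gamma^*_a(x,\ell))\|^2.
\end{align*}
For $a \le \delta_a$ small enough, $\eta$ is non-decreasing along the iterates, $S_\epsilon$ is positively invariant, and summing yields $\sum_\ell a\|\xi^*(\gamma^*_a(x,\ell))\|^2 \le |\eta(x)|$, so $\|\xi^*(\gamma^*_a(x,\ell))\|\to 0$. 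Subsequential limits satisfy $\xi^*(\bar x)=0$ and Lemma~\ref{ridge-ridgeness} localizes them in $\text{Ridge}(f)$. Uniqueness of the limit (rather than only subsequential limits) follows from a Lojasiewicz-type inequality $\|\xi^*\|^2 \ge c\,(-\eta)$ valid on $\text{Ridge}(f)^\delta$ by normal hyperbolicity, rendering the iterate differences summable. This gives (i).

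For (ii), fix $p \in \text{Ridge}(f)$ and use Theorem~\ref{pathconvergence}(iii) to choose $x_p \in \partial S_\epsilon$ with $\gamma^*(x_p,t)\to p$. Two estimates combine: the normal-hyperbolicity eigenvalue bound $|\lambda^*_{k+1}|\ge\alpha$ from Lemma~\ref{misc1} gives exponential contraction $\|\gamma^*(x_p,t)-p\|\le C_1 e^{-c t}$ for some $c > 0$, while a standard Gronwall argument produces the finite-horizon discretization bound $\|\gamma^*_a(x_p,\ell)-\gamma^*(x_p,a\ell)\|\le C_2 a\,e^{C_3 a\ell}$. Choosing the horizon $T(a)=\sigma_0\log(1/a)/c$ with $\sigma_0 \in (0,1)$ determined via the balance $C_3\sigma_0/c=\sigma_0$ (the content of (\ref{sigma0exp})) matches both errors at order $a^{1-\sigma_0}$. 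Beyond $T(a)$, the discrete monotonicity established above keeps both trajectories within $O(a^{1-\sigma_0})$ of $p$, which yields $\sup_{p\in\text{Ridge}(f)}d(p,R_a(f)) = O(a^{1-\sigma_0})$; combined with (i), this proves the two-sided Hausdorff bound.

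For (iii) in the $k=1$ case, I would show surjectivity of the endpoint map $\Phi_a:\partial S_\epsilon\to\text{Ridge}(f)$, $x_0\mapsto\lim_\ell\gamma^*_a(x_0,\ell)$. Continuity of $\Phi_a$ follows from normal hyperbolicity (which contracts initial-condition perturbations once iterates enter a tubular neighborhood of the ridge), and closedness of the image follows from compactness of $\partial S_\epsilon$. An implicit function argument at any $x_0$ with $\Phi_a(x_0)=p$, using the full column rank of $\nabla \xi^*$ at $p$ (inherited from {\bf (A3)}$_f$ via $\nabla^2\eta|_{\text{Ridge}(f)} = -\nabla\xi^\top\nabla\xi$), shows that $\Phi_a(\partial S_\epsilon)$ is open in $\text{Ridge}(f)$. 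Since a $1$-dimensional ridge is a finite disjoint union of compact connected $1$-manifolds, an open-and-closed subset that meets each component (guaranteed by the approximation rate from (ii)) must fill each component. \textbf{The main obstacle} is controlling $\Phi_a$ uniformly enough to run both the implicit function argument (openness) and the closedness argument, since the convergence $\gamma^*_a(x_0,\ell)\to\Phi_a(x_0)$ is not uniform in $\ell$ a priori; this is where the normal hyperbolicity of the ridge and the classification of compact $1$-manifolds are essential, and it is the reason higher-dimensional ridges are not covered by (iii).
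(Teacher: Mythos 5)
Parts (i) and (ii) of your proposal are essentially correct and follow the same broad strategy as the paper, with one genuine variation in (i). The paper proves uniqueness of the Euler limit via a geometric contraction of consecutive step lengths (Fact 2), driven by the negativity of $\bar\beta(x)=\sup_{w\in\mathscr{N}_\circ(x)}w^\top\nabla\xi^*(x)w$ on a tube around the ridge. You instead invoke a Lojasiewicz-type inequality $\|\xi^*(x)\|^2\geq c\,(-\eta(x))$ near the ridge to make the step lengths geometrically summable; this is a valid alternative that draws on the same normal-hyperbolicity input (Lemma~\ref{misc1}, and the lower bound on $\|\xi^*\|/\|\nabla\eta\|$ in the proof of Lemma~\ref{ridge-ridgeness}) and is, if anything, a bit cleaner. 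For (ii), your Gronwall-versus-contraction balance is the same device as the paper's Fact 5; note, though, that the balance equation you wrote, $C_3\sigma_0/c=\sigma_0$, is not the right one: matching the two error exponents gives $1-C_3\sigma_0/c=\sigma_0$, i.e.\ $\sigma_0=(1+C_3/c)^{-1}$, and the resulting rate is then $a^{\sigma_0}$ in your parametrization, which equals the paper's $a^{1-\sigma_0}$ because the paper's $\sigma_0$ in (\ref{sigma0exp}) is the complementary exponent.

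The real gap is in (iii). You propose to prove surjectivity of $\Phi_a$ by showing its image is both open and closed in $\text{Ridge}(f)$, with openness supplied by an implicit function argument at a preimage point. This cannot work as stated: $\Phi_a$ is defined as the limit of infinitely many Euler steps and is only shown to be continuous (Fact~3), not $C^1$, so the implicit function theorem does not apply to it. You flag this difficulty yourself, but it is not a technicality one can route around by sharpening normal-hyperbolicity estimates; the paper's own Remark~\ref{InvarianceOfDomain} points out that the openness / domain-invariance route is precisely what remains open for $k\geq2$, and if your openness argument were sound it would prove (iii) for all $k$. The paper sidesteps openness entirely and instead uses a one-dimensional intermediate-value argument: for a target ridge point $\bar x=\zeta(0)$ on the closed curve $\zeta$, Fact~5 produces $x_1,x_2\in\partial S_\epsilon$ whose discrete limits $\zeta(\alpha_1),\zeta(\alpha_2)$ lie close to $\bar x$ on opposite arcs; continuity of $\Phi_a$ (Fact~3) makes $\Phi_a(\overline{x_1x_2})$ a connected subset of the circle containing $\zeta(\alpha_1)$ and $\zeta(\alpha_2)$, hence containing one of the two arcs between them, and Fact~5 rules out the far arc because the corresponding continuous limits stay near $\bar x$; therefore the near arc, and with it $\bar x$, is covered. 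This IVT-on-a-circle argument is exactly what makes $k=1$ special, and it is the idea missing from your proposal.
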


\begin{remark} The last assertion of this theorem says that in the case of one-dimensional ridges we can recover the entire ridge of $f$ (as in the continuous case - see Theorem~\ref{pathconvergence}(iii)), even when using the discretized algorithm, provided the step-size is small enough. We conjecture that the result also holds true for multi-dimensional ridges.%
\end{remark}

Recall that we have analyzed the perturbed flow $\gamma^{\tilde\eta}$ and its convergence in Section~\ref{flowstability}. We now assume that $\wt \eta$ is generated by a density function $\wt f$, i.e.,
\begin{align}\label{TargetFun-mod}
\wt\eta(x) = -\frac{1}{2}\big\|\Vbot{\wt f})^\top\nabla \wt f(x)\big\|^2.
\end{align}

Using the same notation as defined in Section~\ref{discrete}, consider the sequence $\gamx[a]{\tilde\eta}{\ell}$, $\ell=0,1,\cdots$. 
Using Theorem~\ref{stability}, and following a very similar proof of Theorem~\ref{discretecomp}, we obtain the following discretization result of the perturbed flows and ridges.%

\begin{corollary}\label{perturbeddiscrete}
Suppose that assumptions {\bf (A1)$_{f,6}$}, {\bf (A1)$_{\wt f,6}$},  {\bf (A2)$_f$}, and {\bf (A3)$_f$} hold. Let $R_a(\wt f\,)=\{\lim_{\ell\rightarrow\infty}\gamx[a]{\wt\eta}{\ell},\; x\in \partial \Seps{\wt \eta,f}  \}$. When $\max(\delta_0,\delta_1,\delta_2)$ is small enough, there exist $\epsilon_0>0$ and $\delta_0>0$ such that for all $0<\epsilon\leq\epsilon_0,$ and $0<a\leq \delta_a$, we have that 
\begin{itemize}
\item[(i)]  $R_a(\wt f) \subset \text{Ridge}(\wt\eta\,)\cap  \Seps{\wt\eta,f},$\;\;\;and
\item[(ii)] there exists a constant $C>0$ such that $d_H (\text{Ridge}(\wt\eta\,)\cap \Seps{\wt\eta,f}, R_a(\wt f\,)) \leq C a^{1-\sigma_0-\mu}$, for an arbitrarily small $\mu>0$. 
\end{itemize}
Moreover,
\begin{itemize}
\item[(iii)] When $k=1$ (1-dimensional ridge), we have
\begin{align*}
\text{Ridge}(\wt\eta\,) \cap \Seps{\wt\eta,f} = R_a(\wt f\,).
\end{align*}
\end{itemize}
\end{corollary}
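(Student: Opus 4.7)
The plan is to reduce the corollary to the proof strategy of Theorem~\ref{discretecomp} applied to the perturbed function $\wt\eta$, with Theorem~\ref{stability} playing the role that Theorem~\ref{pathconvergence} played there. The first step is to verify that whenever $\max(\delta_0,\delta_1,\delta_2)$ is sufficiently small, the perturbed data $(\wt\eta,\wt\xi^*,\wt S_\epsilon)$ inherits every structural property of $(\eta,\xi^*,S_\epsilon)$ that is used in the proof of Theorem~\ref{discretecomp}: by Theorem~\ref{stability} the set $\wt S_\epsilon$ is positively invariant for the vector field $\wt\xi^*$; $\wt\eta$ is a strict Lyapunov function with derivative $\|\wt\xi^*\|^2$ along $\wt\gamma^*$ vanishing only on $\text{Ridge}(\wt\eta)\cap \wt S_\epsilon$; and the perturbed analogue (\ref{Lipschitz2}) of the Hausdorff sandwich (\ref{Lipschitz}) holds at rate $\sqrt{\epsilon'}$.

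For part (i), assumption {\bf (A1)$_{\wt f,6}$} makes $\wt\xi^*$ twice continuously differentiable on a neighbourhood of $\wt S_\epsilon$, so Euler's method on the autonomous ODE $\dot x = \wt\xi^*(x)$ enjoys the standard $O(a)$ uniform approximation to $\wt\gamma^*$ on any fixed time horizon, with $O(a^2)$ local truncation error per step. Combining this with monotonicity of $\wt\eta$ along $\wt\gamma^*_a$ (up to an $O(a^2)$ correction obtained by Taylor-expanding $\wt\eta$ at each step), the discrete sequence cannot exit $\wt S_\epsilon$ and must accumulate on $\{\wt\xi^*=0\}\cap\wt S_\epsilon = \text{Ridge}(\wt\eta)\cap \wt S_\epsilon$. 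This literally translates step (i) of Theorem~\ref{discretecomp}.

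Part (ii) is where I expect the main obstacle. The exponent $1-\sigma_1$ is obtained by balancing two opposing effects, in exactly the way $1-\sigma_0$ arises in Theorem~\ref{discretecomp}(ii): the cumulative Euler error grows linearly with the number of iterations, whereas the vector field $\wt\xi^*$ vanishes quadratically near the ridge — quantitatively, near a ridge point $\|\wt\xi^*(x)\|^2 \asymp \mathrm{dist}(x,\text{Ridge}(\wt\eta))^2$ through the spectral-gap bound (\ref{eigenrange2}) applied to $\nabla^2\wt\eta$ — so the iteration slows down as it approaches the ridge. I would stop the iteration at the first $\ell$ for which $\|\wt\xi^*(\wt\gamma^*_a(x,\ell))\| \le a^{\sigma_1}$, with $\sigma_1 \in (0,1)$ chosen so that the total accumulated Euler error up to this stopping time is $O(a^{1-\sigma_1})$ and simultaneously the $a^{\sigma_1}$-sublevel set of $\|\wt\xi^*\|$ lies in an $O(a^{\sigma_1})$-neighbourhood of $\text{Ridge}(\wt\eta)\cap\wt S_\epsilon$ via (\ref{Lipschitz2}). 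The delicate point is that the constants hidden in these estimates — spectral gaps, Lipschitz constants of $\nabla^2\wt\eta$, and the Weyl/Davis--Kahan transfers — must be uniform in the perturbation; this uniformity is guaranteed by the eigenvalue bounds (\ref{eigenrange2}) and the perturbation machinery already assembled in Theorem~\ref{stability}.

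For part (iii), in the one-dimensional case $k=1$ the ridge $\text{Ridge}(\wt\eta)\cap\wt S_\epsilon$ is a disjoint union of smooth arcs (a one-manifold; even though assumption {\bf (A3)$_{\wt f}$} is not directly postulated, it follows from {\bf (A3)$_f$} via the small-perturbation argument used for {\bf (A3)$_{\wh f}$} in Theorem~\ref{main1}(i)), and the Euler-method flow issued from $\partial\wt S_\epsilon$ is locally a homeomorphism onto a tubular neighbourhood of each arc. The same invariance-of-domain / surjectivity argument as in Theorem~\ref{discretecomp}(iii) then upgrades the inclusion from part (i) to equality: uniqueness of trajectories, continuous dependence on initial conditions, and local surjectivity near regular points all transfer to the perturbed setting from Theorem~\ref{stability}(i)--(iii) together with {\bf (A1)$_{\wt f,6}$}.
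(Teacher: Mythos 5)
Your proposal takes essentially the same route as the paper: the paper explicitly omits the proof because it is ``very similar'' to that of Theorem~\ref{discretecomp} once Theorem~\ref{stability} is substituted for Theorem~\ref{pathconvergence}, which is precisely your reduction. Your account of the Lyapunov/positive-invariance argument, the Euler-error balancing behind the exponent $1-\sigma_1$, and the $k=1$ surjectivity transfer correctly identifies the perturbed analogues of Facts~1--6 and the connectedness step of Theorem~\ref{discretecomp}(iii), with the only minor slip being the mention of invariance of domain, which the paper invokes only as a suggestion for $k\ge 2$ in Remark~\ref{InvarianceOfDomain} while using a plain connectedness argument for $k=1$.
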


The proof of Corollary~\ref{perturbeddiscrete} is similar to that of Theorem~\ref{discretecomp}, and no details are presented.

\section{Discussion}
\subsection{Other variants}
Here we briefly discuss two other variants of our algorithms. 

The first variant is based on the logarithm transformation. The original mean shift algorithm proposed by \cite{fukunaga1975estimation} is a gradient ascent algorithm implicitly using the logarithm transformation of kernel density estimates, which is analyzed in \cite{arias2016estimation} and \cite{arias2022clustering}. More specifically, for any $x_0\in\mathbb{R^d}$, they consider the sequence $x_{j+1}=x_j + a\nabla \wh r(x_j)$, $j=0,1,2,\cdots,$ where $\wh r$ is an estimate of $\log f$. The limit of the sequence is a local mode of $\wh f,$ provided some regularity assumptions hold.

A similar idea can be applied to our algorithms as well. Let $g:[0,\infty)\rightarrow (0,\infty)$ be a (known) twice differentiable increasing positive function. In Algorithm 1, we can replace the derivatives (and the induced eigenvalues and eigenvectors) of $\wh\eta$ by those of $\log (g(\wh\eta))$. Note that $\wh\eta$ by definition is non-positive and this explains the need for a positive transformation function $g$ before applying the logarithm. More specifically, the update step (\ref{algorithm1update}) in Algorithm 1 can be replaced by 
\begin{align*}
y_i^{j+1} = y_i^j + a\xi^{\log g(\wh\eta)}(y_i^j).
\end{align*}
Algorithm 2 can be modified in a similar way. 

The second variant is based on modified objective functions in the optimization. For any $q>0$, and $x$ such that $\wh f(x) > 0$, write 
\begin{align}\label{TargetFun}
\wh s_q(x) = \frac{\wh\eta(x)}{[\wh f(x)]^q}.
\end{align}
We have that $\wh s_q \le 0$ and $\wh s_q = 0$ on $\text{Ridge}(\wh f\,)$, and finding minimizers of $\wh s_q$ is equivalent to that of $\wh\eta$. Thus we can replace the derivatives (and the induced eigenvalues and eigenvectors) of $\wh\eta$ by those of $\wh s_q$. Recall that our basic algorithms can possibly return non-global modes of the ridgeness function depending on the start points, and thus requires a post-process step (see Section~\ref{pruning}). The numerator in (\ref{TargetFun}) is a penalization for low density. The purpose underlying the introduction of this penalization is to sharpen the ridgeness function near the ridge, so as to potentially accelerate the algorithm and enlarge the set of starting points whose corresponding limit points are global maxima. %

The theoretical analyses of these two variants are not explicitly given in this paper, although they are straightforward extensions by following the same procedure as presented for Algorithms 1 and 2 above. 

\subsection{Mathematical models of SCMS algorithm and its convergence}

The analyses and simulations in this paper show that the set that SCMS converges to is not always exactly the ridge set under its original definition, although they are clearly related. It is an interesting open problem to find the mathematical definition of the set of the limit points of SCMS. Once this is discovered, we believe the convergence analyses established in this paper can be useful to show the convergence of SCMS in the sense of recovering the entire set. 

\appendix

\section{More details of algorithms} 
\label{appendix:a}

\subsection{Discussion of why SCMS algorithm might miss parts of ridges} 
\label{SCMS_fail}
Here we provide some more details and an example for the observation made in Section~\ref{SCMSTheory} that the SCMS algorithm might miss some parts of a ridge.

We consider the case $d=2.$ First we show that a ridge point does not necessarily have to be a local maximum of the integral curve traced by the SCMS algorithm. Such a point will thus not be identified as a ridge point (except in the trivial case where the starting point happens to be this ridge point).  For a given point $x_0$ near the ridge, consider an integral curve $x(t)$ defined as
\begin{align}\label{example}
\frac{dx(t)}{dt} = V_{\perp}(x(t)),\;\; x(0)=x_0,
\end{align}
where $V_{\perp}(x)$ is the second unit eigenvector of the Hessian of $f$ at $x$. We assume that the direction (sign) of $V_\perp(x)$ is determined such that it varies continuously with $x$. Note that $V_{\perp}$ is parallel to $V_{\perp}V_{\perp}^\top \nabla f = $ and so $x(t)$ has the same trajectory as the integral curve driven by $V_{\perp}V_{\perp}^\top \nabla f$. Using $V_\perp$ allows tracking integral curves both forward and backward. Indeed, the vector field $V_{\perp}V_{\perp}^\top \nabla f$ vanishes on the ridge, while $V_\perp$ always has a unit length. Suppose that there exists an interval $(a,b)$ such that $\{x(t): \;t\in(a,b)\}$ intersects with Ridge$(f)$. Then the first and second order derivatives of $f(x(t))$ with respect to $t$ are 
\begin{align}\label{DirDerivative0}
\frac{df(x(t))}{dt} = \big\langle \nabla f(x(t)),\; V_{\perp}(x(t)) \big\rangle
\end{align}
and
\begin{align}\label{DirDerivative}
\frac{d^2f(x(t))}{dt^2} &=\Big\langle \nabla \big\langle \nabla f(x(t)),\; V_{\perp}(x(t)) \big\rangle,\; V_{\perp}(x(t)) \Big\rangle \nonumber\\
& = \nabla f(x(t))\; \nabla V_{\perp}(x(t))\; V_{\perp}(x(t)) + \Big\langle \nabla^2f(x(t))\; V_{\perp}(x(t)), \; V_{\perp}(x(t)) \Big\rangle\nonumber\\
& = \nabla f(x(t))\; \nabla V_{\perp}(x(t))\; V_{\perp}(x(t)) + \lambda_2(x(t)).
\end{align}
If $x(t)$ is a ridge point, then $\frac{df(x(t))}{dt}=0$ in (\ref{DirDerivative0}) by Definition~\ref{FilaDef1}, and the second term $\lambda_2(x(t))$ in (\ref{DirDerivative}) is negative. 
In general, however, the right-hand side of (\ref{DirDerivative}) may be not negative. In other words, depending on the sign in (\ref{DirDerivative}), ridge points on the trajectory driven by $V_{\perp}$ (or equivalently, by $V_{\perp}V_{\perp}^\top\nabla f$) can be local maxima, local minima or even saddle points. An example is given below, showing that following the direction of $V_{\perp}V_{\perp}^\top\nabla f$, a part of the ridge will be missed if the starting points are not chosen exactly on that part.\\

{\bf Example}  
Consider the density function
\begin{align}\label{counterex}
f(u,v)=\frac{3}{8}(1-p)(1-u^2)v + \frac{1}{4}p,\quad u\in[-1,1],\quad v\in [0,2],
\end{align}
where $p\in(0,1)$. Using the ridge point definition, two curves are detected:  $\{u=0,v\in(0,2)\}$ and $\{(u,v): v=\frac{1-u^2}{\sqrt{2+2u^2}}, u\in(-1, 1)\}$. The intersection of the two curves is at $\big(0,\frac{1}{\sqrt{2}}\big)$. The plots of this function when $p=0$ and the ridge are given in Figures~\ref{FigExample1} and~\ref{FigExample2}. Note that here $p$ presents the proportion of a uniform background noise and does not affect the location of the ridge set in the model.

In Figures~\ref{scmsfig} and~\ref{newalgfig} we compare the ideas behind the SCMS algorithm and our new algorithms. It can be found that a piece of the ridge $S_{\text{missing}} := \{0\}\times (0,1/\sqrt{2})$ is failed to be detected using the idea of the SCMS algorithm in this example.

We also tested our new algorithms and SCMS on a random sample generated from the density in \eqref{counterex} with $p=0.3$. For all the three algorithms, we used the same sample of size 10000, the same bandwidth 0.3, and the same $25\times 25$ grid points are the starting points. To handle the boundary effect of the kernel estimation near the top boundary $[-1,1]\times\{2\}$, caused by the abrupt change in density beyond the boundary, we doubled the original sample size by adding a reflected data set across this boundary, and then only kept the ridge estimation below $u=1.9999$, just slightly lower than the boundary. The results are given in Figure~\ref{fig:counterex}. It is clear to see that the ridge is estimated well by our Algorithms 1 and 2. Notably, our algorithms can find a piece of ridge corresponding to $S_{\text{missing}}$ but SCMS fails to find it. In addition, the estimated ridge pieces by the SCMS algorithm near near $(0,1/\sqrt{2})$ are severely broken, while they are almost connected in the outputs of our algorithms. 

\setlength{\columnsep}{20pt}

\begin{multicols}{2}
\begin{center}
\includegraphics[height=6cm]{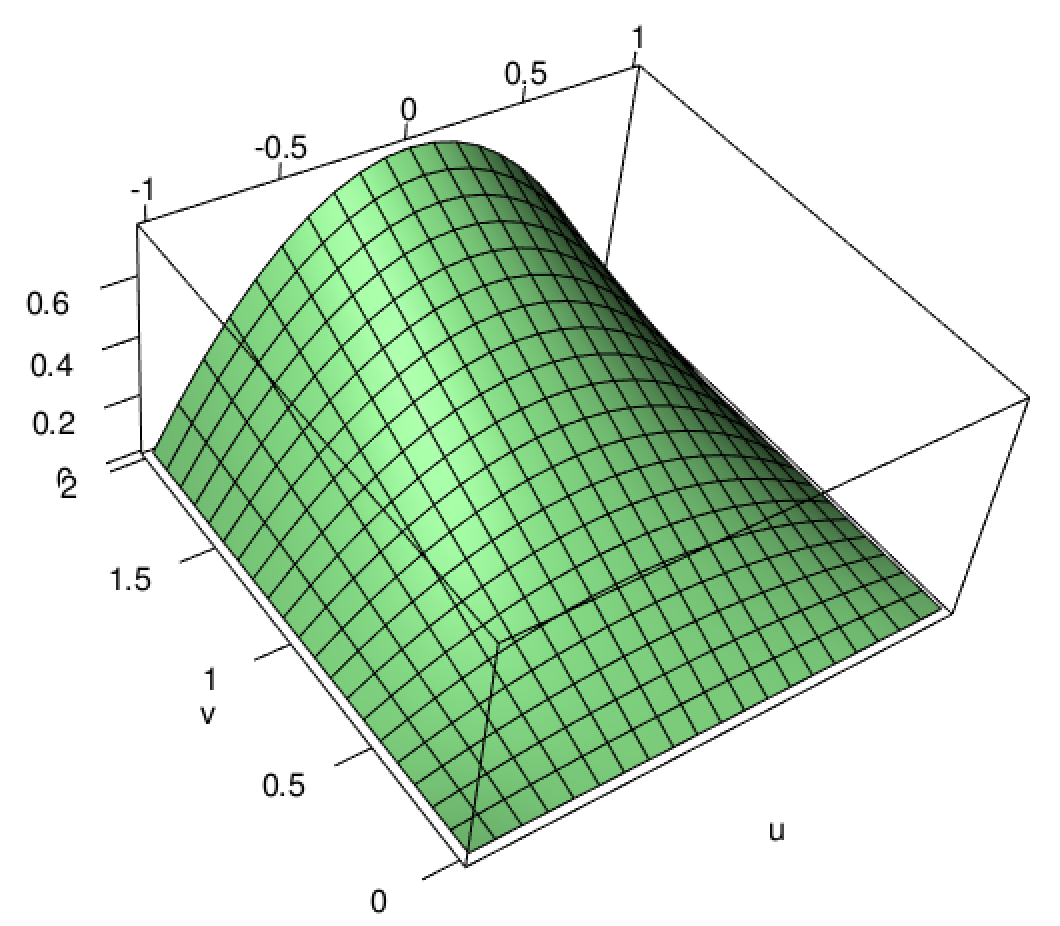}
\captionof{figure}{Surface plot of the function $f$ in~(\ref{counterex}).}\label{FigExample1}
\includegraphics[height=6cm]{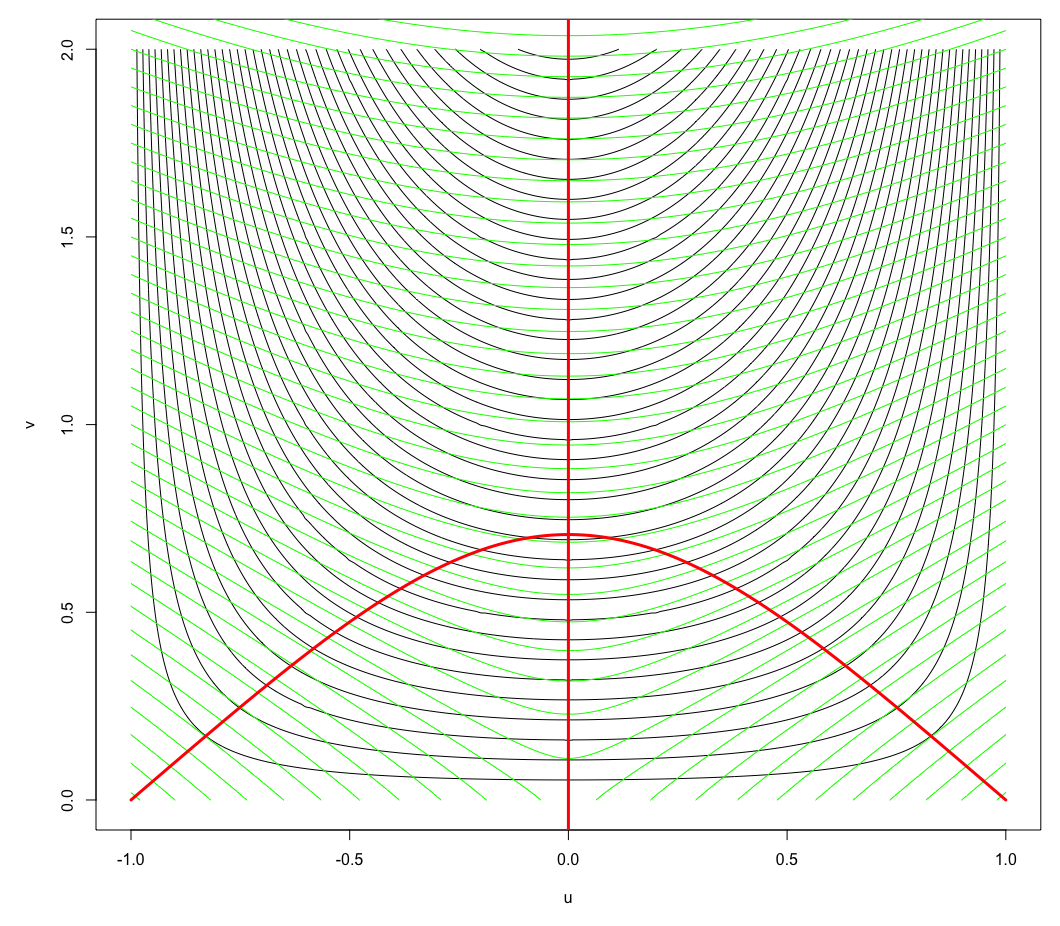}
\captionof{figure}{contour lines (black), ridge lines (red), and integral curves driven by $V_\perp$ (green).}\label{FigExample2}
\end{center}
\end{multicols}

\begin{multicols}{2}
\begin{center}
\includegraphics[width=6cm]{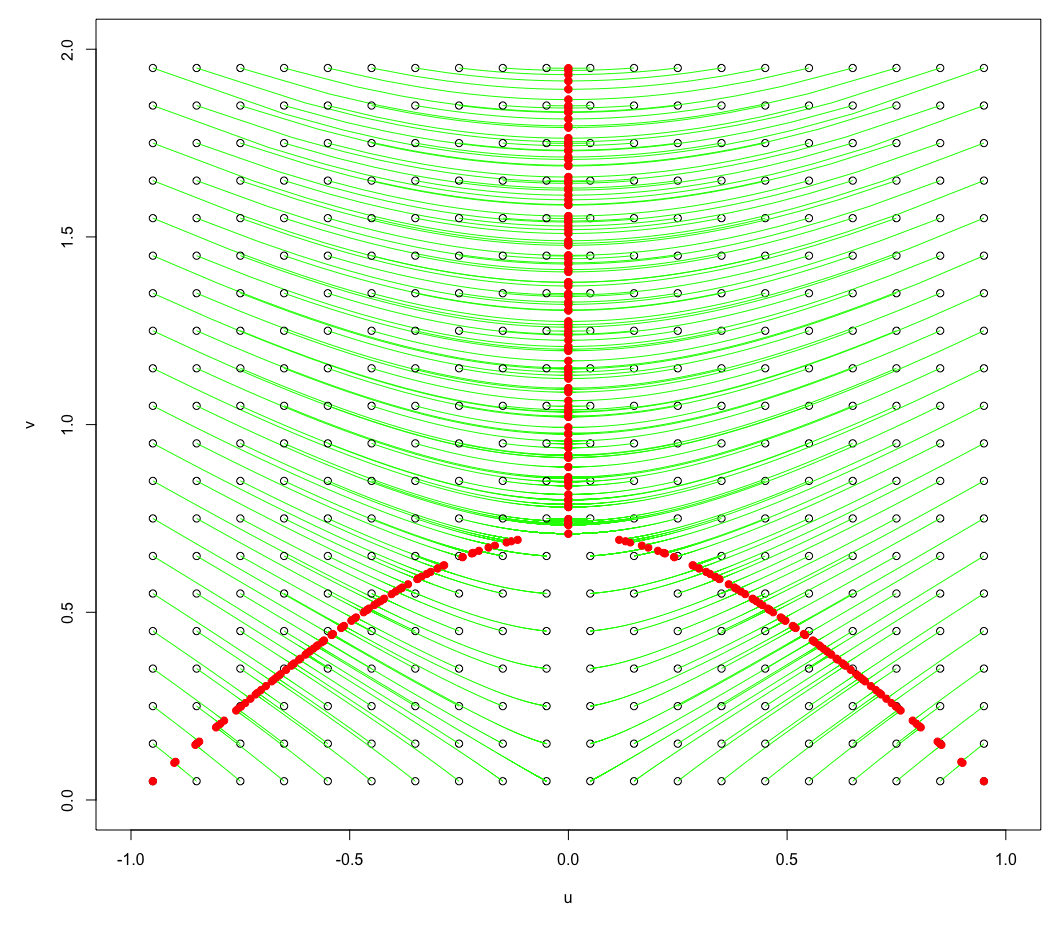}
\captionof{figure}{green curves are the trajectories of $\xi^f$, and the red dots are the limit points.}\label{scmsfig}
\includegraphics[width=6cm]{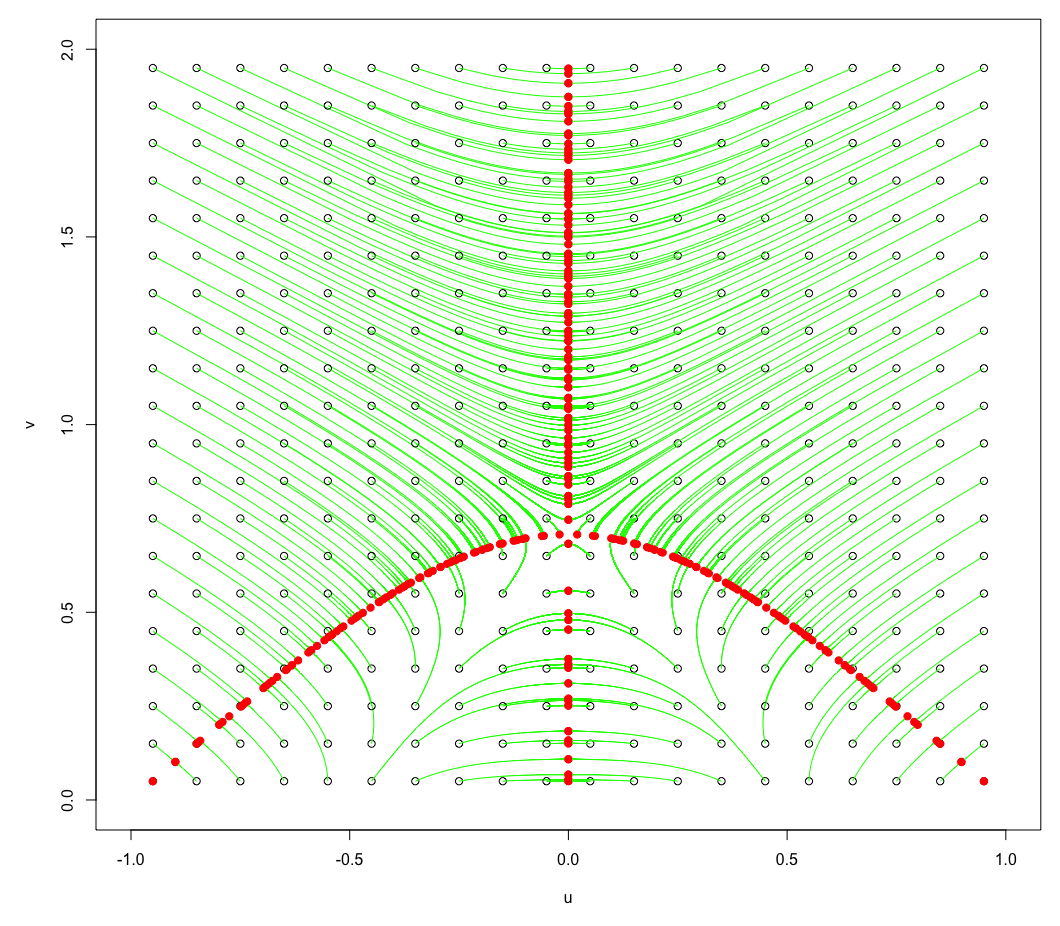}
\captionof{figure}{green curves are the trajectories of $\xi^\eta$, and the red dots are the limit points.}\label{newalgfig}
\end{center}
\end{multicols}
\setlength{\columnsep}{10pt}

\begin{figure}[h]
    \centering
    \includegraphics[scale=0.11]{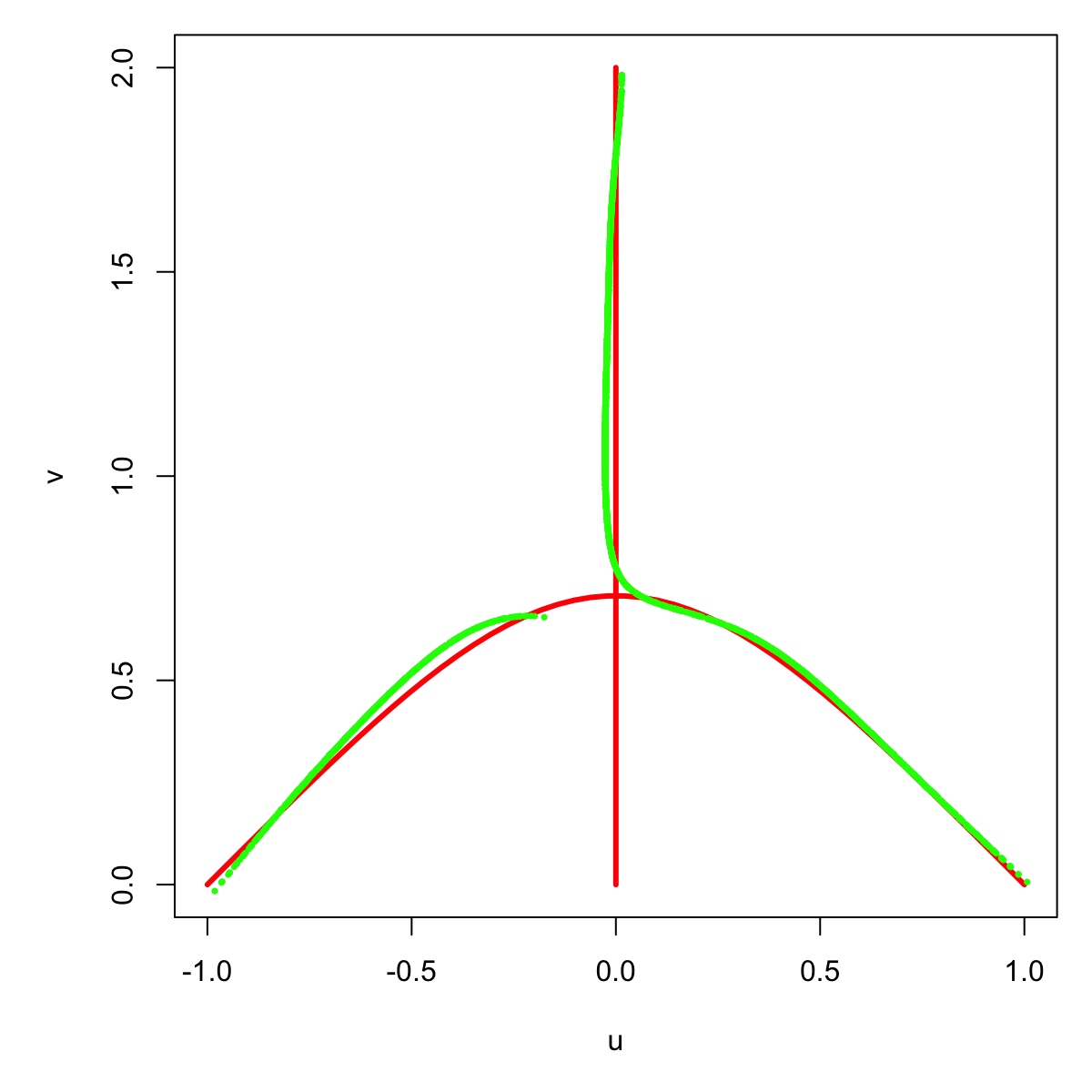}
    \includegraphics[scale=0.11]{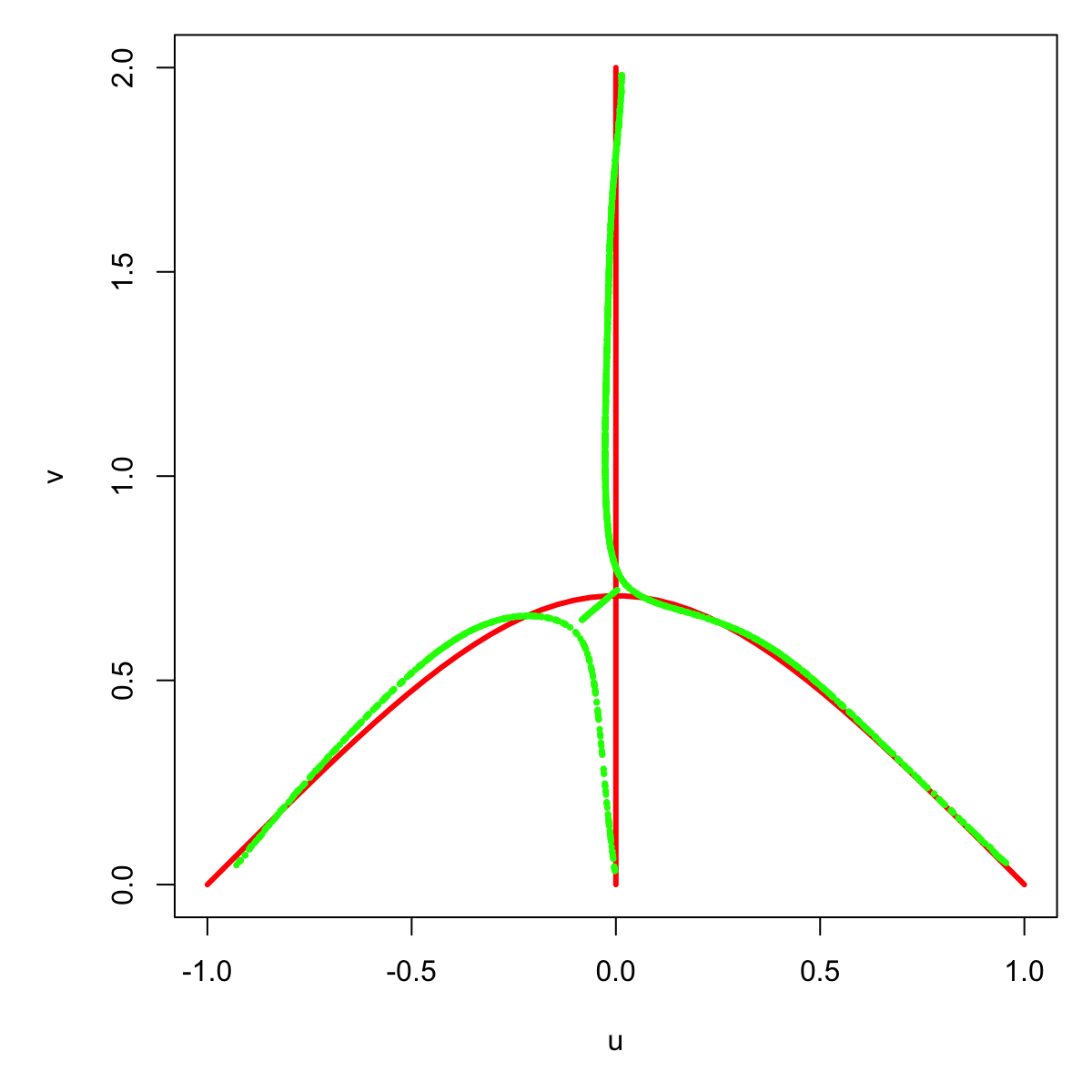}
    \includegraphics[scale=0.11]{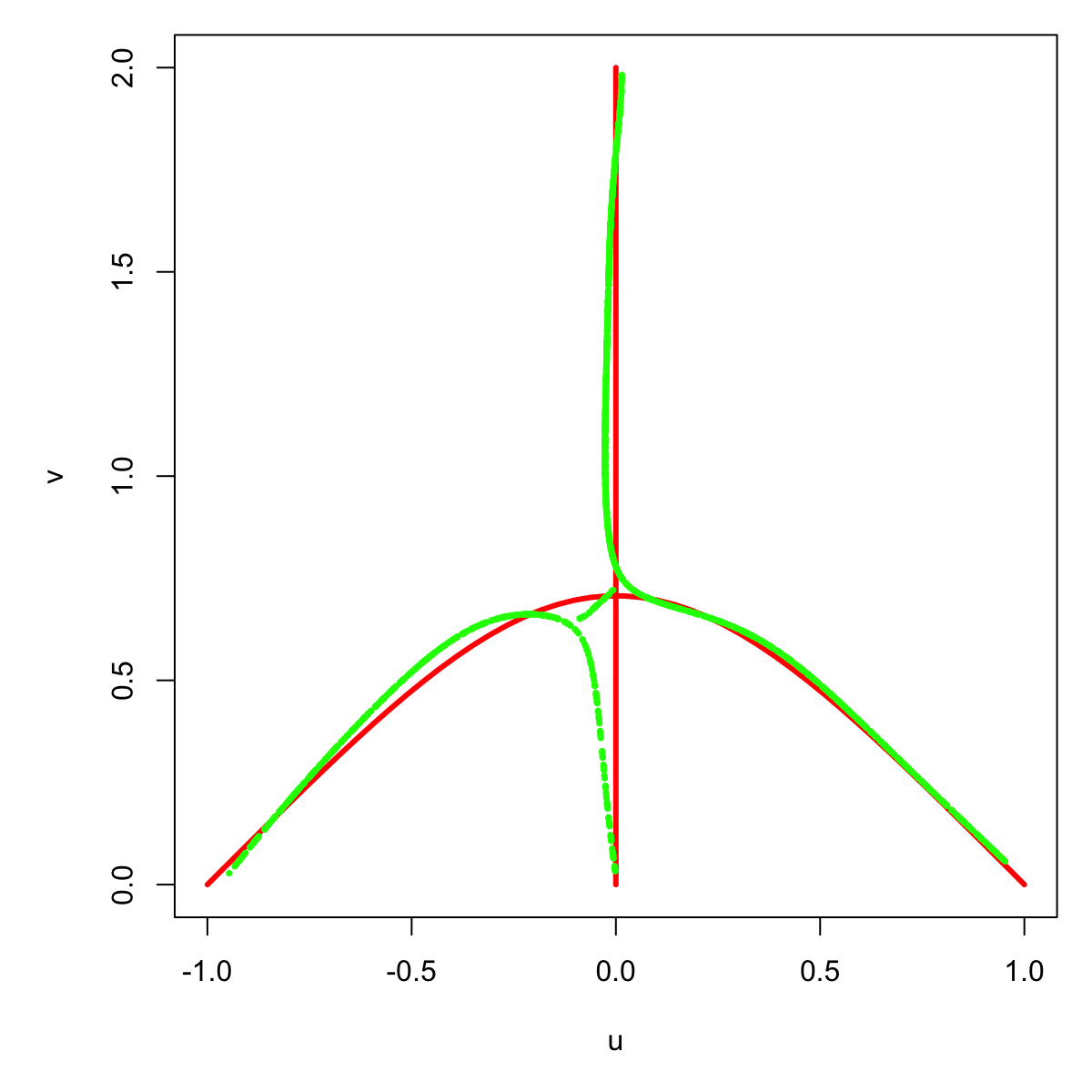}
    \caption{Plots from left to right show the outputs from the SCMS, our Algorithms 1 and 2. The red lines are the ridge model as given in Figure~\ref{newalgfig}, and the green dots are the outputs from the algorithms.}
        \label{fig:counterex}
    
\end{figure}

\subsection{Formulas needed for implementing Algorithm 1} 

In Algorithm 1 we need to compute $\nabla \xix{\wh f}$ and also the Hessian of $\wh \eta(x)$ (in order to find its eigenvectors). In the following we provide some formulas that are useful for the implementation. 

Let \textsc{vec} be the matrix vectorization operator such that $\textsc{vec}(A)$ stacks all the columns of a matrix $A$ into a vector.

As for $\nabla \xix{\wh f}$, we have the following by using the product rule for matrix calculus: 
\begin{align}\label{nablaxiexp}
\nabla \xix{\wh f} &= (\nabla \wh f(x)^\top \otimes \mathbf{I}_d) \nabla \Proj{\wh f} + \Proj{\wh f} \nabla^2 \wh f(x) \nonumber\\
& = (\nabla \wh f(x)^\top \otimes \mathbf{I}_d) \nabla \Proj{\wh f} + \Vbot{\wh f} \Lambda^{\wh f}_\bot(x) \Vbot{\wh f}^\top,
\end{align}
where $\Lambda^{\wh f}_\perp(x)=\text{diag}(\lam[\wh f]{r+1},\cdots,\lam[\wh f]{d})$, $\nabla \Proj{\wh f} = \frac{d\textsc{ vec}[\Proj{\wh f}]}{d x^\top} \in \R^{d^2 \times d}$, and $\otimes$ denotes Kronecker product. 

The Hessian $\nabla^2 \wh\eta(x)$ is given by
\begin{align}\label{Hessian}
    \nabla^2 \wh\eta(x) =  -(\mathbf{I}_d\otimes \xix{\wh f}^\top ) \nabla(\nabla \xix{\wh f})  - \nabla \xix{\wh f})^\top\nabla \xix{\wh f},
\end{align}
where $\nabla(\nabla \xix{\wh f})$ can be found by using the product rule:
\begin{align*}
    \nabla(\nabla \xix{\wh f}) = & [(\nabla \Proj{\wh f})^\top\otimes \mathbf{I}_d] \nabla (\nabla \wh f(x)^\top \otimes \mathbf{I}_d) \\
    & + (\mathbf{I}_d\otimes (\nabla \wh f(x)^\top \otimes \mathbf{I}_d))\nabla (\nabla \Proj{\wh f})\\
    & + (\nabla^2\wh f(x)\otimes\mathbf{I}_d)\nabla \Proj{\wh f}\\
    & + (\mathbf{I}_d\otimes \Proj{\wh f}) \nabla(\nabla^2 \wh f(x)).
\end{align*}
Here $\nabla (\nabla \wh f(x)^\top \otimes \mathbf{I}_d) = (\mathbf{I}_d\otimes\textsc{vec}(\mathbf{I}_d))\nabla^2\wh f(x)$. Furthermore, we can explicitly find the expressions of $\nabla \Proj{\wh f}$ and $\nabla (\nabla \Proj{\wh f})$, which by the chain rule involves the third and fourth derivatives of $\wh f$, respectively. For example, for $\ell=1,\cdots,d$, the $\ell$-th column of $\nabla \Proj{\wh f}$ is given by the vectorization of
\begin{align}\label{xigrad}\resizebox{0.999\hsize}{!}{$
    \sum\limits_{i=1}^{k}\sum\limits_{j=k+1}^d \frac{1}{\lamx[\wh f]{j}-\lamx[\wh f]{i}} [\Vx[\wh f]{j}\Vx[\wh f]{j}^\top \wh H_\ell(x)\Vx[\wh f]{i}\Vx[\wh f]{i}^\top + \Vx[\wh f]{i}\Vx[\wh f]{i}^\top \wh H_\ell(x)\Vx[\wh f]{j}\Vx[\wh f]{j}^\top],$}
\end{align}
where $\wh H_\ell(x)$ is the partial derivative of $\nabla^2\wh f(x)$ with respective to the $\ell$-th component of $x$. See \cite{qiao2023confidence} for some relevant calculations.

\section{Proofs}
\label{appendix:b}
{\bf Proof of Lemma~\ref{misc1}.} First we show that $\text{Ridge}(f)$ is a compact set. Notice that by assumption {\bf (A2)$_f$}, we can write
$$\text{Ridge}(f)=\{x\in [0,1]^d: \xix{f}=0, \;\lam[f]{k+1}(x)\leq 0\},$$ 
because the set $\{x\in[0,1]^d: \xix{f}=0, \;\lam[f]{k+1}(x) = 0\}$ is empty. If we treat $\Proj{f}$ as a function of $\nabla^2f(x)$, due to the assumed positive gap between $\lamx{k}$ and $\lamx{k+1}$ in assumption {\bf (A2)$_f$}, $\Proj{f}$ is an analytical matrix-valued function on the space of real symmetric matrices by the classical matrix perturbation theory~\cite[see][]{kato2013perturbation}, which further implies that $\Proj{f}$ is a Lipschitz function of $x$ by assumption {\bf (A1)$_{f,4}$}. Since both $\xix{f}$ and $\lam{k+1}$ are continuous functions, $\text{Ridge}(f)$ is closed, and hence compact because it is defined on the compact set $[0,1]^d.$

We have $\nabla^2 \eta(x) = -(\mathbf{I}_d \otimes \xix{f}^\top ) \nabla(\nabla \xix{f}) -\nabla \xix{f}^\top\nabla \xix{f}$. Thus, for $x\in\text{Ridge}(f)$, $\xi^f(x)=0$ and $\nabla^2 \eta(x) = -\nabla \xix{f}^\top\nabla \xix{f},$ which, by using assumption {\bf (A3)$_f$},  implies that the rank of $\nabla^2 \eta(x)$ is $d-k$. Hence we have for $x\in\text{Ridge}(f)$,
$$0 = \lamx[\eta]{1} = \cdots = \lamx[\eta]{k} > \lamx[\eta]{k+1}\cdots\geq\lamx[\eta]{d}.$$
Because of the compactness of $\text{Ridge}(f)$ and the continuity of $\lambda_j^\eta$ on $\text{Ridge}(f)$, there exist positive constants $\alpha^\prime$ and $A^\prime$ such that 
\begin{align}
\label{eigenvaluebounds}
-\alpha^\prime \ge \lamx[\eta]{k+1}\cdots\geq\lamx[\eta]{d}\ge - A^\prime
\end{align}
for all $x\in\text{Ridge}(f)$.

Since $\xix{f}=0$ for all $x\in \text{Ridge}(f)$, under assumption {\bf (A1)$_{f,4}$}, when $\delta^\prime$ is small enough, for all  $x\in \text{Ridge}(f)^{\delta^\prime}$,
\begin{align*}
\iota(x)& :=\|(\mathbf{I}_d \otimes \xix{f}^\top ) \nabla(\nabla \xix{f})\|_F \\
& \leq \sqrt{d}\| \xix{f} \|_F \| \nabla(\nabla \xix{f})\|_F\\
& \le 2\sqrt{d}\sup_{x\in \text{Ridge}(f)^{\delta^\prime}}[\| \nabla \xix{f} \|_F \| \nabla(\nabla \xix{f})\|_F]\, d(x,\text{Ridge}(f))\\
& = : c_0 d(x,\text{Ridge}(f)).
\end{align*}
Also for all $x,y\in\text{Ridge}(f)^{\delta^\prime},$ 
\begin{align*}
&\|\nabla \xix{f}^\top\nabla \xix{f} - \nabla \xii{f}{y}^\top\nabla \xii{f}{y}\|_F \\
&\le \|[\nabla \xix{f} -\nabla \xii{f}{y}]^\top \nabla \xix{f}\|_F + \|[\nabla \xix{f} -\nabla \xii{f}{y}]^\top \nabla \xii{f}{y}\|_F\\
&\le 2 \sup_{x\in \text{Ridge}(f)^{\delta^\prime}}[\| \nabla \xix{f} \|_F \| \nabla(\nabla \xix{f})\|_F]\, \|x-y\|\\
&=c_0 \|x-y\|.
\end{align*}
For any $x\in\text{Ridge}(f)^{\delta^\prime}$, let $y_x\in\text{Ridge}(f)$ be such that $\|x-y\|=d(x,\text{Ridge}(f))$. By using Weyl's inequality~\cite[see][page 15]{serre2002matrices}, we have for all $x\in\text{Ridge}(f)^{\delta^\prime},$ and $j=k+1,\cdots,d,$
\begin{align*}
|\lambda_j^\eta(x) - \lambda_j^\eta(y_x)| & \leq \| \nabla^2 \eta(x) - \nabla^2 \eta(y_x)\|_F \\
& \le \iota(x) + \|\nabla \xix{f}^\top\nabla \xix{f} - \nabla \xii{f}{y}^\top\nabla \xii{f}{y}\|_F\\
&\le c_0 [d(x,\text{Ridge}(f)) + \|x-y\|]\\
& \le 2 c_0 \delta^\prime.
\end{align*}
Therefore (\ref{eigenrange}) holds for $\delta^\prime > 0$ small enough by noticing \eqref{eigenvaluebounds}. 

Since $\xix{f}=0$ for all $x\in\text{Ridge}(f),$ the row vectors of $\nabla \xix{f}$ span the normal space of $\text{Ridge}(f)$ at $x$, denoted by $\mathcal{N}(x)$. Note that $\mathcal{N}(x)=\{\nabla \xix{f}^\top a:\; a\in\mathbb{R}^d\}=\{\nabla \xix{f}^\top\nabla \xix{f}a:\; a\in\mathbb{R}^d\}$, which is the same as the space spanned by the eigenvectors of $\nabla^2\eta(x)$ corresponding to the non-zero eigenvalues, i.e., the space spanned by the columns of $\Vbot{\eta}$.\\ 
\hbox{$\;$}\hfill$\blacksquare$\\

{\bf Proof of Lemma~\ref{ridgetube}.}
For any $b\in\mathbb{R}^d$, consider
$$M^f_b:=\{x\in[0,1]^d:\; \xix{f} =b, \lamx{k+1}<0\},$$ 
and note that $\Sep{\eta,f}{\epsilon} =\bigcup_{\{b:\frac{1}{2}\|b\|^2\leq \epsilon\}} M^f_b$ for all $\epsilon\geq0$. Recall $\delta^\prime$ defined in Lemma~\ref{misc1} and let $\mathcal{A} = [0,1]^d\backslash \{t\in[0,1]^d:\eta(t)=0\}^{\delta^\prime}$. Since $\mathcal{A}$ is a compact set, we have $r:=- \sup_{x\in\mathcal{A}} \eta(x) >0.$ Then assumption {\bf (A2)$_f$} implies that $M^f_b\subset \text{Ridge}(f)^{\delta^\prime}$ when $\frac{1}{2}\|b\|^2 <r$, which is also what we assume below. We will first show that for all $\|b\|$ small enough, 
\begin{align*}
C_1\|b\|  \le \sup_{x\in M_b^f} d(x,\text{Ridge}(f)) \le C_2\|b\|,
\end{align*}
for some positive constants $C_1$ and $C_2$. 

For any $x\in M^f_b$, let $x_0$ be its projection point onto $\text{Ridge}(f)$. There exists a unit vector $u=u(x_0)\in\mathscr{N}(x_0)$ where ${\mathscr N}(x_0)$ the normal space to the ridge at $x_0$, and $\delta_x>0$ such that $x=x_0+\delta_x u$. Using a Taylor expansion, we get
\begin{align*}
\xix{f} = \xii{f}{x_0} + \nabla\xii{f}{x_0}(x - x_0) + R_0(x)= \delta_x\nabla\xii{f}{x_0}u + R_0(x),
\end{align*}
where $\|R_0(x)\|\leq \kappa_0\delta_x^2,$ for a constant $\kappa_0>0$ for all $x\in M^f_b$. Therefore 
\begin{align}
\label{bnorm2}
\|b\|^2 = \|\xix{f}\|^2 = \delta_x^2 u^\top \nabla \xii{f}{x_0}^\top\nabla \xii{f}{x_0} u + R_0^\prime(x),
\end{align}
where $|R_0^\prime(x)|\leq \kappa_0^\prime\delta_x^3,$ for a constant $\kappa_0^\prime>0$ for all $x\in M^f_b$. Let $\omega_{\max}(x)$ and $\omega_{\min}(x)$ be the largest and the $(d-k)$th largest eigenvalues of $\nabla \xix{f}^\top\nabla \xix{f}$, respectively. For $x_0\in\text{Ridge}(f)$, we have $\omega_{\min}(x_0)=-\lam[\eta]{k+1}(x_0)$ and $\omega_{\max}(x_0)=-\lam[\eta]{d}(x_0)$ because $\nabla^2\eta(x_0)=-\nabla \xii{f}{x_0}^\top\nabla \xii{f}{x_0}.$ From Lemma~\ref{misc1} we know that for all $x_0\in\text{Ridge}(f)$, $0<\alpha\le\omega_{\min}(x_0)\le \omega_{\max}(x_0) \le A <0$ for some positive constants $A$ and $\alpha$. 
Notice that $u$ is in the space spanned by $\V[\eta]{k+1}(x_0),\cdots,\V[\eta]{d}(x_0)$. So when $\|b\|$ is small enough (and hence $\delta_x$ is small enough), it follows from \eqref{bnorm2} that
\begin{align}\label{b2bound}
2A\delta_x^2 \ge 2|\lam[\eta]{d}(x_0))|\delta_x^2 = 2\omega_{\max}(x_0)\delta_x^2 \geq \|b\|^2 \geq \frac{\omega_{\min}(x_0)}{2} \delta_x^2 =  \frac{|\lam[\eta]{k+1}(x_0)|}{2} \delta_x^2 \ge 2\alpha \delta_x^2.
\end{align}
Since $\delta_x =d(x,\text{Ridge}(f))=\inf_{y\in\text{Ridge}(f)}\|x-y\|$, we have for all $b\in\mathbb{R}^d$ such that $\epsilon=\frac{1}{2}\|b\|^2$ is small enough,
\begin{align}
\label{hausdist1}
\sqrt{\frac{1}{A}}\sqrt{\epsilon} = \sqrt{\frac{1}{2A}} \|b\|  \le \sup_{x\in M_b^f} d(x,\text{Ridge}(f)) \le \sqrt{\frac{2}{\alpha}} \|b\|=\sqrt{\frac{4}{\alpha}}\sqrt{\epsilon}.
\end{align}

The other direction can be proved in a similar way, as given as follows. Now let $x$ be a point on $\text{Ridge}(f)$, and $x_0$ be its projection onto $M_b$, such that $x=x_0+\delta_x u$, where $\delta_x=\|x-x_0\|>0$, and $u\in\mathscr{N}(x_0)$ is a unit normal vector of $M_b^f$ at $x_0$. Following the above analysis (in particular (\ref{b2bound})), when $\|b\|$ is small enough, we still have $2\omega_{\max}(x_0)\delta_x^2\geq \|b\|\geq \frac{1}{2}\omega_{\min}(x_0)\delta_x^2$. Since both $\omega_{\max}$ and $\omega_{\min}$ are continuous functions in a neighborhood of $\text{Ridge}(f)$, we have that $4A\delta_x^2\geq \|b\|^2 \geq \frac{1}{4}\alpha\delta_x^2$ for all $\|b\|$ small enough. Therefore for all $b\in\mathbb{R}^d$ such that $\epsilon=\frac{1}{2}\|b\|^2$ is small enough,
\begin{align}\label{hausdist2}
\sqrt{\frac{1}{2A}}\sqrt{\epsilon} \leq  \sup_{x\in \text{Ridge}(f)} d(x,M_b)  \leq \sqrt{\frac{8}{\alpha}}\sqrt{\epsilon}.
\end{align}
Combining (\ref{hausdist1}) and (\ref{hausdist2}), we obtain
\begin{align}\label{hausdist}
\sqrt{\frac{1}{2A}}\sqrt{\epsilon} \leq d_H(\partial \Sep{\eta,f}{\epsilon},\,\text{Ridge}(f))  \leq \sqrt{\frac{8}{\alpha}}\sqrt{\epsilon}.
\end{align}
\\ \hbox{$\;$}\hfill$\blacksquare$\\

{\bf Proof of Lemma~\ref{ridge-ridgeness}.}
First we show that $\text{\rm Ridge}(f) \subset \text{\rm Ridge}(\eta)\cap \Seps{\eta,f}$. We have $ \nabla \eta(x) = -\nabla \xi^f(x)^\top \xi^f(x)$  for $x\in[0,1]^d.$ Thus, for $x\in \text{Ridge}(f)$ (implying that $\xi^f(x) = 0$), we have $\nabla \eta(x) = 0,$ and hence $\xix{\eta}=0$. Then $\text{\rm Ridge}(f) \subset \text{\rm Ridge}(\eta)\cap \Seps{\eta,f}$ for $\epsilon$ small enough is a direct consequence of Lemma~\ref{misc1}. 

Next we show that $\text{Ridge}(\eta)\cap \Seps{\eta,f} \subset \text{Ridge}(f)$ for $\epsilon$ small enough. To this end we show that
\begin{align}\label{toShow}
    \|\xix{\eta}\| > 0 \quad \text{for all } x\in \Seps{\eta,f}\backslash\text{Ridge}(f),
\end{align}
which implies the result. It follows from Lemma~\ref{ridgetube} that for every $\epsilon>0$ is small enough, there exists $\delta(\epsilon)>0$ such that $\Seps{\eta,f}\subset\text{Ridge}(f)^{\delta(\epsilon)}$ where $\delta(\epsilon)\to 0$ as $\epsilon\to0$. For any $x\in \Seps{\eta,f} \backslash \text{Ridge}(f)$, let $x_0\in\text{Ridge}(f)$ be the projection of $x$ onto $\text{Ridge}(f)$, that is, there exists a $\delta_x \in (0,\delta(\epsilon))$, such that we can write $x=x_0+\delta_x u$, where $u = u(x_0)\in\mathscr{N}(x_0)$ is a unit vector with ${\mathscr N}(x_0)$ the normal space to the ridge at $x_0$. Notice that by using Lemma~\ref{misc1}, ${\mathscr N}(x_0) = \{\Vbot[x_0]{\eta}a:\, a \in \R^{d-k}\}$. We will show that there exists an $\epsilon > 0$, such that $\|\nabla \eta(x)\| > 0$ for all $x \in \Seps{\eta,f}\setminus {\rm Ridge}(f)$. Using Lipschitz continuity of the fourth partial derivatives of $f$, there exists a constant $\kappa_1>0$ such that $\|\nabla^2\eta(x_0) - \nabla^2\eta(x_1)\|_F\leq \kappa_1 \|x_0-x_1\|$ for all $x_1\in \mathcal{B}(x_0,\delta(\epsilon)).$ Then we can write
\begin{align}\label{nablaetax}
\nabla\eta(x) &= \nabla\eta(x_0) + \nabla^2\eta(x_0)(x-x_0) + R_1(x) \nonumber\\
&= \delta_x \nabla^2\eta(x_0) u + R_1(x),
\end{align}
where $\|R_1(x)\|\leq \kappa_1 \delta_x^2$. Note that $\nabla^2\eta(x_0) = \sum_{j=k+1}^d \lam[\eta]{j}(x_0) \V[\eta]{j}(x_0)\V[\eta]{j}(x_0)^\top$ by Lemma~\ref{misc1} and $u$ is a unit vector in the space spanned by $\V[\eta]{k+1}(x_0),\cdots,\V[\eta]{d}(x_0)$. Hence  $A \geq |\lam[\eta]{d}(x_0)| \geq \|\nabla^2\eta(x_0) u\| \geq |\lam[\eta]{k+1}(x_0)|\geq \alpha$, where $\alpha$ and $A$ are positive constants given in Lemma~\ref{misc1}. Thus, for $\epsilon > 0$ small enough (and hence $\delta_x$ is small), we have $\|\nabla \eta(x)\| > 0$ for $x \in \Seps{\eta,f}\setminus {\rm Ridge}(f)$. 

Next we show that $\frac{|\langle\nabla\eta(x),u(x)\rangle|}{\|\nabla\eta(x)\|}$ is bounded from below. Using (\ref{nablaetax}) we have that $|\langle\nabla\eta(x),u(x)\rangle| \geq \delta_x  |\lam[\eta]{k+1}(x_0)|  - \kappa_1 \delta_x^2,$ and 
\begin{align*}
\frac{|\langle\nabla\eta(x),u(x)\rangle|}{\|\nabla\eta(x)\|} \geq \frac{\delta_x  |\lam[\eta]{k+1}(x_0)|  - \kappa_1 \delta_x^2}{\delta_x  |\lam[\eta]{d}(x_0)|  + \kappa_1 \delta_x^2}.
\end{align*}
For $\epsilon$ (and hence $\delta_x$) small enough, the right-hand side is bounded by a positive constant  from below, say, 
\begin{align*}
\frac{|\langle\nabla\eta(x),u(x)\rangle|}{\|\nabla\eta(x)\|}  \geq \frac{1}{2} \frac{|\lam[\eta]{k+1}(x_0)|}{ |\lam[\eta]{d}(x_0)|}. 
\end{align*}
Since $u$ can be written as $\V[\eta]{\bot}(x_0) a$ with $a\in\mathbb{R}^{d-k}$ and $\|a\| = 1$, we have by using Cauchy-Schwarz inequality that $|\langle\nabla\eta(x),u(x)\rangle| = |\nabla\eta(x)^\top \V[\eta]{\bot}(x_0) a| \le \|\V[\eta]{\bot}(x_0)^\top\nabla\eta(x) \| = \|\V[\eta]{\bot}(x_0)\V[\eta]{\bot}(x_0)^\top\nabla\eta(x)\|,$ and thus
\begin{align}\label{lowerb1}
\frac{\|\V[\eta]{\bot}(x_0)\V[\eta]{\bot}(x_0)^\top \nabla\eta(x)\|}{\|\nabla\eta(x)\|} \geq \frac{1}{2} \frac{|\lam[\eta]{k+1}(x_0)|}{ |\lam[\eta]{d}(x_0)|}. 
\end{align}

Furthermore, the angle between the eigenspace spanned by $\V[\eta]{\bot}(x_0)$ and $\Vx[\eta]{\bot}$ should be small if $\delta_x$ is small because $\nabla^2\eta$ is a continuous function of $x$. This can be seen by using the Davis-Kahan inequality:
\begin{align}\label{lowerb2}
\|\V[\eta]{\bot}(x_0)\V[\eta]{\bot}(x_0)^\top - \Vx[\eta]{\bot}\Vx[\eta]{\bot}^\top\|_F \leq \frac{2\sqrt{2}\|\nabla^2 \eta(x_0) - \nabla^2 \eta(x)\|_F}{|\lam[\eta]{k+1}(x_0)|} \leq \frac{2\sqrt{2} \kappa_1\delta_x}{|\lam[\eta]{k+1}(x_0)|}.
\end{align}
Hence using (\ref{lowerb1}) and (\ref{lowerb2}),
\begin{align*}
\frac{\|\xix{\eta}\|}{\|\nabla \eta(x)\|} = \frac{\|\Vx[\eta]{\bot}\Vx[\eta]{\bot}^\top \nabla\eta(x)\|}{\|\nabla\eta(x)\|} \geq \frac{1}{2} \frac{|\lam[\eta]{k+1}(x_0)|}{ |\lam[\eta]{d}(x_0)|} - \frac{2\sqrt{2}\kappa_1\delta_x }{|\lam[\eta]{k+1}(x_0)|},
\end{align*}
which can be bounded from below by a positive constant when $\epsilon$ (and hence $\delta_x$) is small enough. This is (\ref{toShow}).\\ \hbox{$\;$}\hfill$\blacksquare$\\

{\bf Proof of Theorem~\ref{ridgeHausRate}.} Using Talagrand's inequality~\cite[see][Proposition A.5]{sriperumbudur2012consistency}, there exists a constant $C>0$ such that, for all $n\geq 1$, $h\in(0,1)$, $b>1$ and $|\alpha|\leq 4$ with $nh^{d+2|\alpha|}\geq (b\vee |\log h|)$, we have
\begin{align}\label{derivrate}
    \mathbb{P}\Big(\sup_{x\in[0,1]^d} |\partial^{(\alpha)}\wh f(x) -\mathbb{E}\partial^{(\alpha)}\wh f(x)|< C\sqrt{\frac{b\vee |\log h|}{nh^{d+2|\alpha|}}}\Big) \geq 1-e^{-b}.
\end{align}
It follows from standard calculation for kernel density estimation \cite[see, e.g., Lemma 2 of][]{arias2016estimation} that for all $|\alpha|\leq 4,$
\begin{align}\label{derivrate2}
    \sup_{x\in[0,1]^d} |\partial^{(\alpha)} f(x) -\mathbb{E}\partial^{(\alpha)}\wh f(x)| = O(h^{(4-|\alpha|)\wedge2}).
\end{align}
Then (\ref{derivrate}) and (\ref{derivrate2}) imply that for any $B>0$, on a set $A_n$ with probability at least $1-n^{-B}$, there exists a constant $C > 0$ such that
$$\sup_{x\in[0,1]^d} |\partial^{(\alpha)} f(x) -\partial^{(\alpha)}\wh f(x)| \le C\; \Big(\Big(\frac{\log n}{nh^{d+4}}\Big)^{1/2} + h^{(4-|\alpha|)\wedge2}\Big),$$ 
for all $|\alpha|\leq 4.$ The fact that the eigenvalues of a symmetric matrix $M$ are Lipschitz continuous functions of $M$ implies that 
for every $\delta > 0$ there exists $n_0$ such that for $n \ge n_0$, $\sup_{x\in [0,1]^d}|\lamx[\wh f]{k+1}-\lamx[f]{k+1}| \le \delta$ on $A_n.$ By assumption {\bf (A2)}$_f$, this implies that on $A_n$ we have $ \Seps{\wh \eta,\wh f} = \Seps{\wh \eta,f}$ for $\epsilon \geq 0$ small enough and $n$ large enough. Denote $\rho_n=\sup_{x\in [0,1]^d}|\sqrt{-\wh\eta(x)}-\sqrt{-\eta(x)}|$. It 
follows that for $n$ large enough we have on $A_n,$
$$\text{Ridge}(\wh f\,) = S_0^{\wh\eta,\wh f} = S_0^{\wh\eta,f} \subset \big\{x\in [0,1]^d:\; \eta(x)\geq - \rho_n^2,\, \lamx{k+1}<0\big\} = S^{\eta,f}_{\rho_n^2}.$$
We then have on $A_n,$
\begin{align}\label{fhattof}
    \sup_{x\in\text{Ridge}(\wh f )}d(x,\text{Ridge}(f)) \le \sup_{x\in\text{Ridge}(\wh f )}d(x,S^{\eta,f}_{\rho_n^2}) \le C_1\rho_n\le C_2\Big(\Big(\frac{\log n}{nh^{d+4}}\Big)^{1/2} + h^2\Big),
\end{align}
for some constants $C_1,C_2>0$, which follows from Lemma~\ref{ridgetube} and the rate of convergence of $\rho_n$. Indeed, note that
\begin{align*}
\sqrt{2}\rho_n &\leq \sup_{x\in [0,1]^d}\|\Proj{\wh f} \nabla \wh f(x)- \Proj{f} \nabla f(x)\|  \\
&\leq \sup_{x\in [0,1]^d}\|[\Proj{\wh f} - \Proj{f}] \nabla f(x)\| + \sup_{x\in [0,1]^d}\|\Proj{\wh f} [\nabla \wh f(x)- \nabla f(x)]\|\\
&\leq \sup_{x\in [0,1]^d}\|\Proj{\wh f} - \Proj{f}\|_F\;\| \nabla f(x)\| + \sup_{x\in [0,1]^d}\|\nabla \wh f(x)- \nabla f(x)\|\\
&\leq \frac{2\sqrt{2}}{\beta}\sup_{x\in [0,1]^d}\|\nabla^2 \wh f(x) - \nabla^2 f(x)\|_F\sup_{x\in [0,1]^d}\| \nabla f(x)\| + \sup_{x\in [0,1]^d}\|\nabla \wh f(x)- \nabla f(x)\|,
\end{align*}
where the last step follows from the Davis-Kahan theorem \cite[see][]{yu2015useful}. Using (\ref{derivrate}) and (\ref{derivrate2}) gives the asserted rate in (\ref{fhattof}). The result in Theorem~\ref{main1}(i) allows us to swap the roles of $f$ and $\wh f$ in (\ref{fhattof}) and get on a set with probability at least $1-n^{-B}$,
\begin{align}\label{ftofhat}
    \sup_{x\in\text{Ridge}(f )}d(x,\text{Ridge}(\wh f\,)) = C_3\Big(\Big(\frac{\log n}{nh^{d+4}}\Big)^{1/2} + h^2\Big),
\end{align}
for some positive constant $C_3$. We conclude the proof by combining (\ref{fhattof}) and (\ref{ftofhat}).\\ \hbox{$\;$}\hfill$\blacksquare$\\

{\bf Proof of Theorem~\ref{main1}.}
(i). Using (\ref{derivrate}) and (\ref{derivrate2}), properties {\bf (A1)}$_{\wh f,4\;}$ and {\bf (A2)}$_{\wh f}$ are consequences of the Lipschitz continuity of the eigenvalues as functions of symmetric matrices. For a symmetric matrix $A$, let $\lambda_{d-k}(A)$ be the $(d-k)$th largest eigenvalue of $A$. To show {\bf (A3)}$_{\wh f}$, notice that assumption {\bf (A3)}$_{f}$ implies that for $\delta>0$ small enough, there exists a constant $a_0>0$ such that 
\begin{align*}
    \inf_{x\in\text{Ridge}(f)^\delta}\lambda_{d-k} (\nabla \xi(x)^\top\nabla \xi(x)) \geq a_0.
\end{align*}
Due to the perturbation stability of $\lambda_{d-k}$, we have $\inf_{x\in\text{Ridge}(f)^\delta}\lambda_{d-k}(\nabla \xix{\wh f}^\top\nabla \xix{\wh f}) \geq \frac{1}{2}a_0$ with probability at least $1-n^{-B}$ when $n$ is large. This then implies the rank of $\nabla \xix{\wh f}$ is at least $d-k$ for $x\in\text{Ridge}(\wh f)$, if $\text{Ridge}(\wh f)\subset \text{Ridge}(f)^\delta$, which occurs with probability $1-n^{-B}$ according to Theorem~\ref{ridgeHausRate}. Furthermore, using the expression of $\nabla \xix{\wh f}$ in (\ref{nablaxiexp}) and the calculation in (\ref{xigrad}), it can be seen that $\Vx[\wh f]{i}^\top \nabla \xix{\wh f}=0$, $x\in\text{Ridge}(\wh f)$, for all $i=1,\cdots,k$, which implies that the rank of $\nabla \xix{\wh f}$ is at most $d-k$ for $x\in\text{Ridge}(\wh f)$. Hence {\bf (A3)}$_{\wh f}$ is satisfied.\\

(ii). Given part (i), this is a consequence of Lemma~\ref{ridge-ridgeness} and Theorem~\ref{pathconvergence}.\\

(iii). Given part (i), we have the following: For some $\delta>0$, we have for $|\alpha|=0,1,2,$
\begin{align}\label{pertridgeeta}
    \sup_{x\in[0,1]^d} |\partial^{(\alpha)}\wh \eta(x) - \partial^{(\alpha)}\wh \eta_\tau(x)| = O(\tau^2).
\end{align}
This can be shown by using standard techniques for calculating the rate of the bias in kernel density estimation. See, e.g., proof of Lemma 2 in \cite{arias2016estimation}. For $\tau$ small, we have $\text{Ridge}(\wh \eta_\tau)=\{\lim_{t\rightarrow\infty} \gamxt[\wh \eta_\tau],x\in\partial\Seps{\wh \eta_\tau,\wh f}\}$, as a result of Theorem~\ref{stability}(iii). This is a). Part b) is a consequence of Theorem~\ref{stability} (iv). Part c) immediately follows from part b) and Theorem~\ref{ridgeHausRate}.\\ \hbox{$\;$}\hfill$\blacksquare$\\

{\bf Proof of Theorem~\ref{main2}} 
For part (i), the convergence of the sequence follows from Theorem~\ref{main1}(i) and Facts 1 and 2 in the proof of Theorem~\ref{discretecomp}. Then the rates of convergence is a result after applying Theorem~\ref{main1}(i) and Corollary~\ref{perturbeddiscrete}. The result in part (ii) is a consequence of Theorem~\ref{main1} and Corollary~\ref{perturbeddiscrete}.\\
\hbox{$\;$}\hfill$\blacksquare$\\

{\bf Proof of Theorem~\ref{pathconvergence}} The assertion of part (i) is very similar to Lemma 2 in \cite{genovese2014nonparametric} and the proof is similar, too. Details are omitted. Part (ii) follows from LaSalle's Invariance Principle given in Theorem~\ref{LaSalle}. See the beginning of Section~\ref{ODEtheory} for how it is applied to this setting. Next we prove part (iii), for which we will use the technique in the proof of Theorem 2.6 in \cite{nicolaescu2011invitation}. Let
\begin{align}
    \xi^\eta_r : = \frac{\xinox{\eta}}{\|\xinox{\eta}\|^2},
\end{align}
and let $\gamma^\eta_r$ be the flow generated by this rescaled vector field as defined in Section~\ref{continuous}, that is, $\frac{\partial \gamma^\eta_r (x,t)}{\partial t} = \xi^\eta_r(\gamma^\eta_r(x,t)),\quad \gamma^\eta_r (x,0)=x$ for all $x \in \Sep{\eta,f}{\epsilon}.$ Here we require $\epsilon$ to be small enough that $\Sep{\eta,f}{\epsilon}\subset(0,1)^d$, which is possible following Lemma~\ref{ridgetube}. 
We first show part (iii) with $\gamma^{\eta}$ replaced by $\gamma^\eta_r.$ Observe that
\begin{align}\label{unitspeed}
    \frac{\partial \eta(\gamma^\eta_r (x,t)) }{\partial t} = [\nabla \eta(\gamxt[\eta])]^\top \xi^\eta_r(\gamma^\eta_r(x,t))= 1.
\end{align}
In other words, the level of $\eta$ can be used to parametrize the integral curves $\gamma^\eta_r (x,t)$, so that for any $\epsilon_1, \epsilon_2\in(0,\epsilon)$ with $ \epsilon_1>\epsilon_2$, we have
\begin{align}
    &\partial \Sep{\eta,f}{\epsilon_1} = \{\gamma^\eta_r (x,\epsilon_2-\epsilon_1): x\in \partial \Sep{\eta,f}{\epsilon_2}\}\label{epsilon1to2},\\
    &\partial \Sep{\eta,f}{\epsilon_2} = \{\gamma^\eta_r (x,\epsilon_1-\epsilon_2): x\in \partial \Sep{\eta,f}{\epsilon_1}\}\label{epsilon2to1}.
\end{align}
Both (\ref{epsilon1to2}) and (\ref{epsilon2to1}) are similar to Theorem 2.6 in \cite{nicolaescu2011invitation}. We only show (\ref{epsilon1to2}). Using (\ref{unitspeed}), it is clear that for any $x\in {\partial}\Sep{\eta,f}{\epsilon_2}$, $\gamma^\eta_r (x,\epsilon_2-\epsilon_1)\in \partial \Sep{\eta,f}{\epsilon_1}$. This means $\{\gamma^\eta_r (x,\epsilon_2-\epsilon_1): x\in \partial \Sep{\eta,f}{\epsilon_2}\}\subset \partial \Sep{\eta,f}{\epsilon_1}$. 
Note that for any $x\in \partial \Sep{\eta,f}{\epsilon_1}$, there exists $\tilde x = \gamma^\eta_r (x,\epsilon_1-\epsilon_2)$ so that $x = \gamma^\eta_r (\tilde x,\epsilon_2-\epsilon_1) $. This means that $\partial \Sep{\eta,f}{\epsilon_1} \subset \{\gamma^\eta_r (x,\epsilon_2-\epsilon_1): x\in \partial \Sep{\eta,f}{\epsilon_2}\}$. Hence (\ref{epsilon1to2}) is verified.

By using (\ref{epsilon1to2}) and (\ref{epsilon2to1}), Lemma~\ref{ridgetube} implies that, for all $\epsilon^\prime >0$ small enough
\begin{align}\label{hausbound}
   L_1\sqrt{\epsilon^\prime} \leq d_H(\{\gamma^\eta_r (x,{\epsilon-\epsilon^\prime}): x\in {\partial}\Seps{\eta,f}\},\text{Ridge}(f)) \leq L_2\sqrt{\epsilon^\prime}.
\end{align}

Next we show that 
\begin{align}\label{ridgeequiv}
\text{Ridge}(f) = \{\lim_{t\rightarrow \epsilon^-}\gamma^\eta_r(x,t): x \in \partial \Seps{\eta,f}\}.
\end{align}
As shown below, $\gamma^\eta_r(x,\cdot)$ and $\gamx[]{\eta}{\cdot}$ have the same trajectories, and in particular, \begin{align}
\label{samelimit}
\lim_{t\rightarrow\infty} \gamx[]{\eta}{t} = \lim_{s\rightarrow \epsilon^-} \gamx[r]{\eta}{s} \text{ for all } x\in \Seps{\eta,f}.
\end{align}
Recall that $\text{Ridge}(f)$ is a compact set by Lemma~\ref{misc1}. The set $\{\lim_{t\rightarrow \epsilon^-}\gamma^\eta_r(x,t): x
    \in \partial \Seps{\eta,f}\}$ is also compact. This is because $\lim_{t\rightarrow \epsilon^-}\gamma^\eta_r(x,t)$ is a continuous function of $x$ (see Fact~\ref{fact6} below) and $\partial  \Seps{\eta,f}$ is a compact set. Suppose that \eqref{ridgeequiv} is not true. Then there must exist a constant $\delta_0>0$ such that 
    \begin{align*}
        \delta_0 \leq d_H(\{\lim_{t\rightarrow \epsilon^-}\gamma^\eta_r(x,t): x
    \in \partial \Seps{\eta,f}\}, \text{Ridge}(f)).
    \end{align*}
It follows from follows from \eqref{samelimit} and (ii) that $\{\lim_{t\rightarrow \epsilon^-}\gamma^\eta_r(x,t): x
    \in \partial \Seps{\eta,f}\} \subset \text{Ridge}(f)$, and hence there exists $x_0\in\text{Ridge}(f)$ such that 
    \begin{align}\label{contradiction}
    \inf_{x\in \partial \Seps{\eta,f}}\| x_0 - \lim_{t\rightarrow \epsilon^-}\gamma^\eta_r(x,t)\| \ge \delta_0.
    \end{align}
    By \eqref{hausbound}, there exists $x_1\in\partial \Seps{\eta,f}$ and $\epsilon^\prime=(\delta_0/(3(L_2\vee C^\prime)))^2$, where $C^\prime$ is given in Fact~\ref{fact4} below, such that with $x_2=\gamma^\eta_r (x_1,{\epsilon-\epsilon^\prime})$, $\|x_2 - x_0\| \le L_2\sqrt{\epsilon^\prime} \le \delta_0/3$. Let $x_3=\lim_{t\rightarrow \epsilon^-}\gamma^\eta_r(x_0,t)$. Using Fact~\ref{fact4}, we then have $\|x_2-x_3\|\le \int_0^{\epsilon^\prime}\|\gamma^\eta_r(x,t)\|dt\le C^\prime\sqrt{\epsilon^\prime} \le \delta_0/3$. The triangle inequality gives $\|x_3-x_0\|\le 2\delta_0/3 < \delta_0$, which  
    contradicts (\ref{contradiction}) and therefore (\ref{ridgeequiv}) has to be true, which by \eqref{samelimit} implies (iii).

Next we show that $\gamma^\eta_r(x,\cdot)$ and $\gamx[]{\eta}{\cdot}$  have the same trajectories, which makes the proof of (iii) complete. To this end we show a reparameterization relation between the two flows. For each $x\in\partial \Seps{\eta,f},$ let
\begin{align*}
    s(t)  &:= \int_0^t \|\xiii[]{\eta}{u}\|^2 du \\%
    & =  \int_0^t \left[\frac{\partial \eta(\gamx[]{\eta}{u})}{\partial u} \right]  du \\
   & =  \eta(\gamx[]{\eta}{t}) - \eta(\gamx[]{\eta}{0}) \\
    &=  \eta(\gamx[]{\eta}{t}) - \eta(x),
\end{align*}
where the first equality is using (\ref{prop}). Note that we have suppressed the dependence of $s(t)$ on $x$ in the notation. Let $t(s)$ be the inverse of $s(t)$. Then $t(0)=0,$ and
\begin{align}
    \frac{dt(s)}{ds} = \frac{1}{  \|\xiii[]{\eta}{t(s)}\|^2}. 
\end{align}
We obtain $\gamma^\eta_r(x,s) = 
\gamx[]{\eta}{t(s)}$, because
\begin{align}
    \frac{\partial\gamx[]{\eta}{t(s)}}{\partial s} = \xi_r^\eta(\gamma^\eta(x,t(s))),\; \;\;\gamx[]{\eta}{t(0)}=x.
\end{align}
Note that as $t\rightarrow\infty$, we have $s(t)\rightarrow - \eta(x) = \epsilon$ for all $x\in \Seps{\eta,f}$, because $\lim_{t\to\infty}\gamx[]{\eta}{t}\in$ Ridge($f$). Hence we get \eqref{samelimit}.
\\ \hbox{$\;$}
\hfill$\blacksquare$\\

{\bf Proof of Theorem~\ref{stability}.}
First we show that 
\begin{align}\label{wtetaal}
\text{Ridge}(\wt\eta)\cap \Seps{\wt\eta,f} = R(\wt \eta)\cap \Seps{\wt\eta,f},
\end{align}
where $R(\wt \eta) = \{x\in[0,1]^d: \xix{\wt\eta} =0 \}$. Using Lemma~\ref{misc1} and the continuity of eigenvalues as functions of symmetric matrices, when $\max(\delta_0,\delta_1,\delta_2)$ is small enough, we have that for all $x \in \text{Ridge}(f)^{\delta^\prime},$ 
\begin{align}\label{eigenrange2}
 - \frac{1}{2}\alpha \ge \lamx[\wt\eta]{k+1} \ge \cdots\geq\lamx[\wt\eta]{d} \ge - 2A.
\end{align} 
Then for any $\epsilon$ small enough we can choose $\delta_0\in [0,\epsilon)$ small enough such that
\begin{align}\label{sepsilonincl}
     \Sep{\eta,f}{\epsilon-\delta_0} \subset \Seps{\wt\eta,f}\subset \Sep{\eta,f}{\epsilon+\delta_0}\subset \text{Ridge}(f)^{\delta^\prime},
\end{align}
and hence we get (\ref{wtetaal}). Then (i) and (ii) follows from similar arguments for Theorem~\ref{pathconvergence} (i) and (ii).  %

Next we prove (iii). %

Following a similar argument as in the proof of Lemma~\ref{ridgetube}, we can show that for all $\epsilon^\prime>0$ small enough 
\begin{align}\label{Lipschitz2}
    \wt L_1\sqrt{\epsilon^\prime} \leq d_H(\partial \Sep{\wt\eta,f}{\epsilon^\prime},\text{Ridge}(\wt \eta)\cap \Seps{\wt\eta,f}) \leq \wt L_2\sqrt{\epsilon},
\end{align}
for some constants $\wt L_1, \wt L_2>0.$ Then (iii) can be proved following the similar arguments for Theorem~\ref{pathconvergence} (iii). 
Using similar arguments given in the proof of Theorem~\ref{ridgeHausRate}, we can show that there exists a constant $C>0$ such that when $\max(\delta_0,\delta_1,\delta_2)$ and $\epsilon>0$ are small enough, 
\begin{align}\label{2etadiff}
    d_H(\text{Ridge}(\wt\eta)\cap \Seps{\wt\eta,f}, \text{Ridge}(\eta)\cap \Seps{\wt\eta,f}\big) \leq C\max(\delta_1,\delta_2).
\end{align}
The result in (iv) follows from \eqref{sepsilonincl}, (\ref{2etadiff}) and Lemma~\ref{ridge-ridgeness}.

\hfill $\blacksquare$\\

{\bf Proof of Theorem~\ref{discretecomp}.} We will first prove several facts that then lead to Theorem~\ref{discretecomp}. Throughout the proofs below we assume without further mention that assumptions {\bf (A1)$_{f,6}$},  {\bf (A2)$_f$}, and {\bf (A3)$_f$} hold. 

{\bf Notation:} In order to ease the notation in this long proof, we will drop the superscript $\eta$ and write, for instance, $\xi, \gamma, \gamma_a, \lamx[]{k+1},$ $\Proj[x]{},$ $\Vx[]{k+1},$ $\Vbot{}$ for $\xi^\eta, \gamma^\eta, \gamma_a^\eta, \lamx[\eta]{k+1}, \Proj[x]{\eta},$ $\Vx[\eta]{k+1}$ and $\Vbot{\eta}$, respectively. We also write $\Seps{}$ for $\Seps{\eta,f}$. Recall that the superscript notation is defined in Section~\ref{notation} and in Sections~\ref{continuous} and \ref{discrete}.

\begin{customfact}{1}
\label{fact1}
If $\epsilon, a > 0$ are small enough, then, for any starting point $x\in \partial \Seps{}$, the sequence $\gamx[a]{}{\ell},\ell = 0,1,2,\cdots$ stays in $\Seps{}$ and converges to a ridge point in $\Seps{}$.
\end{customfact}

\begin{proof} By (\ref{hausdist}), we can choose $\epsilon$ small enough that $\Seps{}\subset\text{Ridge}(f)^{(\frac{1}{2}\delta^\prime)}$, where $\delta^\prime$ is given in Lemma~\ref{misc1}. Denote $\Sep{}{\epsilon,\delta^\prime}=\big(\Seps{}\big)^{(\frac{1}{2}\delta^\prime)}$. Under assumption {\bf (A2)$_f$} we have 
\begin{align}\label{eigenvaluebound}
    \sup_{x\in \Sep{}{\epsilon,\delta^\prime}} \lamx[f]{k+1}\leq -\beta <0.
\end{align} 
Since $\Seps{}$ is a compact set when $\epsilon$ is small enough and  $\|\xinox{}\|$ is a continuous function on $\Seps{}$ with $\|\xinox{}\|=0$ on $\text{Ridge}(f)$, we can choose $\epsilon$ small enough that $\sup_{x\in \Seps{}} \|\xix{}\|<\frac{1}{2}\delta^\prime$. 

Let $\kappa(y) = \sup\{ \| \nabla^2\eta(z) \|_{\text{op}}: z\in \mathcal{B}(y,\|\xiy{ }\| ) \},$ where $\|\cdot\|_{\text{op}}$ is the operator norm of a matrix. From Lemma~\ref{misc1} it is known that $0<\alpha<\kappa(y)<A<\infty$, for all $y\in \Seps{}$. Choose $0<a<A^{-1}\wedge 1$. Using a Taylor expansion, we have for any $y\in\Seps{}$,
\begin{align*}
\eta(y + a \xiy{}) = \eta(y) + a \nabla\eta(y)^\top \xiy{} + R(y,a) = \eta(y) + a\|\xiy{ }\|^2+ R(y,a),
\end{align*}
where $|R(y,a)| \leq \frac{1}{2} a^2 \kappa(y) \|\xiy{ }\|^2 \leq \frac{1}{2} a\|\xiy{ }\|^2.$ Therefore 
\begin{align*}
\eta(y + a \xiy{ }) \ge \eta(y) + \frac{1}{2} a\|\xiy{ }\|^2 \ge \eta(y).
\end{align*}
Thus, the sequence $\eta(\gamx[a]{ }{\ell}),\ell=0,1,2,\cdots$ is upper bounded by 0 and increasing, and therefore convergent. We can see that if $\gamx[a]{ }{\ell}\in \Seps{}$ then $\gamx[a]{ }{\ell+1}\in \Sep{}{\epsilon,\delta^\prime}$ and hence $\gamx[a]{ }{\ell+1}\in \Seps{}$ using (\ref{eigenvaluebound}). In other words, $\gamx[a]{ }{\ell}$ stays in $\Seps{}$ for all $\ell\geq0.$ Moreover, using the above inequality, we have
\begin{align*}
\eta(\gamx[a]{ }{\ell + 1}) -  \eta(\gamx[a]{ }{\ell}) \geq \frac{1}{2} a \|\xinox{ }(\gamx[a]{ }{\ell})\|^2 ,
\end{align*} 
and therefore
\begin{align*}
\lim_{\ell\rightarrow\infty}  \|\xinox{ }(\gamx[a]{ }{\ell})\| \rightarrow 0.
\end{align*}
Observe further that by using Fact~\ref{fact2} below, $\{\gamx[a]{ }{\ell}\}_\ell$ is a Cauchy sequence, and from what we have just shown, its limit has to be a point on the ridge (and also in $\Seps{}$).
\end{proof}

\begin{customfact}{2}
\label{fact2}
There exists a constant $c_1 > 0$ such that for $a, \epsilon > 0$ small enough, we have for all $x \in \partial\Seps{},$ and $\ell = 0,1,2,\cdots,$
\begin{align}\label{decreseq}
\| \gamx[a]{ }{\ell+2} -\gamx[a]{ }{\ell+1} \| \leq (1- c_1a) \| \gamx[a]{ }{\ell+2} -\gamx[a]{ }{\ell} \|.
\end{align}
As a consequence, the maximal length of the discretized paths with starting points in $\partial\Seps{}$ is bounded by $C\sqrt{\epsilon}$ for a constant $C>0$ not depending on $a$ , i.e., 
\begin{align*}
\sup_{x\in \partial \Seps{}} \sum_{\ell=0}^\infty \| \gamx[a]{ }{\ell+1} -\gamx[a]{ }{\ell} \| \le C\sqrt{\epsilon}.
\end{align*}
\end{customfact}

\begin{proof} We assume that the $a$ and $\epsilon$ are small enough that the sequence $\gamx[a]{}{\ell},\ell = 0,1,2,\cdots$ stays in $\Seps{}$, as given in Fact~\ref{fact1} throughout the proof. First notice that if there exists $\ell_0\geq 0$ such that $\xinox{ }(\gamx[a]{ }{\ell_0})=0$, then $\gamx[a]{ }{\ell_0}=\gamx[a]{ }{\ell_0 +1 }=\gamx[a]{ }{\ell_0 + 2}=\cdots$, and the conclusion of this fact is valid. We thus can assume that $\xinox{ }(\gamx[a]{ }{\ell})\ne 0$ for all $\ell\geq0.$ 
We have the following Taylor expansion
\begin{align*}
\|\xinox{ }(\gamx[a]{ }{\ell + 1})\|^2 &= \|\xinox{ }(\gamx[a]{ }{\ell}\|^2 + \big\langle\nabla \|\xinox{ }(y)\|^2 |_{y=\gamx[a]{ }{\ell}} , \gamx[a]{ }{\ell + 1} - \gamx[a]{ }{\ell} \big\rangle \\
& \quad + \frac{1}{2} [\gamx[a]{ }{\ell + 1} - \gamx[a]{ }{\ell}]^\top \{\nabla^2 \|\xinox{ }(y)\|^2 |_{y=\wt y_\delta}\} [\gamx[a]{ }{\ell + 1}- \gamx[a]{ }{\ell}] \\[5pt]
& = \|\xinox{ }(\gamx[a]{ }{\ell}\|^2 + a \big\langle \nabla \|\xinox{ }(y)\|^2 |_{y=\gamx[a]{ }{\ell}}, \xinox{ }(\gamx[a]{ }{\ell} \big\rangle \\
& \qquad\quad+ a^2 \frac{1}{2} [\xinox{ }(\gamx[a]{ }{\ell}]^\top \{\nabla^2 \|\xinox{ }(y)\|^2 |_{y=\wt y_{t,\ell}}\} [\xinox{ }(\gamx[a]{ }{\ell}] 
\end{align*}
where $\wt y_{t,\ell} = t \gamx[a]{ }{\ell} + (1-t)\gamx[a]{ }{\ell + 1}$ for some $t\in(0,1)$. From the proof of Fact~\ref{fact1}, we see that $a$ and $\epsilon$ can be chosen small enough that $\{\wt y_{t,\ell}:t\in[0,1]\}\subset \Seps{}$ for all $\ell\geq0,$ which will also be assumed for the remaining proofs. 

Let $\nu(y) = \lambda_{\max}[\nabla^2 \|\xinox{ }(y)\|^2]$, where $\lambda_{\rm max}(B)$ denotes the maximum eigenvalue of a symmetric matrix $B$. For each $y\in\text{Ridge}(f)$, because $\nabla\eta(y)=0$ and $\xi(y)=0$, we can write 
\begin{align*}
\nabla^2 \|\xinox{ }(y)\|^2 = 2[\nabla\xinox{ }(y)]^\top\nabla\xinox{ } (y)=2\nabla^2\eta(y) \Vbot[y]{ }[\Vbot[y]{ }]^\top\nabla^2\eta(y).
\end{align*}
Recalling that $\Vbot[y]{ }$ is the matrix built by the trailing $k-d$ (unit) eigenvectors of $\nabla^2\eta(y)$, we see that $\nu(y)=2[\lam[ ]{d}(y)]^2$ for $y\in\text{Ridge}(f).$ Since $\nu$ is a continuous function on $\text{Ridge}(f)^{\delta^\prime}$, where $\delta^\prime$ is given in Lemma~\ref{misc1}, we can find an $\epsilon>0$ small enough that $0<\lambda_{\max}^* : = \sup_{y\in \Seps{}}\nu(y)<\infty.$ 
We thus obtain
\begin{align}\label{squaredbound}
&\|\xinox{ }(\gamx[a]{ }{\ell + 1}\|^2  \nonumber\\ &\leq\|\xinox{ }(\gamx[a]{ }{\ell}\|^2 + 2a [\xinox{ }(\gamx[a]{ }{\ell}]^\top[\nabla \xinox{ }(\gamx[a]{ }{\ell}] [\xinox{ }(\gamx[a]{ }{\ell}] + \frac{1}{2}a^2 \|\xinox{ }(\gamx[a]{ }{\ell })\|^2 \lambda_{\max}^*.
\end{align}
Hence
\begin{align}\label{ratiotest}
\frac{\| \gamx[a]{ }{\ell + 2} - \gamx[a]{ }{\ell + 1} \|^2}{\| \gamx[a]{ }{\ell + 1} - \gamx[a]{ }{\ell} \|^2} &= \frac{\|\xinox{ }(\gamx[a]{ }{\ell + 1 })\|^2 }{\|\xinox{ }(\gamx[a]{ }{\ell })\|^2} \nonumber \\
&\hspace*{-2cm}\leq 1 + 2 a \frac{[\xinox{ }(\gamx[a]{ }{\ell })]^\top[\nabla \xinox{ }(\gamx[a]{ }{\ell })] [\xinox{ }(\gamx[a]{ }{\ell })]}{\|\xinox{ }(\gamx[a]{ }{\ell })\|^2} + \frac{1}{2}a^2\lambda_{\max}^*.
\end{align}
For any $y\in\Seps{}$, let $\mathscr{N}(y)=\{\Proj[y]{} u:\; u\in\mathbb{R}^d\}$, which is the $(d-k)$-dimensional subspace spanned by the orthonormal eigenvectors $\Vy[  ]{k+1},\cdots,\Vy[  ]{d}$. Also let ${\mathcal S}_\circ(y) = \{v/\|v\|, v \in \mathscr{N}(y) \setminus \{\mathbf{0}\}\}$, which is a $(d-k)$-dimensional unit sphere in $\R^{d}$. Define
\begin{align}\label{barbeta}
    \bar\beta(y) = \sup_{v\in \mathscr{N}(y)\backslash\{ \mathbf{0}\}} \frac{v^\top \nabla \xiy{} v}{\|v\|^2} = \sup_{w\in {\mathcal S}_\circ(y)} w^\top \nabla \xiy{} w.
\end{align}
Then from (\ref{ratiotest}) we can write 
\begin{align}
\label{ratiobound}
\frac{\| \gamx[a]{  }{\ell + 2} - \gamx[a]{  }{\ell + 1} \|}{\| \gamx[a]{  }{\ell + 1} - \gamx[a]{  }{\ell} \|} \leq 1 +  a \bar\beta(\gamx[a]{ }{\ell }))+ \frac{1}{4}a^2\lambda_{\max}^* .
\end{align}
For any $y\in\text{Ridge}(f)$, we can write $\nabla \eta(y)=0$ and  
    $\nabla \xiy{  } = \Vbot[y]{  }[\Vbot[y]{  }]^\top \nabla^2 \eta(y) = \sum_{j=k+1}^d \lam[  ]{j}(y) \Vy[  ]{j}\Vy[  ]{j}^\top,$ 
and therefore
\begin{align}\label{betaupperbound}
      \bar\beta(y) = \lam[  ]{k+1}(y) <0.
\end{align} 
Next we will show that $\bar\beta$ is continuous in $\text{Ridge}(f)^\delta$. To this end, we will, for any $y,y_0\in\text{Ridge}(f)^\delta$, find a bijective map $q_{y,y_0}: {\mathcal S}_\circ(y) \rightarrow {\mathcal S}_\circ(y_0)$. Let $\pi/2\geq \theta_1\geq\cdots\geq\theta_{d-k}\geq0$ be the principal angles between $\mathscr{N}(y)$ and $\mathscr{N}(x_0)$, and suppose that $u_1(\tilde y),\cdots,u_k(\tilde y)$ are the associated principal vectors for $\mathscr{N}(\tilde y)$, where $\tilde y\in\{y,y_0\},$ respectively. In other words, if the singular value decomposition of $\Vbot[y]{  }^\top \Vbot[y_0]{  }$ is given by $P\Sigma R^\top$, where $P$ and $R$ are $(d-k)\times(d-k)$ orthogonal matrices and $\Sigma$ is a $(d-k)\times(d-k)$ diagonal matrix, then $[u_1(y),\cdots,u_k(y)]=\Vbot[y]{  }P$, $\Sigma=\cos\Theta$, and $[u_1(y_0),\cdots,u_k(y_0)]=\Vbot[y_0]{  }R$, where $\Theta=\text{diag}(\theta_1,\cdots,\theta_{d-k}).$ 
Using the Davis-Kahan theorem and Lemma~\ref{misc1}, we have 
\begin{align}\label{sintheta}
\|\sin\Theta\|_F \leq \frac{2}{\beta}\|\nabla^2\eta(y) - \nabla^2\eta(y_0)\|_F.
\end{align}
We choose $\|y-y_0\|$ small enough that $\theta_1\leq 2\pi/3$, so that $\sin\theta_i\geq \sin(\theta_i/2)$ for $i=1,\cdots,d-k$. Let $U(y,y_0)=[\bar u_1(y,y_0),\cdots,\bar u_{d-k}(y,y_0)]$, where $\bar u_i(y,y_0)=\frac{1}{\sqrt{2(1+\cos\theta_i)}}[u_i(y) + u_i(y_0)]$, $i=1,\cdots,d-k,$ and $\bar{\mathscr{N}}(y,y_0)$ be the column space of $U(y,y_0)$. Note that the columns of $U(y,y_0)$ are orthonormal and $\bar{\mathscr{N}}(y,y_0)$ is a subspace about which $\mathscr{N}(y)$ and $\mathscr{N}(y_0)$ are symmetric. Therefore the images of ${\mathcal S}_\circ(x)$ and ${\mathcal S}_\circ(x_0)$ under the projection map $\Omega_{y,y_0}:=U(y,y_0)[U(y,y_0)]^\top$ are the same. For $w\in{\mathcal S}_\circ(y)$, define $q_{y,y_0}(w)$ in such a way that $\Omega_{y,y_0}q_{y,y_0}(w)=w$. Here $q_{y,y_0}(w)$ is uniquely defined because $\Omega_{y,y_0}$ is bijective from either $\mathscr{N}(y)$ or $\mathscr{N}(y_0)$ to $\bar{\mathscr{N}}(y,y_0)$. For $w\in{\mathcal S}_\circ(y)$, using (\ref{sintheta}), we have
\begin{align}\label{qyydiff}
\|q_{y,y_0}(w) - w\| = 2\sqrt{\sin^2\Big(\frac{\theta_1}{2}\Big) + \cdots + \sin^2\Big(\frac{\theta_{d-k}}{2}\Big)} \leq 2\|\sin\Theta\|_F\leq \frac{4}{\beta}\|\nabla^2\eta(y) - \nabla^2\eta(y_0)\|_F.
\end{align}
Then we can write
\begin{align}\label{barbetadiff}
|\bar\beta(y) - \bar\beta(y_0)| 
& \leq  \Big|\sup_{w\in {\mathcal S}_\circ(y)} w^\top \nabla \xiy{  } w - \sup_{w\in {\mathcal S}_\circ(y)} w^\top \nabla \xii{  }{y_0} w\Big| \nonumber \\
&\hspace{1cm} + \Big|\sup_{w\in {\mathcal S}_\circ(y)} w^\top \nabla \xii{  }{y_0} w - \sup_{w\in {\mathcal S}_\circ(y_0)} w^\top \nabla \xii{  }{y_0} w\Big| \nonumber\\
& \leq \sup_{w\in {\mathcal S}_\circ(y)}\Big| w^\top \nabla \xiy{  } w - w^\top \nabla \xii{  }{y_0} w\Big| \nonumber \\
&\hspace{1cm} + \sup_{w\in {\mathcal S}_\circ(y)}\Big| w^\top \nabla \xii{  }{y_0} w -  q_{y,y_0}(w)^\top \nabla \xii{  }{y_0} q_{x,x_0}(w)\Big| \nonumber\\
& \leq \|\nabla \xiy{  } - \nabla \xii{  }{y_0}\|_F + 2 \|\nabla \xii{  }{y_0})\|_F \sup_{w\in {\mathcal S}_\circ(y)} \|w-q_{y,y_0}(w)\| \nonumber\\
&\leq \|\nabla \xiy{  } - \nabla \xii{  }{y_0}\|_F +  \frac{8\|\nabla \xii{  }{y_0})\|_F}{\beta}\|\nabla^2\eta(y) - \nabla^2\eta(y_0)\|_F.
\end{align}
Using the boundedness of $\|\nabla \xinox{  }\|_F$, we obtain that $\bar\beta(y)$ is a uniformly continuous function on $\text{Ridge}(f)^\delta$. Because the ridge is a compact set, using (\ref{betaupperbound}), we are able to find $\epsilon>0$ such that 
\begin{align}\label{beta0def}
    \beta_0: = \sup_{y\in \Seps{}} \bar\beta(y) <0,
\end{align}
and from (\ref{ratiobound}) we have that for all $\ell\geq 0$,
\begin{align*}
\frac{\| \gamx[a]{  }{\ell + 2} - \gamx[a]{  }{\ell + 1} \|}{\| \gamx[a]{  }{\ell + 1} - \gamx[a]{  }{\ell} \|} \leq 1 +  a \bar\beta(\gamx[a]{ }{\ell }))+ \frac{1}{4}a^2\lambda_{\max}^* \leq 1+ a\beta_0+ \frac{1}{4}a^2\lambda_{\max}^*.
\end{align*}
We require that $a\in(0,-\frac{2\beta_0}{\lambda_{\max}^*})$. Then
\begin{align*}
\frac{\| \gamx[a]{  }{\ell + 2} - \gamx[a]{  }{\ell + 1} \|}{\| \gamx[a]{  }{\ell + 1} - \gamx[a]{  }{\ell} \|} \leq 1+ \frac{1}{2}a\beta_0.
\end{align*}
This is (\ref{decreseq}) with $c_1=-\beta_0/2$. Let 
\begin{align}\label{kappadagger}
\kappa^\dagger(\epsilon)=\sup_{x\in \partial\Seps{}}\|\xix{  }\|.
\end{align}
Notice that $0 < 1 + a\beta_0 < 1$. We can then write for any $x\in\partial\Seps{}$,
\begin{align}\label{tabound}
    T_a(x) := \sum_{\ell=0}^\infty \| \gamx[a]{  }{\ell + 1} - \gamx[a]{  }{\ell} \| \leq & \sum_{\ell=0}^\infty a \|\xix{  }\| \Big(1+ \frac{1}{2}a\beta_0\Big)^\ell \leq \frac{2\kappa^\dagger(\epsilon)}{-\beta_0}.
\end{align}

For any $x\in\partial\Seps{}$,
\begin{align}
\label{xixbound}
\|\xix{  }\| \le \|\Vbot{}\Vbot{}^\top\|_F \|\nabla\eta(x)\| = \sqrt{d-k}\|\nabla\eta(x)\|.
\end{align}
Recall that the notation used in the proof of Lemma~\ref{ridge-ridgeness}, in particular, $x_0$ is the projection of $x$ onto $\text{Ridge}(f)$. It follows from \eqref{nablaetax} that 
\begin{align*}
\|\nabla\eta(x)\| \le A \|x-x_0\| + \kappa_1 \|x-x_0\|^2,
\end{align*}
where $A$ is given in Lemma~\ref{misc1}. Then using \eqref{hausdist}, we get
\begin{align*}
\|\nabla\eta(x)\| \le A \sqrt{\frac{8}{\alpha}} \sqrt{\epsilon} + \kappa_1 \frac{8}{\alpha} \epsilon \leq (A+\kappa_1) \sqrt{\frac{8}{\alpha}} \sqrt{\epsilon},
\end{align*}
where the last inequality is true when $8\epsilon\le\alpha$. Combined with \eqref{xixbound}, this leads to 
\begin{align}
\label{kappadaggerbound}
\kappa^\dagger(\epsilon) \le (A+\kappa_1) \sqrt{\frac{8(d-k)}{\alpha}} \sqrt{\epsilon}.
\end{align}
It then follows from \eqref{tabound} that $T_a(x) \le C\sqrt{\epsilon}$, where $C = \frac{2}{-\beta_0} (A+\kappa_1) \sqrt{\frac{8(d-k)}{\alpha}}$.
\end{proof}

Next we will show the continuity of $\lim_{\ell\rightarrow\infty} \gamx[a]{  }{\ell}$ as a function of $x$. \\

\begin{customfact}{3}
\label{fact3}
The following holds when $\epsilon$ is small enough: Let $x,x_0\in\partial \Seps{}$ be two starting points.  For any $\omega>0$, there exists $\delta_\omega>0$ such that when $\|x-x_0\|\leq \delta_\omega$, we have 
\begin{align*}
\|\lim_{\ell\rightarrow\infty} \gamx[a]{  }{\ell} - \lim_{\ell\rightarrow\infty} \gamma_a^{  }(x_0,\ell)\| \leq \omega.
\end{align*}
\end{customfact}

\begin{proof} Note that for $\tilde x\in\{x,x_0\}$,
\begin{align*}
\gamma_a^{  }(\tilde x,\ell+1) = \tilde x + \sum_{i=0}^\ell[\gamma_a^{  }(\tilde x,i+1) - \gamma_a^{  }(\tilde x,i)] = \tilde x + a \sum_{i=0}^\ell  \xi^{  }(\gamma^{  }_a(\tilde x,i)).
\end{align*}
Then using a Taylor expansion we have
\begin{align*}
 \gamma^{  }_a(x,\ell+1) - \gamma^{  }_a(x_0,\ell+1) 
=\; & (x-x_0) + a \sum_{i=0}^\ell \{ \xi (\gamma_a(x,i)) - \xi(\gamma_a(x_0,i)) \} \\
=\; & (x-x_0) + a \sum_{i=0}^\ell \int_0^1\nabla \xi(\gamma_{a,i}(t,x,x_0)) dt \{ \gamma_a(x,i) - \gamma_a(x_0,i) \},
\end{align*}
where $\gamma_{a,i}(t,x,x_0) = t \gamma_a(x,i) +(1-t) \gamma_a(x_0,i)$. As in the proof of Fact~\ref{fact1}, we can choose $\epsilon$ small enough such that $\Seps{}\subset\text{Ridge}(f)^{(\frac{1}{2}\delta^\prime)}$. Recalling that $\|\xix{}\|=0$ for $x\in\text{Ridge}(f)$, we obtain from Fact~\ref{fact2} that $\epsilon$ and $a$ can be chosen small enough that $\max\{T_a(x),T_a(x_0)\}\leq \frac{1}{6}\delta^\prime$ for all $x,x_0\in\partial \Seps{}$, where $T_a$ is defined in \eqref{tabound}. Suppose $\|x-x_0\|\leq \frac{1}{6}\delta^\prime$ so that by elementary geometric facts,  $\gamma_{a,i}(t,x,x_0) \in \text{Ridge}(f)^{\delta^\prime}$ for all $i\geq 0$, and $t\in[0,1]$. Hence with 
\begin{align}\label{kappastar}
\kappa^*:=\sup_{x \in \text{Ridge}(f)^{\delta^\prime}}\|\nabla \xix{  })\|_F<\infty,
\end{align}
we have
\begin{align}\label{sequencenorm}
\|\gamma_a(x,\ell+1) - \gamma_a(x_0,\ell+1)\| 
\leq \;& \|x-x_0\| + a \kappa^* \sum_{i=0}^\ell \| \gamma_a(x,i) - \gamma_a(x_0,i)\|. 
\end{align}

We use the following discrete Gronwall's inequality \cite[see][]{holte2009discrete}:
{\em Let $\{y_n\}$ and $\{g_n\}$ be nonnegative sequences and $c$ a nonnegative constant. If $$y_n\leq c + \sum_{0\leq k <n} g_k y_k, \; n\geq 0,$$ then $$y_n\leq c\exp\left( \sum_{0\leq j<n} g_j\right),\; n\geq 0.$$
}
Applying this inequality to (\ref{sequencenorm}), we get
\begin{align}\label{Gronwallineq}
\|\gamma_a(x,\ell+1) - \gamma_a(x_0,\ell+1)\| \leq \|x-x_0\|\exp(a\kappa^* \ell).
\end{align}
Recall $\kappa^\dagger=\kappa^\dagger(\epsilon)$ given in (\ref{kappadagger}). Using the argument in the proof of Fact 2 (in particular, see (\ref{tabound})), we have that for any positive integer $N$,
\begin{align}\label{totalengthbound}
   \|\gamx[a]{  }{N} - \lim_{\ell\rightarrow\infty} \gamx[a]{  }{\ell}\| \leq &\sum_{\ell=N}^\infty \| \gamx[a]{  }{\ell+1} -\gamx[a]{  }{\ell} \| \nonumber \\
    \leq & a \kappa^\dagger \sum_{\ell=N}^\infty  [1- ac_1]^\ell \nonumber \\
    =& \kappa^\dagger \frac{[1- ac_1 ]^N}{c_1},
\end{align}
where $c_1$ is given in \eqref{decreseq} and $a$ is small enough that $0 < 1- ac_1 < 1.$ We then obtain for $N_\omega:=\log(\frac{1}{3\kappa^\dagger } c_1\omega )/\log(1-ac_1)$, and any $x,x_0\in\partial \Seps{}$,
\begin{align}
&\|\gamx[a]{  }{N_\omega + 1} - \lim_{\ell\rightarrow\infty} \gamx[a]{  }{\ell}\| \leq \frac{\omega}{3}, \label{gammadistx}
\\
&\|\gamma_a(x_0,N_\omega + 1) - \lim_{\ell\rightarrow\infty} \gamma_a(x_0,\ell)\| \leq \frac{\omega}{3}.\label{gammadistx0}
\end{align}
Using the fact 
  $  \frac{t}{1+t} \leq \log(1+t) \leq t,$ for all $t>-1$,
we get
\begin{align*}
    1-ac_1 \leq \frac{-ac_1}{\log(1-ac_1)} \leq 1,
\end{align*}
and consequently
\begin{align*}
    \exp(a\kappa^* N_\omega ) = \exp \left(\frac{-ac_1}{\log(1-ac_1)} \Big[-\frac{\kappa^*}{c_1}\log( \frac{c_1\omega}{3\kappa^\dagger }) \Big]\right)\leq \exp \left(-\frac{\kappa^*}{c_1}\log\Big(  \frac{c_1\omega}{3\kappa^\dagger}\Big) \right).
\end{align*}
Using (\ref{Gronwallineq}), we choose $\delta_\omega>0$ small enough (independent of $a$ once it satisfied the above requirements) such that when $\|x-x_0\|\leq \delta_\omega$,
\begin{align*}
\|\gamx[a]{  }{N_\omega+1} - \gamma^*_a(x_0,N_\omega+1)\| &\leq \|x-x_0\|\exp(a\kappa^* N_\omega )\\
&\leq \delta_\omega \exp \left(-\frac{\kappa^*}{c_1}\log\Big( \frac{c_1 \omega}{3\kappa^\dagger}\Big) \right) \leq \frac{\omega}{3}.
\end{align*}
Combining this with (\ref{gammadistx}) and (\ref{gammadistx0}), we have
\begin{align*}
& \|\lim_{\ell\rightarrow\infty} \gamx[a]{  }{\ell} - \lim_{\ell\rightarrow\infty} \gamma_a(x_0,\ell)\| \\
\leq &\, \|\gamx[a]{  }{N_\omega+1} - \lim_{\ell\rightarrow\infty} \gamx[a]{  }{\ell}\| + \|\gamx[a]{  }{N_\omega + 1} - \gamma_a(x_0,N_\omega+1)\| \\
&\hspace*{3cm}+\|\gamma_a(x_0,N_\omega+1) - \lim_{\ell\rightarrow\infty} \gamma_a(x_0,\ell)\| \\
 \leq & \,\omega.
\end{align*}
This proves the assertion of this lemma. Note that the set $\partial \Seps{}$ is a compact set, so the continuity of the function $x\mapsto \lim_{\ell\rightarrow\infty} \gamx[a]{  }{\ell}$ on this set is equivalent to its uniform continuity. 
\end{proof}

The following fact is a continuous version of Fact~\ref{fact2}.

\begin{customfact}{4}
\label{fact4}
When $\epsilon>0$ is small enough, the maximal length of the paths $\gamxt[  ]$ with starting points in $\partial\Seps{}$ is bounded by $C^\prime\sqrt{\epsilon}$ for a constant $C^\prime>0$ not depending on $a$ , i.e., 
\begin{align*}
\sup_{x\in \Seps{}}\int_0^\infty \|\xi(\gamxt[])\|dt \le C^\prime\sqrt{\epsilon}.
\end{align*}
\end{customfact}

\begin{proof} The proof is similar to that of Fact 2, but more involved. Let $t_\ell = a \ell$. Then notice that 
\begin{align*}
    \int_0^\infty \|\xi(\gamxt[  ])\|dt= \sum_{\ell=0}^\infty \int_{t_{\ell}}^{t_{\ell+1}} \|\xi(\gamxt[  ])\|dt.
\end{align*}
For each $\ell\geq 0$, using a Taylor expansion of the function $s \mapsto \int_{t_{\ell}}^{s} \|\xi(\gamxt[  ])\|dt$, we have
\begin{align}\label{tellint}
\int_{t_{\ell}}^{t_{\ell+1}} \|\xi(\gamxt[  ])\|dt = a\|\xi(\gamx{  }{t_\ell})\| + \frac{1}{2}a^2 \frac{\partial}{\partial t} \|\xi(\gamxt[  ])\| \Big|_{t=\tilde t_\ell},
\end{align}
where $\tilde t_\ell=(1-\delta_\ell) t_\ell + \delta_\ell t_{\ell+1}=t_\ell + a\delta_\ell$ for some $\delta_\ell\in(0,1).$ Here
\begin{align}\label{lengthderiv}
    \frac{\partial}{\partial t} \|\xi(\gamxt[  ])\| = & \left\{ \left[\nabla \|\xii{  }{y}\| \right]^\top \xii{  }{y} \right\}\Big|_{y=\gamxt[  ]} \nonumber\\
    = & \left\{\|\xii{  }{y}\| \theta(y) \right\}|_{y=\gamxt[  ]},
\end{align}
where $\theta(y) = \xi_\diamond   (y)^\top \nabla \xii{  }{y}  \xi_\diamond   (y)$ with $\xi_\diamond   (y) = \xii{   }{y}/\|\xii{   }{y}\|$. So from (\ref{tellint}) we can write
\begin{align}\label{xiintegral}
\int_{t_{\ell}}^{t_{\ell+1}} \|\xi(\gamxt[  ])\|dt = a\|\xi(\gamx{  }{t_\ell}\| + \frac{1}{2}a^2 \|\xi(\gamx{  }{\tilde t_\ell)}\| \theta(\gamx{  }{\tilde t_\ell}).
\end{align}
Recall that in the proof of Fact~\ref{fact2} we have shown that when $\epsilon>0$ is small enough,
\begin{align}\label{thetanegative}
    \sup_{y\in \Seps{}\backslash\text{Ridge}(f)} \theta(y) \leq \sup_{y\in \Seps{}}\bar\beta(y) = \beta_0 <0,
\end{align}
which, by (\ref{lengthderiv}), implies that 
\begin{align}
\label{xinormdecrease}
\|\xi(\gamxt[  ])\| \text{ is a decreasing function of }t.
\end{align}
Corresponding to the definition of $\bar \beta$ in (\ref{barbeta}), define
\begin{align}\label{barbelowbeta}
    \barbelow\beta(y) = \inf_{v\in \mathscr{N}(y)\backslash\{ \mathbf{0}\}} \frac{v^\top \nabla \xiy{  } v}{\|v\|^2} = \inf_{w\in \mathscr{S}_\circ(y)} w^\top \nabla \xiy{  }w.
\end{align}
Using an argument similar to (\ref{qyydiff}) and (\ref{barbetadiff}), we can show that $\barbelow\beta$ is a continuous function on $\text{Ridge}(f)^{\delta^\prime}$. Similar to (\ref{betaupperbound}), we see that for any $y\in\text{Ridge}(f)$, $\barbelow\beta(y) = \lambda_{d}(y) <0$. Therefore we can find $\epsilon>0$ small enough that $\beta_1 := \inf_{y\in \Seps{}} \barbelow\beta(y) >-\infty.$ Note that $\beta_1\leq \beta_0<0$. Then 
\begin{align}
\label{thetalower}
\inf_{y\in \Seps{}\backslash\text{Ridge}(f)} \theta(y) \geq \beta_1.
\end{align}

Let $\rho_\ell(x) = \gamx{  }{t_{\ell+1}} - \gamx{  }{t_{\ell}}$. Similar to (\ref{squaredbound}) we have
\begin{align}\label{xicontibound}
&\|\xi(\gamx{  }{t_{\ell+1}})\|^2 \nonumber\\
 & \leq \|\xi(\gamx{  }{t_{\ell}})\|^2 + 2 [\rho_\ell(x)]^\top[\nabla \xi(\gamx{  }{t_{\ell}})] \xi(\gamx{  }{t_{\ell}}) + \frac{1}{2}\|\rho_\ell(x)\|^2 \lambda_{\max}^*.
\end{align}
We have the following Taylor expansion
\begin{align*}
    \rho_\ell(x) = \int_{t_\ell}^{t_{\ell+1}} \xi(\gamxt[  ]))dt = a \xi(\gamx{  }{t_{\ell}}) + a^2 [\nabla \xii{  }{y}\xii{  }{y}]] \big|_{y=\gamma(x,\bar t_\ell)},
\end{align*}
where $\bar t_\ell = t_\ell + \bar\delta_\ell \times a$ for some $\bar\delta_\ell\in[0,1].$ Note that $\bar t_\ell$ in the Taylor expansion may be different for the entries of the vector $\rho_\ell$. Recall $\kappa^*$ given in (\ref{kappastar}). Then we have
\begin{align*}
    \Big\|[\nabla \xii{  }{y}\xii{  }{y}] \big|_{y=\gamx{  }{\bar t_\ell}} \Big\| \leq \kappa^* \|\xi(\gamx{  }{\bar t_\ell})\| \leq \kappa^* \|\xi(\gamx{  }{t_\ell})\|,
\end{align*}
where we have used \eqref{xinormdecrease}, and hence
\begin{align*}
    \|\rho_\ell(x)\| \leq  \|\xi(\gamx{  }{t_\ell})\| (a+ a^2 \kappa^*).
\end{align*}
Then from (\ref{xicontibound}) we have
\begin{align*}
\|\xi(\gamma&(x,t_{\ell+1}))\|^2 \\
&\leq  \|\xi(\gamx{  }{t_{\ell}})\|^2 + 2 a [\xi(\gamx{  }{t_{\ell}})]^\top[\nabla \xi
  (\gamx{  }{t_{\ell}})] \xi(\gamx{  }{t_{\ell}}) \\
& \hspace{0.8cm} + 2 a^2 [[\nabla \xii{  }{y}\xii{  }{y}] \big|_{y=\gamx{  }{\bar t_\ell}}]^\top[\nabla \xi  (\gamx{  }{t_\ell})] \xi(\gamx{  }{t_\ell}) + \frac{1}{2}\|\rho_\ell(x)\|^2 \lambda_{\max}^* \\
&\leq \|\xi(\gamx{  }{t_\ell})\|^2 + 2a [\xi(\gamx{  }{t_\ell})]^\top[\nabla \xi(\gamx{  }{t_\ell})] \xi(\gamx{  }{t_\ell}) \\
&\hspace{0.8cm} + 2a^2(\kappa^*)^2  \| \xi(\gamx{  }{t_\ell})\|^2 + \frac{1}{2} (a+a^2\kappa^*)^2\lambda_{\max}^* \; \| \xi(\gamx{  }{t_\ell})\|^2 .
\end{align*}
This leads to
\begin{align*}
    &\frac{\|\xi   (\gamx{   }{t_{\ell + 1}})\|^2}{\|\xi  (\gamx{   }{t_\ell})\|^2}\\ &\leq 1 + 2a \frac{[\xi  (\gamx{   }{t_\ell})]^\top[\nabla \xi  (\gamx{   }{t_\ell})] \xi  (\gamx{   }{t_\ell})}{\|\xi  (\gamx{   }{t_\ell})\|^2} + 2a^2(\kappa^*)^2 + \frac{1}{2} (a+a^2\kappa^*)^2\lambda_{\max}^* \\
    &= 1 + 2a\theta(\gamx{   }{t_\ell})+ 2a^2(\kappa^*)^2 + \frac{1}{2} (a+a^2\kappa^*)^2\lambda_{\max}^*.
\end{align*}
Therefore
\begin{align}\label{xistarratio}
    \frac{\|\xi   (\gamx{   }{t_{\ell + 1}})\|}{\|\xi   (\gamx{   }{t_\ell})\|} \leq  1 + a\theta(\gamx{   }{t_\ell})+ a^2(\kappa^*)^2 + \frac{1}{4} (a+a^2\kappa^*)^2\lambda_{\max}^*.
\end{align}
From (\ref{xiintegral}) we have
\begin{align*}
\int_{t_{\ell+1}}^{t_{\ell+2}} \|\xi   (\gamxt[   ])\|dt = a\|\xi   (\gamx{   }{t_{\ell + 1}})\| + \frac{1}{2}a^2 \|\xi   (\gamx{   }{\tilde t_{\ell+1}})\| \theta(\gamx{   }{\tilde t_{\ell+1}}).
\end{align*}
As we have shown in (\ref{thetanegative}), $\theta(\gamx{   }{\tilde t_{\ell+1}})<0$, for all $\ell\geq0$. Hence by (\ref{xistarratio}),
\begin{align}\label{numeratorbound}
\int_{t_{\ell+1}}^{t_{\ell+2}} \|\xi   (\gamxt[   ])\|dt \leq &\; a\; \|\xi   (\gamx{   }{t_{\ell+1}})\| \nonumber\\
\leq &\; a\;\|\xi   (\gamx{   }{ t_\ell})\| \left[1 + a\theta(\gamx{   }{t_{\ell}})+ a^2(\kappa^*)^2 + \frac{1}{4} (a+a^2\kappa^*)^2\lambda_{\max}^* \right].
\end{align}

Now we turn back to $\int_{t_{\ell}}^{t_{\ell+1}} \|\xi   (\gamxt[   ])\|dt$. Using (\ref{xiintegral}) -- \eqref{xinormdecrease}, we have
\begin{align}\label{tellfirstbound}
\int_{t_{\ell}}^{t_{\ell+1}} \|\xi   (\gamxt[   ])\|dt \geq a\|\xi   (\gamx{   }{t_\ell})\| + \frac{1}{2}a^2 \|\xi   (\gamx{   }{t_\ell})\| \theta(\gamx{   }{\tilde t_\ell}).
\end{align}
A Taylor expansion for $\theta(\gamx{   }{\tilde t_\ell})$ gives
\begin{align}\label{thetaexpansion}
    \theta(\gamx{   }{\tilde t_\ell} = \theta(\gamx{   }{t_\ell} + (\delta_\ell a) \frac{\partial}{\partial t} \theta(\gamxt[   ]) \big|_{t=\tilde t_\ell^*},
\end{align}
where $\tilde t_\ell^* =(1-\delta_\ell^*) t_\ell + \delta_\ell^* t_{\ell+1}=t_\ell + \delta_\ell^* \times a$ for some $\delta_\ell^*\in(0,\delta_\ell).$ Here it can be shown that
\begin{align*}
    &\frac{\partial}{\partial t} \theta(\gamxt[   ])
     = \pi(\gamxt[   ]),
\end{align*}
where 
\begin{align*}
\pi(y) &= \xi_\diamond   (y)^\top \{[\nabla \xii{   }{y}]^\top \nabla \xii{   }{y} + \nabla \xii{   }{y}\nabla \xii{   }{y}\}\xi_\diamond   (y) + \xi_\diamond   (y)^\top[\xii{   }{y}^\top \otimes \mathbf{I}_d]\nabla(\nabla\xii{   }{y}) \xi_{\diamond}   (y)  \\
&\hspace{1cm} - 2[\xi_\diamond   (y)^\top \nabla\xii{   }{y} \xi_{\diamond}   (y)]^2.
\end{align*}
When $\epsilon>0$ is small enough, we have that 
\begin{align*}
 \kappa^\ddagger := \sup_{y\in \Seps{}\backslash\text{Ridge}(f)}\left|\pi(y) \right| <\infty.
\end{align*}
Then from (\ref{thetaexpansion}) we have
\begin{align*}
    \theta(\gamx{   }{\tilde t_\ell}) \geq  \theta(\gamx{   }{t_\ell}) - a \kappa^\ddagger.
\end{align*}
Plugging this result into (\ref{tellfirstbound}) we get
\begin{align}\label{demoniatorbound}
\int_{t_{\ell}}^{t_{\ell+1}} \|\xi   (\gamxt[   ])\|dt \geq a\|\xi   (\gamx{   }{t_\ell})\| \Big\{ 1+ \frac{1}{2}a  [\theta(\gamx{   }{t_\ell}) - a \kappa^\ddagger]\Big\}.
\end{align}
Here we require $a< \min\{1, ( \kappa^\ddagger -\beta_1)^{-1}\}$ so that the right-hand side of (\ref{demoniatorbound}) is positive, by \eqref{thetalower}. Now combining (\ref{numeratorbound}) and (\ref{demoniatorbound}) we have
\begin{align}\label{xistaratio}
\frac{\int_{t_{\ell+1}}^{t_{\ell+2}} \|\xi   (\gamxt[   ])\|dt}{\int_{t_{\ell}}^{t_{\ell+1}} \|\xi   (\gamxt[   ])\|dt} & \leq   \frac{1 + a\theta(\gamx{   }{t_\ell})+ a^2(\kappa^*)^2 + \frac{1}{4} (a+a^2\kappa^*)^2\lambda_{\max}^* }{1+ \frac{1}{2}a  [\theta(\gamx{   }{t_\ell}) - a \kappa^\ddagger]}.
\end{align}
Using (\ref{thetanegative}), we can show that if we further require $a\leq \frac{-\beta_0}{4(\kappa^*)^2 + (1+\kappa^*)^2\lambda_{\max}^* +2 \kappa^\ddagger}$, then 
\begin{align}\label{ratio-bound}
\frac{\int_{t_{\ell+1}}^{t_{\ell+2}} \|\xi   (\gamxt[   ])\|dt}{\int_{t_{\ell}}^{t_{\ell+1}} \|\xi   (\gamxt[   ])\|dt} & \leq   1+\frac{1}{4}a\beta_0.
\end{align}
Then
\begin{align}\label{totalengthbound2}
    \int_0^\infty \|\xi   (\gamxt[   ])\|dt & \leq \int_0^{t_1} \|\xi   (\gamxt[   ])\|dt \sum_{i=0}^\infty  \Big(1+ \frac{1}{4}a\beta_0\Big)^i \nonumber\\
    & \leq \frac{\int_0^{t_1} \|\xi   (\gamxt[   ])\|dt}{1-(1+ \frac{1}{4}a\beta_0)} 
    \le   \frac{4\kappa^\dagger(\epsilon)}{-\beta_0} \le C^\prime\sqrt{\epsilon},
\end{align}
where $\kappa^\dagger(\epsilon)$ is given in (\ref{kappadagger}) and $C^\prime=2C$ with $C$ given in Fact~\ref{fact2}, by using \eqref{kappadaggerbound}. 
\end{proof}

The next fact concerns the comparison of two sequences: $\{\gamx{   }{t_\ell},\;\ell\geq0\}$ and $\{\gamx[a]{   }{\ell},\;\ell\geq 0\}$, where as above, $t_\ell = a \ell$.

\begin{customfact}{5}
\label{fact5}
When $\epsilon>0$ is small enough, there exists $a_0>0$ such that when $a \leq a_0$, we have 
\begin{align*}
\sup_{x\in\partial \Seps{}}\|\lim_{\ell\rightarrow\infty} \gamx{   }{t_\ell} - \lim_{\ell\rightarrow\infty} \gamx[a]{   }{\ell}\| \leq C a^{1-\sigma_0}.
\end{align*}
for some constant $C>0$, where $0<\sigma_0<1$ is given in (\ref{sigma0exp}).
\end{customfact}

\begin{proof} We will use some similar arguments as in the proof of Theorem 1 in~\cite{arias2016estimation}.  Let $e_\ell = \gamx{   }{t_\ell} - \gamx[a]{   }{\ell}$. Then
\begin{align}
\label{eell}
    e_{\ell+1} &=  \gamx{   }{t_{\ell+1}} - \gamx[a]{   }{\ell+1} \nonumber\\
    &= e_\ell + [\gamx{   }{t_{\ell+1}} - \gamx{   }{t_{\ell}} ] - [\gamx[a]{   }{\ell+1} - \gamx[a]{   }{\ell}] \nonumber\\
    &= e_\ell + [\gamx{   }{t_{\ell+1}} - \gamx{   }{t_{\ell}} - a\xi   (\gamx{   }{t_{\ell}}) ] + a[\xi   (\gamx{   }{t_{\ell}}) - \xi   (\gamx[a]{   }{\ell})].
\end{align}
As in the proof of Fact~\ref{fact1}, we choose $\epsilon$ small enough that $\Seps{}\subset\text{Ridge}(f)^{(\frac{1}{2}\delta^\prime)}$. By using Facts~\ref{fact2} and \ref{fact4}, we can choose $\epsilon$ small enough that $\alpha\gamx{   }{t_{\ell}} + (1-\alpha)\gamx[a]{   }{\ell}\in \text{Ridge}(f)^{\delta^\prime}$. for all $\alpha\in[0,1]$ and $\ell\ge0$. We then have
\begin{align*}
    \|\xi   (\gamx{   }{t_{\ell}}) - \xi   (\gamx[a]{   }{\ell})\| \leq \kappa^* \|\gamx{   }{t_{\ell}}) - \gamx[a]{   }{\ell}\| = \kappa^* \|e_\ell\|,
\end{align*}
where $\kappa^*$ is defined in (\ref{kappastar}). Moreover, we also have
\begin{align*}
    \gamx{   }{t_{\ell+1}} - \gamx{   }{t_{\ell}} - a\xi   (\gamx{   }{t_{\ell}}) = \int_{t_\ell}^{t_{\ell+1}} [\xi   (\gamxt[   ])-\xi   (\gamx{   }{t_\ell})]dt.
\end{align*}
Hence,
\begin{align*}
    \|\gamx{   }{t_{\ell+1}} - \gamx{   }{t_{\ell}} - a\xi(\gamma(x,t_{\ell}))\| 
    &\leq  \int_{t_\ell}^{t_{\ell+1}} \|\xi   (\gamxt[   ])-\xi   (\gamx{   }{t_\ell})\| dt \\
    &\leq  \kappa^* \int_{t_\ell}^{t_{\ell+1}} \|\gamxt[   ]-\gamx{   }{t_\ell}\| dt \\
    &\leq  \kappa^*\kappa^\dagger \int_{t_\ell}^{t_{\ell+1}} |t-t_\ell|dt \\
    &= \frac{1}{2}\kappa^*\kappa^\dagger (t_{\ell+1} - t_\ell)^2 = \frac{1}{2}a^2\kappa^*\kappa^\dagger,
\end{align*}
where $\kappa^\dagger=\kappa^\dagger(\epsilon)$ is given in (\ref{kappadagger}). It then follows from \eqref{eell} that
\begin{align*}
    \|e_{\ell+1}\| \leq (1+a\kappa^*)\|e_\ell\| + \frac{1}{2}a^2\kappa^*\kappa^\dagger.
\end{align*}
Using Gronwall's inequality~\cite[see][Lemma 4]{arias2016estimation}, we have
\begin{align*}
    \|\gamx{   }{t_\ell} - \gamx[a]{   }{\ell}\| =\|e_{\ell}\|\leq \frac{1}{2} [e^{a\ell\kappa^*}-1]\kappa^\dagger a.
\end{align*}
From (\ref{totalengthbound}) in the proof of Fact 3, we know
\begin{align*}
   \|\gamx[a]{   }{\ell} - \lim_{\ell^\prime \rightarrow\infty} \gamx[a]{   }{\ell^\prime}\| \leq  \kappa^\dagger \frac{[1- ac_1]^\ell}{c_1} \leq \frac{\kappa^\dagger }{c_1} e^{-\ell a c_1}.
\end{align*}
Using (\ref{ratio-bound}) in the proof of Fact 4 and noticing that $c_1=-\beta_0/2$, the same arguments as in the derivation of (\ref{totalengthbound}) give that
\begin{align*}
   \|\gamx{   }{t_{\ell}} - \lim_{\ell^\prime\rightarrow\infty} \gamx{   }{t_{\ell^\prime}}\| \leq  2\kappa^\dagger \frac{[1- \frac{1}{2}ac_1]^\ell}{c_1} \leq \frac{2\kappa^\dagger }{c_1} e^{-\frac{1}{2}\ell a c_1}.
\end{align*}
Hence combining the above three inequalities, we get
\begin{align*}
 \| \lim_{\ell\rightarrow\infty} \gamx{   }{t_\ell} - \lim_{\ell^\prime \rightarrow\infty} \gamx[a]{   }{\ell^\prime} \| \leq  \min_{\ell\geq 0} \psi(\ell), \qquad\text{where}\quad \psi(\ell) =  \frac{1}{2} \kappa^\dagger a e^{a\ell\kappa^*} + \frac{3\kappa^\dagger }{c_1} e^{-\frac{1}{2}\ell a c_1} .
\end{align*}
Denote the upper bound we have imposed on $a$ by $a_0$. By choosing 
\begin{align}\label{sigma0exp}
\ell= \ell_a(\sigma_0):=\Big\lceil \frac{\sigma_0}{a\kappa^*} \log{\frac{1}{a}} \Big\rceil, \text{ where }    \sigma_0 = \frac{1}{1+c_1/(2\kappa^*)},
\end{align}
we have
\begin{align}
 \| \lim_{\ell\rightarrow\infty} \gamx{   }{t_\ell} - \lim_{\ell\rightarrow\infty} \gamx[a]{   }{\ell} \| \leq \psi(\ell_a(\sigma_0)) \leq \kappa^\dagger  \Big(\frac{1}{2} e^{a_0\kappa^*}  + \frac{3}{c_1} \Big) a^{1-\sigma_0}.
\end{align}
\end{proof}

We have the following continuous version of Fact~\ref{fact3}.
\begin{customfact}{6}
\label{fact6}
The following holds when $\epsilon$ is small enough: Let $x,x_0\in\partial \Seps{}$ be two starting points. For any $\omega>0$, there exists $\delta_\omega>0$ such that when $\|x-x_0\|\leq \delta_\omega$, we have 
\begin{align*}
\big\|\lim_{t\rightarrow\infty} \gamxt[   ] - \lim_{t\rightarrow\infty} \gamma^{   }(x_0,t)\big\| \leq \eta.
\end{align*}
\end{customfact}

\begin{proof}
Note that for $\tilde x\in\{x,x_0\}\subset \partial \Seps{}$,
\begin{align*}
    \gamma   (\tilde x,t) = \tilde x + \int_0^t \xi   (\gamma   (\tilde x,s))ds.
\end{align*}
When $\epsilon$ is small enough, we obtain by a Taylor expansion that for any $t\geq 0$,
\begin{align}
    \|\gamxt[   ] - \gamma   (x_0,t)\|
    & \leq \|x-x_0\| + \int_0^{t} \|\xi   (\gamx{   }{s}) - \xi   (\gamma   (x_0,s))\|ds \nonumber\\
  &\leq  \|x-x_0\| + \kappa^* \int_0^{t} \|\gamx{   }{s} - \gamma   (x_0,s)\| ds,
\end{align}
where $\kappa^*$ is given in \eqref{kappastar}. Using Gronwall's inequality, we have 
\begin{align*}
\|\gamxt[   ] - \gamma   (x_0,t)\| \leq \|x-x_0\| e^{\kappa^* t}.
\end{align*}
This is similar to (\ref{Gronwallineq}). The rest of the proof is similar to the part below (\ref{Gronwallineq}) in the proof of Fact~\ref{fact3}, where we replace the application of Fact~\ref{fact2} by that of Fact~\ref{fact4}. Details are omitted.
\end{proof}

With all the above facts, now we are ready to complete the proof of Theorem~\ref{discretecomp}.

\begin{proof}
Part (i) follows from Fact~\ref{fact1}. Part (ii) is a consequence of Theorem~\ref{pathconvergence} and Fact~\ref{fact5}.

To show (iii), we only need to show that $\text{Ridge}(f) \subset R_a(f)$ when $k = 1$, because of (i). In other words, for any $\bar x\in\text{Ridge}(f)$, we want to show that there exists $x\in\partial \Seps{}$ such that $\bar x = \lim_{\ell\rightarrow\infty}\gamx[a]{   }{\ell}$.

Note that $\text{Ridge}(f)$ is a union of finitely many 1-dimensional closed curves when $k=1$ under our assumptions (see the discussion of the assumptions in Sec~\ref{assumptions}). We focus on one of the closed curves (also call it $\text{Ridge}(f)$), and parametrize it by $\zeta: [0,1]\rightarrow \text{Ridge}(f) $ with $\zeta(0)=\zeta(1)$. Without loss of generality, we set $\bar x = \zeta(0)$, and assume that the total length of the ridge is 1 and the parametrization is by the arclength. Under our assumptions, $\text{Ridge}(f)$ is $C^2$-smooth, and has positive reach~\citep{scholtes2013hypersurfaces}.
Due to Fact~\ref{fact5}, for a fixed (small) $\epsilon_0$, when $a$ and $\epsilon$ are small enough, there exist $x_1,x_2\in \partial \Seps{}$ such that 
\begin{align*}
& \lim_{\ell\rightarrow\infty} \gamma   _a(x_1,\ell) \in \{\zeta(\alpha):\; \alpha\in(0,\epsilon_0]\},\\
& \lim_{\ell\rightarrow\infty} \gamma   _a(x_2, \ell) \in \{\zeta(\alpha):\; \alpha\in[1-\epsilon_0,1)\}.
\end{align*}
In other words, the limit points of $\gamma   _a(x_1,\ell)$ and $\gamma   _a(x_2,\ell)$ are not far away from $\bar x$ and they are located on two sides of $\bar x$.  Let $\alpha_1\in(0,\epsilon_0]$ and $\alpha_2\in[1-\epsilon_0,1)$ be such that $\lim_{\ell\rightarrow\infty} \gamma   _a(x_1,\ell) = \zeta(\alpha_1)$ and $\lim_{\ell\rightarrow\infty} \gamma   _a(x_2,\ell) = \zeta(\alpha_2)$. Let $\overline{x_1x_2}$ be the shortest curve (a connected set) in $\partial \Seps{}$ connecting $x_1$ and $x_2$. When $a$ is small enough, we have 
\begin{align}\label{gammaxtrange}
\big\{\lim_{t\rightarrow\infty} \gamxt[   ]: \; x\in\overline{x_1x_2}\big\} \subset \big\{\zeta(\alpha):\; \alpha\in[0,2\epsilon_0]\cup[1-2\epsilon_0,1]\big\}.
\end{align}
Since the image of the continuous map from a connected set is a connected set, the set $$\gamma   _a(\overline{x_1x_2},\infty): = \big\{\lim_{\ell\rightarrow\infty} \gamx[a]{   }{\ell}:\; x\in \overline{x_1x_2}\big\}$$ 
is also connected. Since $\gamma   _a(\overline{x_1x_2},\infty)\subset \text{Ridge}(f),$ we must have $\bar x\in\gamma   _a(\overline{x_1x_2},\infty)$, because otherwise we have 
\begin{align*}
\{\zeta(\alpha):\; \alpha\in[\alpha_1,\alpha_2]\} \subset \gamma   _a(\overline{x_1x_2},\infty),
\end{align*}
which, however, contradicts (\ref{gammaxtrange}) and Fact~\ref{fact5}. Thus there exists $\bar x_0\in\overline{x_1x_2}$ such that $\lim_{\ell\rightarrow\infty}\gamma_a(\bar x_0,\ell)=\bar x,$ and we complete the proof.
\end{proof}

\section*{Acknowledgments}

We would like to thank the anonymous referees and the Action Editor for insightful comments that help significantly improve the quality of this paper. This work was partially supported by the US National Science Foundation (DMS 1821154).

\bibliography{library}

\begin{thebibliography}{50}
\providecommand{\natexlab}[1]{#1}
\providecommand{\url}[1]{\texttt{#1}}
\expandafter\ifx\csname urlstyle\endcsname\relax
  \providecommand{\doi}[1]{doi: #1}\else
  \providecommand{\doi}{doi: \begingroup \urlstyle{rm}\Url}\fi

\bibitem[Arias-Castro and Qiao(2022)]{arias2022clustering}
Ery Arias-Castro and Wanli Qiao.
\newblock Clustering by hill-climbing: Consistency results.
\newblock \emph{arXiv preprint arXiv:2202.09023}, 2022.

\bibitem[Arias-Castro et~al.(2006)Arias-Castro, Donoho, and
  Huo]{arias2006adaptive}
Ery Arias-Castro, David~L Donoho, and Xiaoming Huo.
\newblock Adaptive multiscale detection of filamentary structures in a
  background of uniform random points.
\newblock \emph{The Annals of Statistics}, pages 326--349, 2006.

\bibitem[Arias-Castro et~al.(2016)Arias-Castro, Mason, and
  Pelletier]{arias2016estimation}
Ery Arias-Castro, David Mason, and Bruno Pelletier.
\newblock On the estimation of the gradient lines of a density and the
  consistency of the mean-shift algorithm.
\newblock \emph{The Journal of Machine Learning Research}, 17\penalty0
  (1):\penalty0 1487--1514, 2016.

\bibitem[Banfield and Raftery(1992)]{banfield1992ice}
Jeffrey~D Banfield and Adrian~E Raftery.
\newblock Ice floe identification in satellite images using mathematical
  morphology and clustering about principal curves.
\newblock \emph{Journal of the American statistical Association}, 87\penalty0
  (417):\penalty0 7--16, 1992.

\bibitem[Bengio(2012)]{bengio2012practical}
Yoshua Bengio.
\newblock Practical recommendations for gradient-based training of deep
  architectures.
\newblock In \emph{Neural Networks: Tricks of the Trade: Second Edition}, pages
  437--478. Springer, 2012.

\bibitem[Biau and Fischer(2011)]{biau2011parameter}
G{\'e}rard Biau and Aur{\'e}lie Fischer.
\newblock Parameter selection for principal curves.
\newblock \emph{IEEE Transactions on Information Theory}, 58\penalty0
  (3):\penalty0 1924--1939, 2011.

\bibitem[Chac{\'o}n and Duong(2013)]{chacon2013data}
Jos{\'e}~E Chac{\'o}n and Tarn Duong.
\newblock Data-driven density derivative estimation, with applications to
  nonparametric clustering and bump hunting.
\newblock \emph{Electronic Journal of Statistics}, pages 499--532, 2013.

\bibitem[Chen et~al.(2015)Chen, Genovese, and Wasserman]{chen2015asymptotic}
YC~Chen, CR~Genovese, and L~Wasserman.
\newblock Asymptotic theory for density ridges.
\newblock \emph{Annals of Statistics}, 43\penalty0 (5):\penalty0 1896--1928,
  2015.

\bibitem[Cheng(1995)]{cheng1995mean}
Yizong Cheng.
\newblock Mean shift, mode seeking, and clustering.
\newblock \emph{IEEE transactions on pattern analysis and machine
  intelligence}, 17\penalty0 (8):\penalty0 790--799, 1995.

\bibitem[Comaniciu and Meer(1999)]{comaniciu1999mean}
Dorin Comaniciu and Peter Meer.
\newblock Mean shift analysis and applications.
\newblock In \emph{Proceedings of the seventh IEEE international conference on
  computer vision}, volume~2, pages 1197--1203. IEEE, 1999.

\bibitem[Comaniciu and Meer(2002)]{comaniciu2002mean}
Dorin Comaniciu and Peter Meer.
\newblock Mean shift: A robust approach toward feature space analysis.
\newblock \emph{IEEE Transactions on pattern analysis and machine
  intelligence}, 24\penalty0 (5):\penalty0 603--619, 2002.

\bibitem[Delattre and Fischer(2020)]{delattre2020principal}
Sylvain Delattre and Aur{\'e}lie Fischer.
\newblock On principal curves with a length constraint.
\newblock In \emph{Annales de l'Institut Henri Poincar{\'e} (B)
  Probabilit{\'e}s et Statistiques}, volume~56, 2020.

\bibitem[Eberly(1996)]{eberly1996ridges}
David Eberly.
\newblock \emph{Ridges in Image and Data Analysis}, volume~7.
\newblock Springer Science \& Business Media, 1996.

\bibitem[Einbeck et~al.(2005)Einbeck, Tutz, and Evers]{einbeck2005local}
Jochen Einbeck, Gerhard Tutz, and Ludger Evers.
\newblock Local principal curves.
\newblock \emph{Statistics and Computing}, 15:\penalty0 301--313, 2005.

\bibitem[Einbeck et~al.(2008)Einbeck, Evers, and
  Bailer-Jones]{einbeck2008representing}
Jochen Einbeck, Ludger Evers, and Coryn Bailer-Jones.
\newblock Representing complex data using localized principal components with
  application to astronomical data.
\newblock In \emph{Principal manifolds for data visualization and dimension
  reduction}, pages 178--201. Springer, 2008.

\bibitem[Fefferman et~al.(2020)Fefferman, Ivanov, Kurylev, Lassas, and
  Narayanan]{fefferman2020reconstruction}
Charles Fefferman, Sergei Ivanov, Yaroslav Kurylev, Matti Lassas, and Hariharan
  Narayanan.
\newblock Reconstruction and interpolation of manifolds. i: The geometric
  whitney problem.
\newblock \emph{Foundations of Computational Mathematics}, 20\penalty0
  (5):\penalty0 1035--1133, 2020.

\bibitem[Fukunaga and Hostetler(1975)]{fukunaga1975estimation}
Keinosuke Fukunaga and Larry Hostetler.
\newblock The estimation of the gradient of a density function, with
  applications in pattern recognition.
\newblock \emph{IEEE Transactions on information theory}, 21\penalty0
  (1):\penalty0 32--40, 1975.

\bibitem[Genovese et~al.(2009)Genovese, Perone-Pacifico, Verdinelli, and
  Wasserman]{genovese2009path}
Christopher~R Genovese, Marco Perone-Pacifico, Isabella Verdinelli, and Larry
  Wasserman.
\newblock On the path density of a gradient field.
\newblock \emph{The Annals of Statistics}, 37\penalty0 (6A):\penalty0
  3236--3271, 2009.

\bibitem[Genovese et~al.(2012{\natexlab{a}})Genovese, Perone~Pacifico,
  Isabella, Wasserman, et~al.]{genovese2012minimax}
Christopher~R Genovese, Marco Perone~Pacifico, Verdinelli Isabella, Larry
  Wasserman, et~al.
\newblock Minimax manifold estimation.
\newblock \emph{Journal of machine learning research}, 13:\penalty0 1263--1291,
  2012{\natexlab{a}}.

\bibitem[Genovese et~al.(2012{\natexlab{b}})Genovese, Perone-Pacifico,
  Verdinelli, and Wasserman]{genovese2012geometry}
Christopher~R Genovese, Marco Perone-Pacifico, Isabella Verdinelli, and Larry
  Wasserman.
\newblock The geometry of nonparametric filament estimation.
\newblock \emph{Journal of the American Statistical Association}, 107\penalty0
  (498):\penalty0 788--799, 2012{\natexlab{b}}.

\bibitem[Genovese et~al.(2014)Genovese, Perone~Pacifico, Verdinelli, Wasserman,
  et~al.]{genovese2014nonparametric}
Christopher~R Genovese, Marco Perone~Pacifico, Isabella Verdinelli, Larry
  Wasserman, et~al.
\newblock Nonparametric ridge estimation.
\newblock \emph{The Annals of Statistics}, 42\penalty0 (4):\penalty0
  1511--1545, 2014.

\bibitem[Hastie and Stuetzle(1989)]{hastie1989principal}
Trevor Hastie and Werner Stuetzle.
\newblock Principal curves.
\newblock \emph{Journal of the American Statistical Association}, 84\penalty0
  (406):\penalty0 502--516, 1989.

\bibitem[Holte(2009)]{holte2009discrete}
John~M Holte.
\newblock Discrete gronwall lemma and applications.
\newblock In \emph{MAA-NCS meeting at the University of North Dakota},
  volume~24, pages 1--7, 2009.

\bibitem[Kato(2013)]{kato2013perturbation}
Tosio Kato.
\newblock \emph{Perturbation theory for linear operators}, volume 132.
\newblock Springer Science \& Business Media, 2013.

\bibitem[K{\'e}gl et~al.(2000)K{\'e}gl, Krzyzak, Linder, and
  Zeger]{kegl2000learning}
Bal{\'a}zs K{\'e}gl, Adam Krzyzak, Tam{\'a}s Linder, and Kenneth Zeger.
\newblock Learning and design of principal curves.
\newblock \emph{IEEE transactions on pattern analysis and machine
  intelligence}, 22\penalty0 (3):\penalty0 281--297, 2000.

\bibitem[Lee(2013)]{lee2013smooth}
John~M Lee.
\newblock \emph{Introduction To Smooth Manifolds}.
\newblock Springer New York, 2013.

\bibitem[Nicolaescu(2011)]{nicolaescu2011invitation}
L~Nicolaescu.
\newblock \emph{An invitation to Morse theory}.
\newblock Springer, 2011.

\bibitem[Niyogi et~al.(2008)Niyogi, Smale, and Weinberger]{niyogi2008finding}
Partha Niyogi, Stephen Smale, and Shmuel Weinberger.
\newblock Finding the homology of submanifolds with high confidence from random
  samples.
\newblock \emph{Discrete \& Computational Geometry}, 39:\penalty0 419--441,
  2008.

\bibitem[Nolan and Pollard(1987)]{nolan1987u}
Deborah Nolan and David Pollard.
\newblock U-processes: rates of convergence.
\newblock \emph{The Annals of Statistics}, pages 780--799, 1987.

\bibitem[Novikov et~al.(2006)Novikov, Colombi, and
  Dor{\'e}]{novikov2006skeleton}
Dmitri Novikov, St{\'e}phane Colombi, and Olivier Dor{\'e}.
\newblock Skeleton as a probe of the cosmic web: the two-dimensional case.
\newblock \emph{Monthly Notices of the Royal Astronomical Society},
  366\penalty0 (4):\penalty0 1201--1216, 2006.

\bibitem[Ozertem and Erdogmus(2011)]{ozertem2011locally}
Umut Ozertem and Deniz Erdogmus.
\newblock Locally defined principal curves and surfaces.
\newblock \emph{The Journal of Machine Learning Research}, 12:\penalty0
  1249--1286, 2011.

\bibitem[Pulkkinen(2015)]{pulkkinen2015ridge}
Seppo Pulkkinen.
\newblock Ridge-based method for finding curvilinear structures from noisy
  data.
\newblock \emph{Computational Statistics \& Data Analysis}, 82:\penalty0
  89--109, 2015.

\bibitem[Qiao(2021)]{qiao2021asymptotic}
Wanli Qiao.
\newblock Asymptotic confidence regions for density ridges.
\newblock \emph{Bernoulli}, 27\penalty0 (2):\penalty0 946--975, 2021.

\bibitem[Qiao(2023)]{qiao2023confidence}
Wanli Qiao.
\newblock Confidence regions for filamentary structures.
\newblock \emph{arXiv preprint arXiv:2311.17831}, 2023.

\bibitem[Qiao and Polonik(2016)]{qiao2016theoretical}
Wanli Qiao and Wolfgang Polonik.
\newblock Theoretical analysis of nonparametric filament estimation.
\newblock \emph{The Annals of Statistics}, pages 1269--1297, 2016.

\bibitem[Qiao and Polonik(2019)]{qiao2019nonparametric}
Wanli Qiao and Wolfgang Polonik.
\newblock Nonparametric confidence regions for level sets: Statistical
  properties and geometry.
\newblock \emph{Electron. J. Statist.}, 13\penalty0 (1):\penalty0 985--1030,
  2019.

\bibitem[Rodr{\'\i}guez-Casal(2007)]{rodriguez2007set}
Alberto Rodr{\'\i}guez-Casal.
\newblock Set estimation under convexity type assumptions.
\newblock In \emph{Annales de l'IHP Probabilit{\'e}s et statistiques},
  volume~43, pages 763--774, 2007.

\bibitem[Scholtes(2013)]{scholtes2013hypersurfaces}
Sebastian Scholtes.
\newblock On hypersurfaces of positive reach, alternating steiner formulae and
  hadwiger's problem.
\newblock \emph{arXiv preprint arXiv:1304.4179}, 2013.

\bibitem[Serre(2002)]{serre2002matrices}
D~Serre.
\newblock \emph{Matrices: Theory and Applications}.
\newblock Springer-Verlag, New York, 2002.

\bibitem[Sriperumbudur and Steinwart(2012)]{sriperumbudur2012consistency}
Bharath Sriperumbudur and Ingo Steinwart.
\newblock Consistency and rates for clustering with dbscan.
\newblock In \emph{Artificial Intelligence and Statistics}, pages 1090--1098.
  PMLR, 2012.

\bibitem[Stanford and Raftery(2000)]{stanford2000finding}
Derek~C Stanford and Adrian~E Raftery.
\newblock Finding curvilinear features in spatial point patterns: principal
  curve clustering with noise.
\newblock \emph{IEEE Transactions on Pattern Analysis and Machine
  Intelligence}, 22\penalty0 (6):\penalty0 601--609, 2000.

\bibitem[Stoica et~al.(2007)Stoica, Mart{\'\i}nez, and Saar]{stoica2007three}
Radu~S Stoica, Vicent~J Mart{\'\i}nez, and Enn Saar.
\newblock A three-dimensional object point process for detection of cosmic
  filaments.
\newblock \emph{Journal of the Royal Statistical Society Series C: Applied
  Statistics}, 56\penalty0 (4):\penalty0 459--477, 2007.

\bibitem[Tibshirani(1992)]{tibshirani1992principal}
Robert Tibshirani.
\newblock Principal curves revisited.
\newblock \emph{Statistics and computing}, 2:\penalty0 183--190, 1992.

\bibitem[Tj{\o}stheim et~al.(2023)Tj{\o}stheim, Jullum, and
  L{\o}land]{tjostheim2023statistical}
Dag Tj{\o}stheim, Martin Jullum, and Anders L{\o}land.
\newblock Statistical embedding: Beyond principal components.
\newblock \emph{Statistical Science}, 1\penalty0 (1):\penalty0 1--29, 2023.

\bibitem[Vaart and Wellner(1996)]{vaart1996weak}
Aad~W Vaart and Jon~A Wellner.
\newblock \emph{Weak Convergence and Empirical Processes With Applications to
  Statistics}.
\newblock Springer, 1996.

\bibitem[Verbeek et~al.(2002)Verbeek, Vlassis, and Kr{\"o}se]{verbeek2002k}
Jakob~J Verbeek, Nikos Vlassis, and B~Kr{\"o}se.
\newblock A k-segments algorithm for finding principal curves.
\newblock \emph{Pattern Recognition Letters}, 23\penalty0 (8):\penalty0
  1009--1017, 2002.

\bibitem[Wiggins(2003)]{wigginsintroduction}
Stephen Wiggins.
\newblock Introduction to applied nonlinear dynamical systems and chaos.
\newblock 2003.

\bibitem[Yao et~al.(2023)Yao, Su, and Li]{yao2023manifold}
Zhigang Yao, Jiaji Su, and Bingjie Li.
\newblock Manifold fitting: An invitation to statistics.
\newblock \emph{arXiv preprint arXiv:2304.07680}, 2023.

\bibitem[Yu et~al.(2015)Yu, Wang, and Samworth]{yu2015useful}
Yi~Yu, Tengyao Wang, and Richard~J Samworth.
\newblock A useful variant of the davis--kahan theorem for statisticians.
\newblock \emph{Biometrika}, 102\penalty0 (2):\penalty0 315--323, 2015.

\bibitem[Zhang and Chen(2023)]{zhang2023linear}
Yikun Zhang and Yen-Chi Chen.
\newblock Linear convergence of the subspace constrained mean shift algorithm:
  from euclidean to directional data.
\newblock \emph{Information and Inference: A Journal of the IMA}, 12\penalty0
  (1):\penalty0 210--311, 2023.

\end{thebibliography}

\end{document}